\documentclass[small authors, authorwidth=0.36\linewidth]{lib/paper}

\PassOptionsToPackage{numbers,sort&compress}{natbib}
\usepackage{amsthm}

\newcommand{\dmy}[1]{{\color{gray}#1}}
\usepackage{amsmath,amsfonts,bm}

\newcommand{\figleft}{{\em (Left)}}
\newcommand{\figcenter}{{\em (Center)}}
\newcommand{\figright}{{\em (Right)}}
\newcommand{\figtop}{{\em (Top)}}
\newcommand{\figbottom}{{\em (Bottom)}}

\def\eqref#1{equation~\ref{#1}}

\def\1{\bm{1}}

\DeclareMathAlphabet{\mathsfit}{\encodingdefault}{\sfdefault}{m}{sl}
\SetMathAlphabet{\mathsfit}{bold}{\encodingdefault}{\sfdefault}{bx}{n}

\def\gE{{\mathcal{E}}}

\newcommand{\pie}{\gls{pi_mathrmeff}}
\newcommand{\pip}{\gls{pi_mathrmpot}}
\newcommand{\gEe}{\gls{ge_mathrmeff}}
\newcommand{\gEp}{\gls{ge_mathrmpot}}

\newcommand{\E}{\mathbb{E}}

\newcommand{\KL}{D_{\mathrm{KL}}}

\DeclareMathOperator*{\argmax}{arg\,max}
\DeclareMathOperator*{\argmin}{arg\,min}

\DeclareMathOperator{\Tr}{Tr}

\providecommand{\cA}{\mathcal A}

\providecommand{\cE}{\mathcal E}

\providecommand{\cN}{\mathcal N}

\providecommand{\cP}{\mathcal P}

\providecommand{\cR}{\mathcal R}
\providecommand{\cS}{\mathcal S}

\providecommand{\cV}{\mathcal V}

\providecommand{\cX}{\mathcal X}
\providecommand{\cY}{\mathcal Y}
\providecommand{\cZ}{\mathcal Z}

\usepackage{aliascnt}
\theoremstyle{plain}
\newtheorem{theorem}{Theorem}[section]
\newaliascnt{proposition}{theorem}
\newtheorem{proposition}[proposition]{Proposition}
\aliascntresetthe{proposition}
\newaliascnt{lemma}{theorem}
\newtheorem{lemma}[lemma]{Lemma}
\aliascntresetthe{lemma}
\newaliascnt{corollary}{theorem}

\aliascntresetthe{corollary}
\theoremstyle{definition}
\newaliascnt{definition}{theorem}
\newtheorem{definition}[definition]{Definition}
\aliascntresetthe{definition}
\newaliascnt{assumption}{theorem}
\newtheorem{assumption}[assumption]{Assumption}
\aliascntresetthe{assumption}
\newaliascnt{example}{theorem}
\newtheorem{example}[example]{Example}
\aliascntresetthe{example}
\theoremstyle{remark}
\newaliascnt{remark}{theorem}

\aliascntresetthe{remark}

\def\cofirst{\equal}
\usepackage[glossaries]{lib/preamble}
\ifdefined\pdfsuppressptexinfo
\fi
\definecolor{reviewgreen}{rgb}{0,0.5,0}
\DeclareRobustCommand{\reviewgreen}[1]{{\color{reviewgreen}\hypersetup{allcolors=reviewgreen}#1}}
\let\reviewgreen\relax

\definecolor{figfivestateone}{HTML}{8172B3}
\definecolor{figfivestatetwo}{HTML}{DD8452}
\definecolor{figfivestatethree}{HTML}{937860}
\definecolor{figthreeteal}{rgb}{0.0009308,0.569543,0.5648127}
\definecolor{figthreeorange}{rgb}{0.8690623,0.4014344,0.0464332}

\setlist{compact}

\crefname{assumption}{Assumption}{Assumptions}
\Crefname{assumption}{Assumption}{Assumptions}

\usepackage{etoolbox}

\setlist{compact}
\definecolor{mypurple}{HTML}{7A4988}

\makeatletter
\providecommand{\theHALG@line}{}
\renewcommand{\theHALG@line}{\thealgorithm.\arabic{ALG@line}}
\makeatother

\title{The Geometry of Empowerment}

\author{%
  Catherine Ji\cofirst \\
  Department of Physics\\
  Princeton University\\
  \texttt{cj7280@princeton.edu} \\
  \And
  Vivek Myers\cofirst \\
  Department of Computer Science \\
  UC Berkeley\\
  \texttt{vmyers@berkeley.edu} \\
  \AND
  Sergey Levine \\
  Department of Computer Science\\
  UC Berkeley \\
  \And
  Benjamin Eysenbach \\
  Department of Computer Science\\
  Princeton University
}

\begin{document}

\maketitle

\begin{abstract}
    Empowerment captures the capacity for an agent to actively control its environment.
    While conceptually appealing as an information-theoretic quantity, the connection between empowerment and structurally central states that provide broad access to future outcomes has remained an open question.
    In this work, we link empowerment maximization and skill-learning methods to provide new geometries for interpreting and analyzing empowerment.
    Our analyses answer longstanding open questions on the connections between empowerment and structural centrality.
    Our analyses also reveal distinctions between information and reward geometries, highlighting important theoretical implications to build scalable empowerment-maximization methods.\footnotemark[1]
\end{abstract}

\footnotetext[1]{Website \& Code: \url{https://empowerment-geometry.github.io/}}

\section{Introduction}

\label{sec:intro}
\begin{wrapfigure}{R}{0.5\linewidth}
    \centering
    \begin{tikzpicture}
        \node { \includegraphics[width=0.9\linewidth]{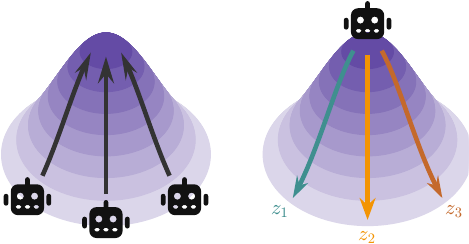} } ;
        \node[%
            anchor=north west,%
            xshift=2.4cm,%
            yshift=-0.3cm,%
            text width=4em,%
            font=\raggedright\footnotesize,
        ] at (current bounding box.north west) {\parbox{\linewidth}{ \textbf{max} $\gE_{\mathrm{pot}}$\\ (potential) }};
        \node[%
            anchor=north west,%
            xshift=6.5cm,%
            yshift=-0.3cm,%
            text width=4em,%
            font=\raggedright\footnotesize,
        ] at (current bounding box.north west) {\parbox{\linewidth}{ \textbf{max} $\gE_{\mathrm{eff}}$\\ (effective) }};
        \node[anchor=north west, xshift=0.05cm, yshift=-0.05cm, font=\small\bfseries] at (current bounding box.north west) {(a)};
        \node[anchor=north west, xshift=4.15cm, yshift=-0.05cm, font=\small\bfseries] at (current bounding box.north west) {(b)};
    \end{tikzpicture}
    \mbox{%
        \hsize .5\linewidth
        \centerline{$\pi_{\mathrm{pot}}(a \mid s)$}
        \centerline{$\pi_{\mathrm{eff}}(a \mid s, z; s_0)$}%
    }

    ~
    \begin{subfigure}[t]{.4\linewidth}
    \end{subfigure}\hfill%
    \begin{subfigure}[t]{.45\linewidth}
    \end{subfigure}
    \caption{%
        \textbf{Two trained empowerment-maximizing policies characterize central states in an MDP:} Consider an agent (robot) on a hill that is easier to descend than to ascend.
        In Fig. 1(a), the potential policy ascends the empowerment landscape to the top of the hill, using $\gEp(s)$ as an intrinsic reward.
        In Fig. 1(b), The effective policy $\pie$ maximizes $I^\pi(Z; S_+ \mid S_0)$, associating each skill $z$ with a downhill route.%
    }
    \label{fig:two_policies}
\end{wrapfigure}

An intelligent agent should be in control of its future. For example, a well-trained hiker at a mountain can (1) choose and traverse down one of many trails from a mountain peak (\cref{fig:two_policies}(b)) {\it or} (2) navigate to and stay at the peak (\cref{fig:two_policies}(a)). While both useful, these behaviors are distinct: the first behavior characterizes ``actively exerting control'' while the latter characterizes ``being in control.''

Cleanly parameterizing, optimizing, and separating these behaviors requires a tractable mathematical formulation of controllability.
\textbf{Empowerment} is an information-theoretic object that quantifies controllability in terms of bits.
As originally defined in~\citep{klyubin2005empowerment}, \textbf{potential empowerment} measures the maximal bits an agent's actions can inform its future state from some current state.
In other words, the potential empowerment measures how much control an agent \textit{could} command from a given state.
An alternative measure of empowerment, termed here as \textbf{effective empowerment}, is a mutual information that quantifies how readily an agent's policy \textit{actively} informs its future state~\citep{capdepuy2011informational}.
While sometimes conflated, these two objects are different. In the example of the mountain in \cref{fig:two_policies}, a potential empowerment policy ascends the mountain to a state from which the agent has options. Meanwhile, the effective empowerment policies actively exercise options and descend the mountain in different directions.

\paragraph{Empowered States have Broad Access.} Prior works conjecture that empowerment is a generic way for agents to do goal-oriented tasks~\citep{salge2014empowerment,schneider2025informationtheoreticpolicypretrainingempowerment}. \citet[\S4.3.3]{salge2014empowerment} call this the AI Empowerment Hypothesis:
\begin{quote}
    \itshape Empowerment provides a task-independent motivation that generate AI behaviour which is beneficial for a range of goal-oriented behaviour.
\end{quote}
However, empowerment does not have immediate obvious connections to downstream task performance.
Indeed, while early works in empowerment suggest a link between empowerment and preferred or central states~\citep{anthony2008preferred,jung2011empowerment,salge2014empowerment}, to the best of our knowledge, formally characterizing notions of MDP centrality and adaptation in relation to empowerment has remained a longstanding open problem.
In this work, we formally show high potential empowerment states lie at centers of reachable regions in the state occupancy simplex (\cref{thm:effective_info_geom}).
The empowerment objective and long horizon generalizations also admit exact hitting time interpretation (\cref{lem:bottleneck}) and capture centrality in a temporal distance geometry~\citep{myers2025learning}.
In addition to providing intuition, these notions of centrality directly lower bound an agent's ability to adapt to downstream tasks in tabular settings (\cref{thm:width_linear_sharp}), providing a partial answer to Salge's longstanding hypothesis.
Thus, potentially empowered agents can reach central states from which they can adapt, where effective empowerment quantifies the adaptation.

\paragraph{Information Geometries and Reward Geometries can Diverge in Continuous-State \gls{mdp}s.} The above results theoretically answer questions in the empowerment literature: (1) empowered states are central states and (2) in discrete settings, empowered states are states from which agents can adapt to downstream tasks. Attempting to extend the adaptation result to continuous-state \gls{mdp}s  reveals that information and reward geometries can diverge: an arbitrarily empowered state can provide arbitrarily small adaptation in a continuous MDP with continuous rewards (\cref{prop:mi_lipschitz_collapse}). The same misalignment appears in discrete settings once the reward prior is anisotropic (\cref{prop:anisotropic_degraded}). Thus, empowered agents are \textit{not generally} adaptive agents in the continuous-state \gls{mdp}s. Scalable formulations of empowerment maximization must address this discrepancy between information and reward adaptation geometries.

Taken together, our results give an information-geometric centrality theorem for the empowerment objective and a complementary temporal interpretation over skill rollouts. These theoretical interpretations provide an answer to longstanding questions about the relationship between empowerment, centrality and access, and adaptation to downstream reward functions: in tabular settings, empowered agents navigate to states centrally-located in an information geometry from which they can adapt. When considering continuous-state \gls{mdp}s and/or anisotropic reward priors, this intuition no longer holds. Thus, our theoretical results show that scalably unifying empowerment and adaptation requires alternative formulations that align information and reward geometries.

\section{Related Work}
\label{sec:related}
As originally introduced, the empowerment at a state is the maximum possible information that can be communicated from open-loop action sequences to future states~\citep{klyubin2008keep}, though closed-loop variants have also been discussed~\citep{salge2014empowerment,mohamed2015variational}.
An empowerment maximizing agent should maximize the empowerment of its occupied states.
~\citet{capdepuy2011informational} distinguishes this channel capacity form of empowerment, termed {\it potential empowerment}, from the mutual information between the action sequences and a future state \emph{realized} by a specific policy, termed {\it actualized empowerment}. In this work, we refer to this form of empowerment as {\it effective empowerment}. These quantities are formalized in \ref{sec:prelims}.

Where should an agent go if told that it will be asked to solve some soon-to-be-specified task? One approach is to consider the geometry of the state space and navigate to \textbf{central, connected states} or \textbf{bottleneck states} within the MDP: states from which the agent can easily visit many other states~\citep{sabidussi1966centrality}. Such notions of MDP centrality have been studied in both reinforcement learning~\citep{jung2011empowerment, anthony2008preferred, csimcsek2007betweenness, turner2023optimalpoliciestendseek} and neuroscience~\citep{stachenfeld2014design} community, often in the context of exploration.
A related line of work studies whether optimal behavior under task uncertainty implies being at central states and actions. In particular, \citet{turner2023optimalpoliciestendseek} show that, under symmetry assumptions on the \gls{mdp} and uncertainty in the reward function, optimal agents prefer state-actions with many future options.
While these notions and definitions of centrality and bottlenecks are appealing, these objects are only well-defined in settings with discrete states and actions.

Prior empowerment literature has conjectured that potential empowerment measures the degree of centrality of a state in an MDP~\citep{anthony2008preferred, jung2011empowerment}. For example, \citet{salge2014empowerment} note that potential empowerment exactly corresponds to the log of the total reachable states in discrete, deterministic settings. In stochastic settings, prior works discuss the relationship between empowerment and closeness centrality and show correlations between these quantities in simple didactic experiments; however, no formal link between centrality and empowerment is provided.
\citet{jung2011empowerment} mention the possibility that empowerment might identify ``gateway'' states, but do not explicitly test nor quantify this observation in experiments. Our work provides a rigorous grounding to these observations, formally connecting potential and effective empowerment to two different notions of centrality: centrality in temporal distance geometry and centrality in the information geometry.

One important consequence of our theoretical results is a notion of centrality that remains well defined in continuous state spaces.
A natural generalization of discrete measures of reachability and closeness is the temporal distance~\citep{myers2025learning}. In discrete settings, the temporal distance corresponds to a simple function of the hitting time from state-to-state. Previous work has shown temporal distances can effectively guide exploration~\citep{jiang2025etd} and goal-reaching~\citep{myers2025offlinegoalconditionedreinforcementlearning, park2026dualgoalrepresentations} in continuous, high-dimensional tasks. To our knowledge, there has not been prior work utilizing temporal distances to identify and characterize centralized or bottleneck states. Here, we show that empowerment corresponds to the reduction in time/temporal distance to future states.

\paragraph{Estimating and Maximizing Empowerment in Practice.}
There are two different forms of empowerment: potential empowerment and effective empowerment~\citep{capdepuy2011informational}.
While prior works discuss maximizing both quantities and their variants~\citep{levy2025representation,gregor2017variational,ringstrom2022reward,schneider2025informationtheoreticpolicypretrainingempowerment},
to the best of our knowledge, there have been no prior works that scalably maximize potential empowerment in high-dimensional, continuous-state settings.
Meanwhile, there has been substantially more prior work in RL estimating and maximizing \emph{effective empowerment} (or similar) quantities in high-dimensional, continuous settings, in a line of work termed \gls{misl}.
Prior works use mutual information objectives to learn a latent $z$-conditioned policy $\pi(a \mid s, z)$ that transmits bits about an input vector $Z$ to future states $S_+$ via its actions (i.e., closed-loop control~\citep{gregor2017variational, eysenbach2018diversity, zheng2025misl, park2023metra}).
These methods learn {\it skills} $\pi(a \mid s,z)$ that actively transmit information to the future, which are high effective empowerment policies.

By making use of contrastive and variational estimators of mutual information, prior works scale these methods to learn reactive policies in high-dimensional (e.g., with images) continuous-control tasks~\citep{zheng2025misl}.
While some of these skill-learning methods mention empowerment~\citep{gregor2017variational}, they do not directly optimize potential empowerment as defined in~\citep{klyubin2005empowerment} and instead only optimize effective empowerment. Recently, empowerment and its variants have also served as objectives for assistance~\reviewgreen{\citep{myers2024learningtoassist,ellis2026training,yang2026when}}. While we do not present experiments in high-dimensional or continuous settings, we highlight these lines of work to emphasize that empowerment and related objectives underpin state-of-the-art intrinsic rewards for unsupervised \gls{rl} and new assistance methods. Understanding the theoretical reasons to maximize or modify empowerment therefore directly informs the design of these important methods.

\section{Preliminaries}
\label{sec:prelims}
We will define notions of empowerment in terms of occupancy measures over \gls{mdp}s, following a long line of work that casts decision making as probabilistic inference ~\reviewgreen{\citep{barreto2017successor,eysenbach2020c,dayan1993improvinga,gregor2017variational,fu2018variational}}.
\subsection{Discounted State Occupancy}
\paragraph{MDP problem setting.} Our analysis focuses on applying information theory to a fully-observed \gls{mdp} with states $s \in \cS$, actions $a \in \cA$, and environment dynamics $p(s' \mid s, a)$. Let the skill and starting state-conditioned policy be the function $\pi(a \mid s, z; s_0)$ with skill $z \in \cZ$.
The latent skill $z$ lives in a continuous skill space $\cZ \subseteq \mathbb{R}^d$ (i.e. the surface of a hypersphere), including in tabular \glspl{mdp}. The policy $\pi(a \mid s, z; s_0)$ is a distribution over actions given the current state $s$, the latent $z$, and the start state $s_0$. We write $\Pi_\cZ$ for this class of skill- and starting-state conditioned policies, and $\Pi$ for the class of ordinary policies $\pi(a \mid s)$ that carry no skill. Fixing a skill in a policy of $\Pi_\cZ$ leaves an ordinary policy in $\Pi$. The start-state $s_0$ is the context of a skill~\citep{klyubin2008keep}.

\paragraph{Reachable state-occupancy polytope.}
For some initial state $s_0$, skill latent $z \in \cZ$, and skill and start-state-conditioned policy $\pi(a \mid s, z; s_0)$, the \gls{dsom} takes the form
\begin{equation}
    p^\pi(s_+ = s \mid s_0, z) \triangleq (1-\gamma)\sum_{t=0}^{\infty} \gamma^t p^\pi(s_t = s \mid s_0, z),
    \label{eq:dsom}
\end{equation}
where $p^{\pi}(s_t = s \mid s_0, z)$ is the probability, under policy $\pi$ and the \gls{mdp} dynamics, visits state $s$ after $t$ time steps when initialized at state $s_0$ under skill $z$. Equivalently, the \gls{dsom} is the probability measure of the occupied future state $S_+$ at a geometric termination time $\tau \sim \text{Geom}(1-\gamma)$.

Averaging over the skill distribution $p(z \mid s_0)$ gives the skill-marginal \gls{dsom},
\begin{equation}
    p^\pi(s_+ \mid s_0) = \E_{p(z \mid s_0)}\big[p^\pi(s_+ \mid s_0, z)\big],
    \label{eq:dsom_marginal}
\end{equation}
the future-state distribution induced by the policy when the skill is unknown.

Every \gls{dsom} is a distribution over states within the simplex $\Delta(\cS)$.
For a fixed \gls{mdp}, an agent can only realize a strict subset of \gls{dsom} in the simplex.
For a fixed start state $s_0$, the realizable \gls{dsom}s  form the \emph{reachable state-occupancy polytope}
\begin{equation}
    \cP(s_0) \triangleq \bigl\{\, p^\pi(s_+ \mid s_0) \;:\; \pi \in \Pi \,\bigr\} \;\subseteq\; \Delta(\cS),
    \label{eq:polytope}
\end{equation}
a convex subset of the simplex whose linear constraints are the Bellman flow equations (see the polytope inside the simplex in \cref{fig:example-left})~\citep{eysenbach2021information}.
We write $\cP(s)$ for the polytope reachable from a generic start state $s$.
Our analysis in \cref{sec:geometry} studies the geometry of exactly this polytope.

\subsection{Empowerment Preliminaries}

We consider two forms of empowerment: potential (the maximal empowerment that \textit{could} be achieved at a state) and effective (the actualized influence of a particular skill-conditioned policy (i.e., collection of policies) at a given state on the future state distribution).

\paragraph{Potential and Effective Empowerment.}
Our work highlights the difference between empowerment, as originally defined in~\citet{klyubin2005all} and the \gls{mi} objects directly maximized in \gls{misl} methods~\citep{mohamed2015variational, eysenbach2018diversity, park2023metra, zheng2025misl}.
We term these forms of empowerment as potential and effective empowerment, respectively.
The distinction between these forms of empowerment is not new and has been pointed out in prior work~\citep{capdepuy2011informational}, though (to our knowledge) modern works do not always differentiate between these objects~\citep{mohamed2015variational}.

\citet{klyubin2005all} originally defines (potential) empowerment as the degree to which action sequences of length $H$ from a current state (i.e. input of the information channel) control the future state (i.e. output). Formally,
\begin{align*}
    \gEp_H(s)=\max_{p(a_0^{H-1})} I(A_0^{H-1};S_H\mid S_0=s)
\end{align*}
where $A_0^{H-1} \triangleq (A_0, \dots, A_{H-1})$ denotes the action sequence. In our work, we make two changes. First, rather than considering entire sequences of actions as inputs (i.e. open-loop controls), we instead consider latents $z$ that parameterize a \emph{reactive} policy. We refer to these skill-conditioned, closed-loop controls as ``skills''. Potential empowerment then measures the maximal influence these skills exert on a future state. Second, rather than measure these skills' influence on a state $H$ steps in the future, we measure how much skills control states at a geometric stopping time, so the discount factor sets the timescale of the empowerment.
\begin{definition}[Potential empowerment] Let policies $\pi \in \Pi_\cZ$ condition on skill latents $z \in \cZ$ and let $S_+$ denote the random state $s_+ \sim p^\pi(s_+ \mid s_0, z)$ sampled from the \gls{dsom}.
    Potential empowerment is the channel capacity over a {\it family of policies} dependent on an initial state~\citep{klyubin2005empowerment,capdepuy2011informational}
    \begin{align*}
        \gEp(s_0) & = \max_{\pi \in \Pi_\cZ} I^\pi (Z;S_+ \mid S_0=s_0).
    \end{align*}
    \label{def:pot}
\end{definition}
\begin{definition}[Potential empowerment policy]
    A potential empowerment policy $\pip \in \Pi$ that takes the form $\pip(a \mid s)$ maximizes the potential empowerment as a reward:
    \begin{align*}
        \pip^* = \argmax_{\pi \in \Pi} \E_{\pi}\left[\sum_t \gamma^t \gEp(s_t)\right].
    \end{align*}
    \label{def:potential}
\end{definition}
While conceptually similar, mutual information skill learning methods do not maximize potential empowerment, and maximize effective empowerment. {\it Effective empowerment} captures the \gls{mi} itself:
\begin{definition}[Effective empowerment]
    Consider skill and start state-conditioned policies $\pi \in \Pi_\cZ$ that take the form $\pi(a \mid s, z; s_0)$. {\it Effective empowerment} is the mutual information realized by a particular skill and start state-conditioned policy at a particular starting state $s_0$ under the uniform skill source $p(z \mid s_0) = \mathrm{Unif}(\cZ)$:
    \begin{align*}
        \gEe(\pi, s_0) & = I^\pi (Z;S_+ \mid S_0 = s_0) = \E_{p(\dmy{z} \mid s_0)} \KL[p^\pi(\dmy{s_+} \mid s_0, z) \| p^\pi(\dmy{s_+} \mid s_0)]
    \end{align*}
    Correspondingly, we refer to the skill and start state-conditioned policy $\pi(a \mid s,z ; s_0)$ as the \emph{effective empowerment policy}.

    This objective is similar to the objectives directly maximized in modern skill-learning algorithms~\reviewgreen{\citep{gregor2017variational, eysenbach2018diversity, zheng2025misl}}; note that \gls{misl} methods generally assume a fixed starting state $s_0$, so the learned skills are specific to $s_0$. Thus, we also refer to the effective empowerment policies as skills. \label{def:effective}
\end{definition}
Importantly, effective empowerment involves no maximization and measures the \gls{mi} \emph{achieved} by a particular skill-conditioned policy $\pi \in \Pi_\cZ$ that begins from a particular starting state $s_0$. This \gls{mi} depends on both the \emph{channel} $z \mapsto p^\pi(\dmy{s_+} \mid s_0, z)$, determined by the policy, and the \emph{source} $p(z \mid s_0)$ set to uniform. Potential empowerment is the maximum-achievable effective empowerment for a given starting state $s_0$, so that
\begin{align}
    \gEp(s_0) = \max_{\pi \in \Pi_\cZ}\; \max_{p(z \mid s_0)} I^\pi(Z;S_+ \mid S_0 = s_0).
    \label{eq:pot_is_max_eff}
\end{align}
Under a uniform skill prior, the policy can represent any skill source by simply relabeling skills (\cref{lem:uniform_continuous}). Fixing the source $p(z \mid s_0)$ to uniform therefore loses no generality. We include a full statement of this result in \cref{app:proof_skill_selection}.

Thus, a learned skill set (effective empowerment policies) acts as a probe of the potential empowerment at a starting state. If skills initialized at $s_0$ induce easily distinguishable future occupancies $p^\pi(s_+ \mid s_0, z)$, then $s_0$ affords many controllable futures and has high potential empowerment $\gEp(s_0)$.

\paragraph{Notation convention.} Throughout the paper, we write $\pi$ for the effective policy when context is clear and $\pie$ only when both policies appear in the same expression. The potential policy is always written as $\pip$. The object $\cE (s)$ always refers to potential empowerment measured at starting-state $s$.

\section{The Geometry of Empowerment}
\label{sec:geometry}
Prior work has empirically observed a close connection between geometry and empowerment~\citep{anthony2008preferred, jung2011empowerment}, wherein states that are more centralized in an \gls{mdp} have higher empowerment.
However, to the best of our knowledge, such connections have not been proven theoretically.
In this section, we show that empowerment is closely related to a probabilistic notion of centrality in information geometry~\citep{polyanskiy2022information} and temporal distance geometry~\citep{myers2025learning}:
\begin{itemize}
    \item \cref{subsec:geom}: Effective empowerment corresponds to the information-theoretic distance between states a \textit{ specific} skill-conditioned policy may reach in the future.
          Potential empowerment is the maximum over these distances from a starting state and thus measures the radius of a reachable space.
    \item \cref{subsec:bottlenecks}: High empowerment states are central in temporal distance geometry.
          From a current state, the potential empowerment measures the average time saved to reach future states when committing to a skill.
\end{itemize}

We then connect empowerment to adaptation on downstream tasks (\cref{subsec:downstream}). We show a positive result, a lower bound on adaptation, but also prove important limitations that highlight the fundamental difference between information and reward geometries:
\begin{itemize}
    \item \cref{subsec:downstream}, \cref{thm:width_linear_sharp}: The potential empowerment of a state lower bounds adaptation to unknown reward functions drawn from an isotropic distribution.
    \item \cref{subsec:downstream}, \cref{prop:mi_lipschitz_collapse}: In continuous settings, a state can have unbounded potential empowerment while providing arbitrarily small adaptation to a continuous reward.
    \item \cref{subsec:downstream}, \cref{prop:anisotropic_collapse}: For anisotropic reward distributions, an empowered state can provide no adaptation.
\end{itemize}

\subsection{Information Geometry of Potential and Effective Empowerment Policies.}
\label{subsec:geom}

\begin{figure}
    \centering
    \includegraphics[width=0.7\linewidth]{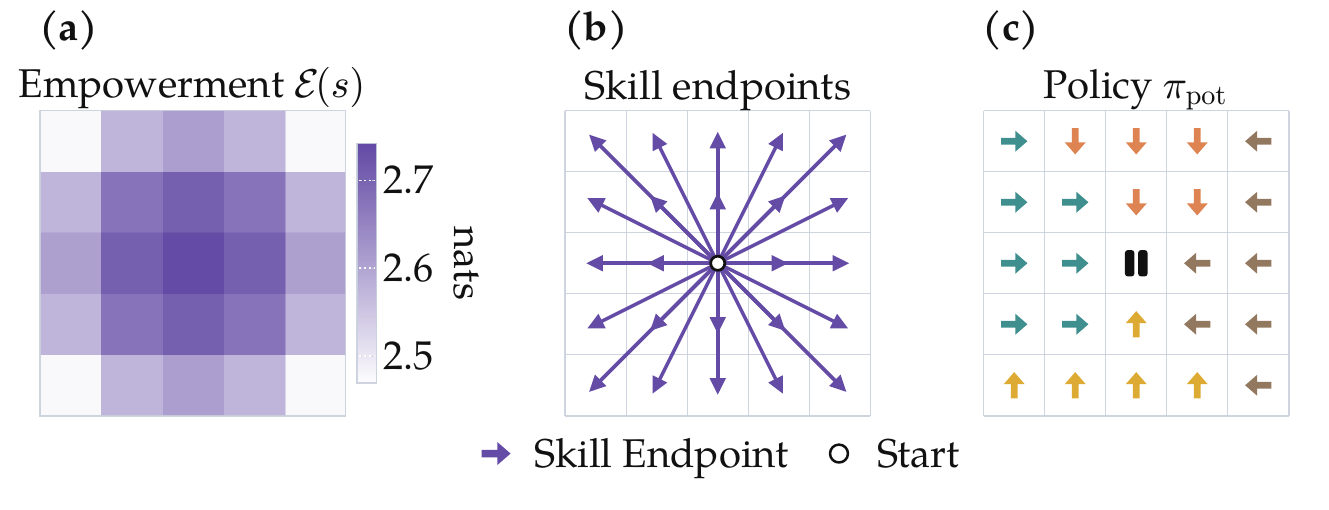}
    \caption{\textbf{Skills measure the centrality of a state.} We visualize the potential empowerment field $\cE(s)$ (a) for each state in a tabular GridWorld. The central state has high potential empowerment, because (b) the central state gives an agent access to a maximal set of distinguishable skills. A potential empowerment policy ascends the potential empowerment landscape to the center state in (c), from which it can execute different skills. See \cref{thm:effective_info_geom} for the formal statement. See \cref{app:tab_exps} for tabular experiment details.}
    \label{fig:eff_emp_5x5}
\end{figure}

\paragraph{Geometric intuition.}
Prior work has found empirically that high empowerment states are central states in an MDP, relating empowerment to the volume of reachable states in discrete, deterministic settings~\citep{salge2014empowerment}. Here, we extend definitions of centrality to the full MDP setting with stochasticity and continuous states.

Effective empowerment is a mutual information. In the dual form, effective empowerment maximization finds state occupancy distributions that lie at the center of extremized state distributions. Then, potential empowerment-seeking policies search for the states that lie at the centers of the largest radii reachable polytopes.

Before presenting formal results, we visualize an example of the information geometry of empowerment in \cref{fig:example}. Let the triangle visualize the state occupancy simplex reachable from different states $s_1$ and $s_2$. Maximizing the mutual information (effective empowerment) for a given starting state corresponds to learning skills that reach vertices of the set of reachable state occupancies. The agent maximizes effective empowerment separately at every possible starting state (orange for $s_1$, teal for $s_2$) and learns skills (orange and teal arrows) that span reachable regions. Then, to maximize potential empowerment, an agent searches among these starting states for the largest reachable region. In this case, the teal state $s_2$ corresponds to the state with the highest potential empowerment and occupies a more centralized point within the information geometry.

\paragraph{High Empowerment States are Central in Information Geometry.}
Formally, consider the state probability simplex $\Delta(\mathcal{S})$, where \gls{dsom}
are elements of the simplex $p^\pi(s_+ \mid s_0, z) \in \Delta(\mathcal{S})$.
A straightforward extension of prior work~\citep{eysenbach2021information}
shows that effective empowerment maximization is a center localization problem in the
initial state $s_0$-conditioned state probability simplex, where distance is given by the
Kullback--Leibler divergence~\citep{megiddo1983weighted}.
Then, potential empowerment measures the min-max distance to the reachable state-occupancy polytope:

\makerestatable
\begin{theorem}[Potential empowerment is the \gls{kl} radius of the reachable polytope, extending {\citet[Lem.~6.1--6.2]{eysenbach2021information}}]
    \label{thm:effective_info_geom}
    Let $\cP(s_0) \subset \Delta(\cS)$ be the reachable state-occupancy polytope of \cref{eq:polytope}. Jointly maximizing the mutual information over the skill source and the skill-conditioned policy solves the KL 1-center localization problem for $\cP(s_0)$:
    \begin{align}
        \gEp(s_0)
        =
        \max_{\pi \in \Pi_\cZ,\, p(z \mid s_0)} I^\pi(Z;S_+ \mid S_0 = s_0)
         & =
        \min_{q(s_+) \in \Delta(\mathcal{S})}
        \max_{p \in \cP(s_0)} \;
        \KL\bigl(p(\dmy{s_+}) \| q(\dmy{s_+})\bigr).
        \label{eq:one_center}
    \end{align}
    Moreover, every joint optimum $(\pi, p(z \mid s_0))$ places every skill on the boundary of the optimal KL ball measured from the optimal central occupancy $q^\star$:
    \begin{align}
        \KL\bigl(p^\pi(\dmy{s_+} \mid s_0, z) \| q^\star(\dmy{s_+})\bigr)
         & = \max_{p \in \cP(s_0)} \KL\bigl(p(\dmy{s_+}) \| q^\star(\dmy{s_+})\bigr)
        \quad\text{for every skill } z.
    \end{align}
    In other words, the optimized skills span the maximal information radius of the reachable polytope given starting state $s_0$.
\end{theorem}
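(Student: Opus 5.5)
The plan is to reduce the statement to the classical capacity--redundancy (Gallager/Csiszár) identity for a channel whose input alphabet is the reachable polytope itself. The first step is to rewrite the left-hand side of \cref{eq:one_center} as a maximization over distributions $\mu$ on $\cP(s_0)$. For any $\pi\in\Pi_\cZ$ and source $p(z\mid s_0)$, each skill $z$ fixes an ordinary policy in $\Pi$. Hence $p^\pi(s_+\mid s_0,z)\in\cP(s_0)$, and the pushforward of the source gives a distribution $\mu$ on $\cP(s_0)$. The mutual information depends only on $\mu$:
\[
I^\pi(Z;S_+\mid S_0=s_0)=\E_{p\sim\mu}\KL\bigl(p\,\|\,\bar p_\mu\bigr),\qquad \bar p_\mu=\E_{\mu}[p].
\]
Conversely, every $\mu$ on $\cP(s_0)$ is realized by some pair $(\pi,p(z\mid s_0))$, by assigning a policy in $\Pi$ to each point of $\cP(s_0)$ and using \cref{lem:uniform_continuous} to relabel skills. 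So the left side equals $C=\sup_\mu \E_\mu\KL(p\|\bar p_\mu)$. The first equality in \cref{eq:one_center} is just \cref{eq:pot_is_max_eff}.

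Next I would use the variational (golden) formula for mutual information:
\[
\E_\mu \KL(p\|\bar p_\mu)=\min_{q\in\Delta(\cS)}\E_\mu\KL(p\|q),
\]
with the minimum attained at $q=\bar p_\mu$ because $\E_\mu\KL(p\|q)=\E_\mu\KL(p\|\bar p_\mu)+\KL(\bar p_\mu\|q)$. Thus $C=\sup_\mu\min_q F(\mu,q)$, where $F(\mu,q)=\E_\mu\KL(p\|q)$. The function $F$ is linear in $\mu$ and convex and lower semicontinuous in $q$.

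I would then swap the order of optimization with a minimax theorem (Sion's). Since $\cP(s_0)$ is compact in the tabular case, the set of probability measures on it is weak-$*$ compact, and $\Delta(\cS)$ is compact and convex. The swap gives $C=\min_q\sup_\mu\E_\mu\KL(p\|q)=\min_q\max_{p\in\cP(s_0)}\KL(p\|q)$, because a linear functional of $\mu$ is maximized by a point mass. \textbf{This is the step I expect to be the main obstacle.} The difficulty is that $\KL(p\|q)$ can be infinite when $q$ misses the support of some reachable $p$, so continuity fails. To handle it, I would first restrict $q$ to distributions whose support contains $\bigcup_{p\in\cP(s_0)}\mathrm{supp}(p)$, which is no loss since other $q$ give value $+\infty$. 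On that set, $q\mapsto\max_p\KL(p\|q)$ is convex and lower semicontinuous. If Sion's theorem is awkward here, I would fall back on the direct argument of \citet{eysenbach2021information}: take the Carathéodory reduction to finitely many vertices of $\cP(s_0)$, where the channel is finite and the standard redundancy--capacity theorem applies.

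Finally, the boundary property comes from a saddle-point argument. Let $(\mu^\star,q^\star)$ be a saddle point, with $q^\star=\bar p_{\mu^\star}$ and $R=\max_{p\in\cP(s_0)}\KL(p\|q^\star)$. Then $C=\E_{\mu^\star}\KL(p\|q^\star)\le R=C$, so $\KL(p\|q^\star)=R$ for $\mu^\star$-almost every $p$. This says every skill in the support of the optimal source lies on the boundary of the optimal KL ball. The same conclusion holds for any joint optimum $(\pi,p(z\mid s_0))$, since its induced $\mu$ has mean equal to $q^\star$; this uses uniqueness of the optimal output distribution, which follows from strict convexity of $\KL$ in its second argument.
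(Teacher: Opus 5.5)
Your proposal is correct and follows essentially the paper's route: reduce to a channel whose inputs are occupancies in $\cP(s_0)$, apply the capacity/KL-radius duality, and use the sandwich $C\ge\E_{p(z\mid s_0)}\KL\bigl(p^\pi(\cdot\mid s_0,z)\|q^\star\bigr)\ge C$ to put almost every skill on the boundary of the optimal KL ball. The paper simply cites the duality from Polyanskiy--Wu rather than re-deriving it via Sion or the vertex reduction, and it applies the golden formula at $q^\star$ to an arbitrary optimum, $C\ge\E\,\KL(p_z\|q^\star)=C+\KL(\bar p\|q^\star)$, which forces $\bar p=q^\star$ directly, so you do not need your separate uniqueness or strict-convexity step.
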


We prove a general version of \cref{thm:effective_info_geom} in \cref{app:proof_info_geom}. This result shows that an exploration agent maximizing potential empowerment will seek central states in the information geometry. Prior work on empowerment was correct in conjecturing that empowerment maximization has a formal connection to centrality. However, the right geometry for studying this centrality is not (say) the Euclidean metric on states, but rather the \gls{kl} divergence between state distributions induced by different skills.
\begin{figure}
    \begin{subfigure}[b]{0.45\linewidth}
        \centering\includegraphics[width=\linewidth]{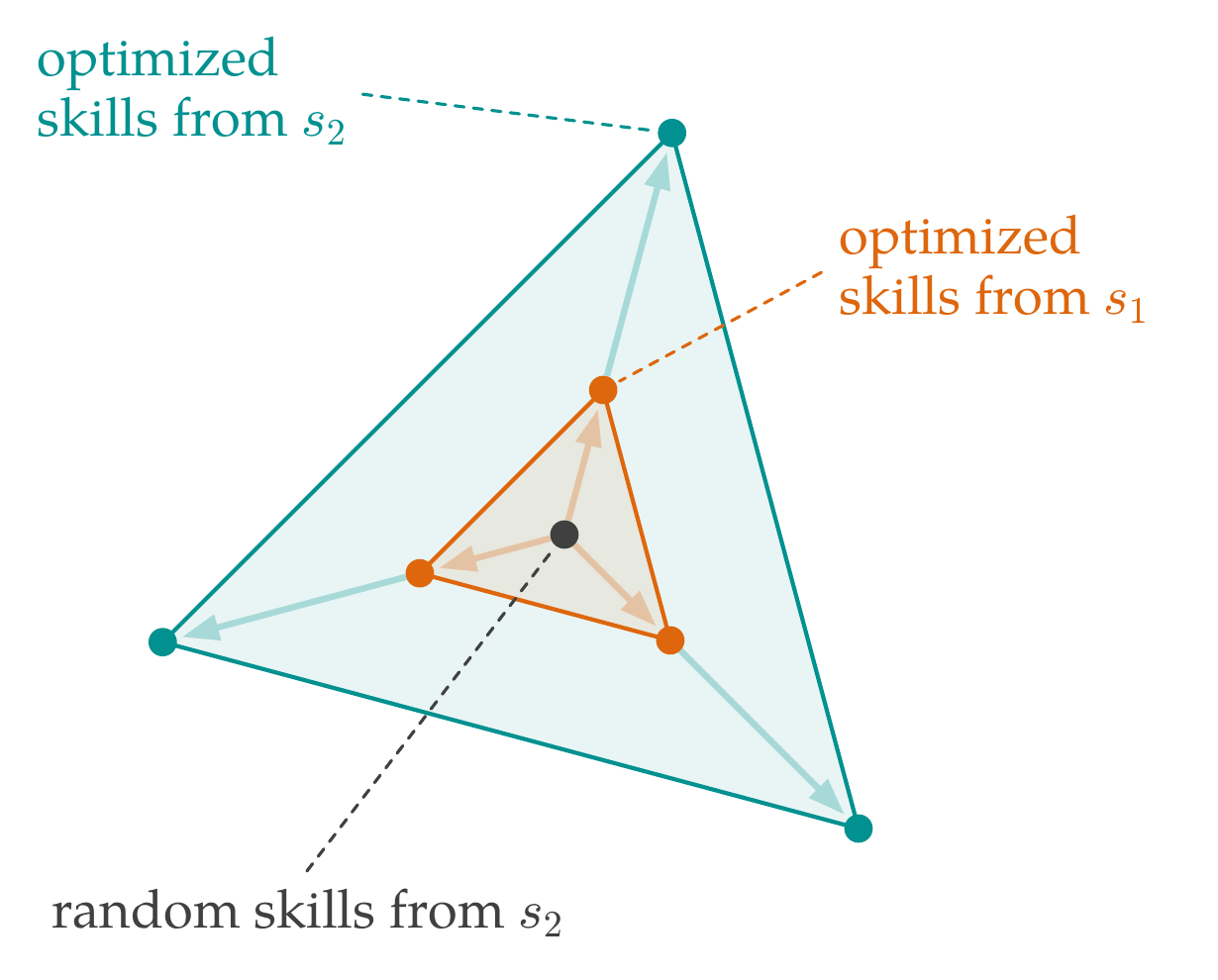}
        \caption{\textbf{State occupancy simplex}}
        \label{fig:example-left}
    \end{subfigure}
    \hfill
    \begin{subfigure}[b]{0.52\linewidth}
        \centering
        \includegraphics[width=\linewidth]{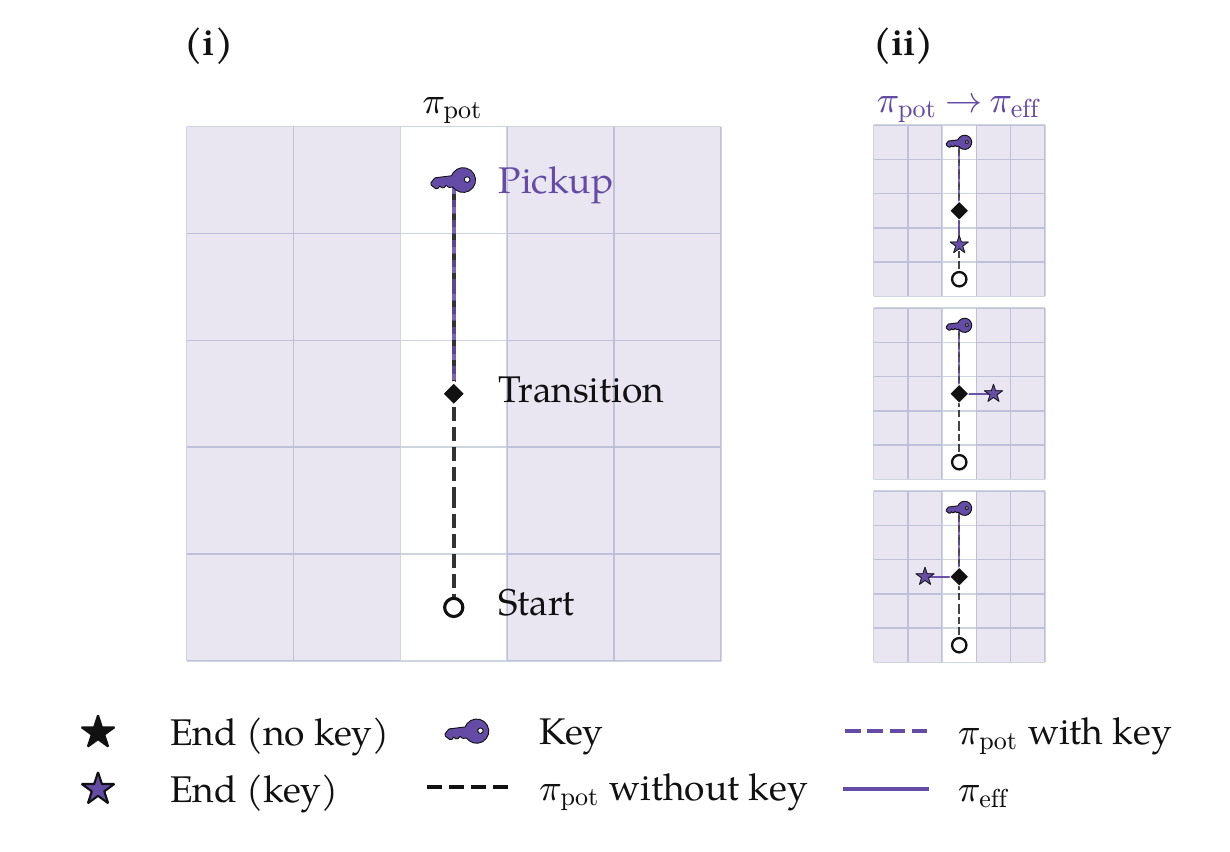}
        \caption{\textbf{Potential policy finds central states}. }
        \label{fig:example-center}
    \end{subfigure}

    \caption{{\bf Potentially empowered states are central states in the polytope.} Tabular intuition for \cref{thm:effective_info_geom}.
        (a) We can use a probability simplex $\Delta(\cS)$ to visualize the discounted state occupancy measures of each skill (circles).
        \textcolor{figthreeorange}{Orange circles} indicate the best set of skills starting from state \textcolor{figthreeorange}{$s_1$}, and \textcolor{figthreeteal}{teal circles} indicate the best skills starting from state \textcolor{figthreeteal}{$s_2$}.
        from starting states \textcolor{figthreeorange}{$s_1$} and \textcolor{figthreeteal}{$s_2$} in arbitrary MDP.
        The skills starting from \textcolor{figthreeorange}{$s_1$} define a smaller polytope (\textcolor{figthreeorange}{orange points}) in the simplex than \textcolor{figthreeteal}{$s_2$} (\textcolor{figthreeteal}{teal points}) as a result of their transition dynamics, so the potential empowerment is higher at \textcolor{figthreeteal}{$s_2$} than \textcolor{figthreeorange}{$s_1$}.
        (b) We use a GridWorld to illustrate the connection between centrality and tool use.
        An agent in the GridWorld begins at the bottom of the hallway.
        The agent can set down and pick up a key at the top of the hallway.
        The key provides the agent access to shaded regions of the \gls{mdp}.
        (b, i) A potentially empowered agent picks up the key then navigates to the center of the room, correctly identifying the middle state \emph{with the key} as the most central state.
        (b, ii) From the center, the agent can execute the maximal number of distinguishable skills.
        See \cref{app:key} for experimental details.
    }
    \label{fig:example}
\end{figure}

A simple tabular experiment demonstrates this notion of centrality (\cref{fig:example-center}). In this \gls{mdp}, the agent can pick up and drop off a key at the top of the hallway in addition to $\{\text{Left, Right, Up, Down, Stay}\}$ actions. Picking up this key allows access to the blue-shaded regions of the \gls{mdp}; without the key, the agent cannot enter these spaces in the grid. The potential empowerment policy $\pip$ that ascends the potential empowerment landscape correctly reaches the true central state: the center of the \gls{mdp} with the key. This experiment shows that states with maximal potential empowerment capture an information-geometric notion of centrality, rather than navigating towards the {\it physically} central state.

\subsection{Empowerment, Bottlenecks, and Temporal Distance}

\label{subsec:bottlenecks}

Beyond information geometry, the empowerment objective has a direct hitting time interpretation in discrete state spaces. Suppose that the agent has multiple trial trajectories to reach a future state, where each trajectory rolls out a skill from the same starting state until a geometric termination time. The empowerment measures the reduction in hitting time when the agent commits to a skill:

\begin{restatable}[Bottleneck states are high potential empowerment states]{lemma}{lemBottleneck}
    \label{lem:bottleneck}
    In a discrete \gls{mdp}, let the hitting time random variable $H^{\pie}(s_0\to s_+)$ be the number of rollouts from $s_0$ until a rollout hits $s_+$, for skills $Z \sim p(\dmy{z} \mid s_0)$. Let $H^{\pie}(s_0\to s_+\mid z)$ be the corresponding hitting time when rolling out $z$ every time. Then,
    \[
        \gEp(s_0)
        =
        \max_{\pie \in \Pi_\cZ}
        \E_{p^{\pie}(z,s_+\mid s_0)}
        \left[
            \log
            \E[H^{\pie}(s_0\to s_+)]
            -
            \log
            \E[H^{\pie}(s_0\to s_+\mid z)]
            \right].
    \]
\end{restatable}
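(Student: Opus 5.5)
The plan is to reduce the hitting-time expression to a pointwise mutual information, and then invoke the definition of potential empowerment together with \cref{lem:uniform_continuous}.

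\textbf{Step 1: hitting times are geometric.} Fix $\pie\in\Pi_\cZ$ and the source $p(z\mid s_0)$. Each rollout runs until an independent geometric termination time, so its terminal state $S_+$ is distributed according to the \gls{dsom}. I read ``a rollout hits $s_+$'' as ``the rollout's terminal state $S_+$ equals $s_+$''; the main step is fixing this convention so that each trial is Bernoulli.
\begin{itemize}
    \item Under the marginal scheme, a fresh $Z\sim p(z\mid s_0)$ is drawn each rollout. The success probability per trial is then $p^{\pie}(s_+\mid s_0)$ from \cref{eq:dsom_marginal}.
    \item Under the committed scheme, the same $z$ is reused, and the success probability is $p^{\pie}(s_+\mid s_0,z)$.
\end{itemize}
Rollouts are i.i.d., so the number of trials until the first success is geometric, giving
\[
\E[H^{\pie}(s_0\to s_+)] = 1/p^{\pie}(s_+\mid s_0), \qquad \E[H^{\pie}(s_0\to s_+\mid z)] = 1/p^{\pie}(s_+\mid s_0,z).
\]
Both are finite whenever $(z,s_+)$ lies in the support of $p^{\pie}(z,s_+\mid s_0)$. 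That is the only place the expectation is evaluated, so no infinite quantities arise.

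\textbf{Step 2: identify the mutual information.} Substituting Step 1, the integrand becomes
\[
\log p^{\pie}(s_+\mid s_0,z)-\log p^{\pie}(s_+\mid s_0).
\]
Taking the expectation under the joint $p(z\mid s_0)\,p^{\pie}(s_+\mid s_0,z)$ gives exactly $I^{\pie}(Z;S_+\mid S_0=s_0)$, which matches the KL form in \cref{def:effective}.

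\textbf{Step 3: maximize.} Maximizing over $\pie$ yields
\[
\max_{\pie} I^{\pie}(Z;S_+\mid S_0=s_0)
\]
under the fixed source. With the uniform source of \cref{def:effective}, \cref{lem:uniform_continuous} shows that fixing the source loses nothing relative to also maximizing over $p(z\mid s_0)$. So this maximum equals the joint maximum in \cref{eq:pot_is_max_eff}, which is $\gEp(s_0)$ by \cref{def:pot}.

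If the statement is intended with the source free, the same argument applies with a joint maximization over source and policy. In that case the final identification is immediate from \cref{eq:pot_is_max_eff}.
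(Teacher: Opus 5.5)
Your proposal is correct and follows the paper's proof essentially step for step. The repeated-rollout hitting times are geometric with success probabilities $p^{\pi}(s_+\mid s_0)$ and $p^{\pi}(s_+\mid s_0,z)$, so the integrand is the pointwise log-likelihood ratio, its expectation is the mutual information, and maximizing over policies gives the potential empowerment. Your remarks on finiteness over the support of the joint law, and on reconciling the fixed uniform source with the joint source--policy maximization via the uniform-prior lemma, make explicit what the paper's one-line ``maximizing both sides'' step leaves implicit.
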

We prove \cref{lem:bottleneck} in \cref{app:bottleneck}. In tabular settings, potential empowerment measures the reduction in time to reach future states \emph{over independent rollouts} when committing to an option/skill.

However, hitting times are not well-defined in continuous geometries, where hitting a particular state is a measure-zero event. The temporal distance is an object that remains well-defined in continuous settings and reduces to hitting times in the discrete limit \citep{myers2025learning}. In the appendix, we show that a closely related empowerment objective over \emph{contiguous} (versus repeated) skill rollouts, where the agent re-samples a skill at each geometric termination in a single trajectory, admits an exact temporal distance formulation~\citep{myers2025learning}. In that construction, the empowerment object measures the reduction in temporal distance to future states after committing to a skill (\cref{prop:rollout_temporal_distance}), recovering a hitting-time interpretation in the discrete, deterministic limit.

Conceptually, these temporal formulations capture the bottleneck effect. From a bottleneck, an agent must commit to different skills in order to quickly reach different downstream regions which is conceptually the reduction in temporal distance and hitting times considered in our results. Tabular experiments in \cref{fig:bottleneck} illustrate this effect~\citep{stachenfeld2014design,csimcsek2005identifying}. Adding slim openings (``bottlenecks'') to the grid shifts the high empowerment states from the original central state to the bottleneck openings, prioritizing states where committing to a skill (e.g. move left or move right) quickly reaches different regions. This interpretation also connects empowerment to closeness centrality~\citep{anthony2008preferred}.
\begin{figure}
    \centering
    \begin{minipage}[t]{0.48\linewidth}
        \centering
        \includegraphics[width=\linewidth]{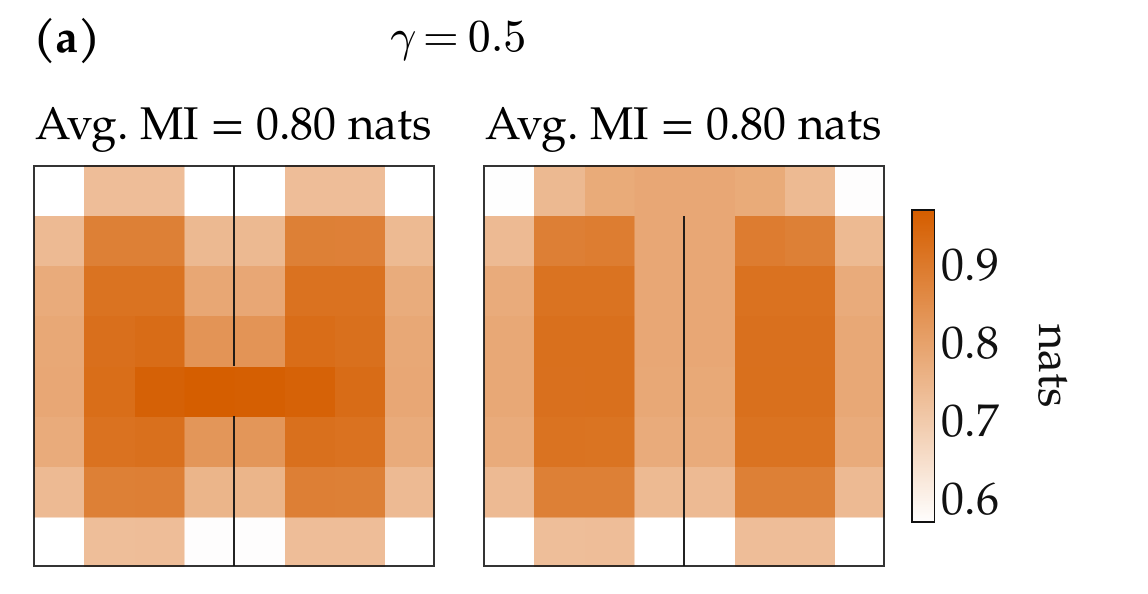}
    \end{minipage}\hfill
    \begin{minipage}[t]{0.48\linewidth}
        \centering
        \includegraphics[width=\linewidth]{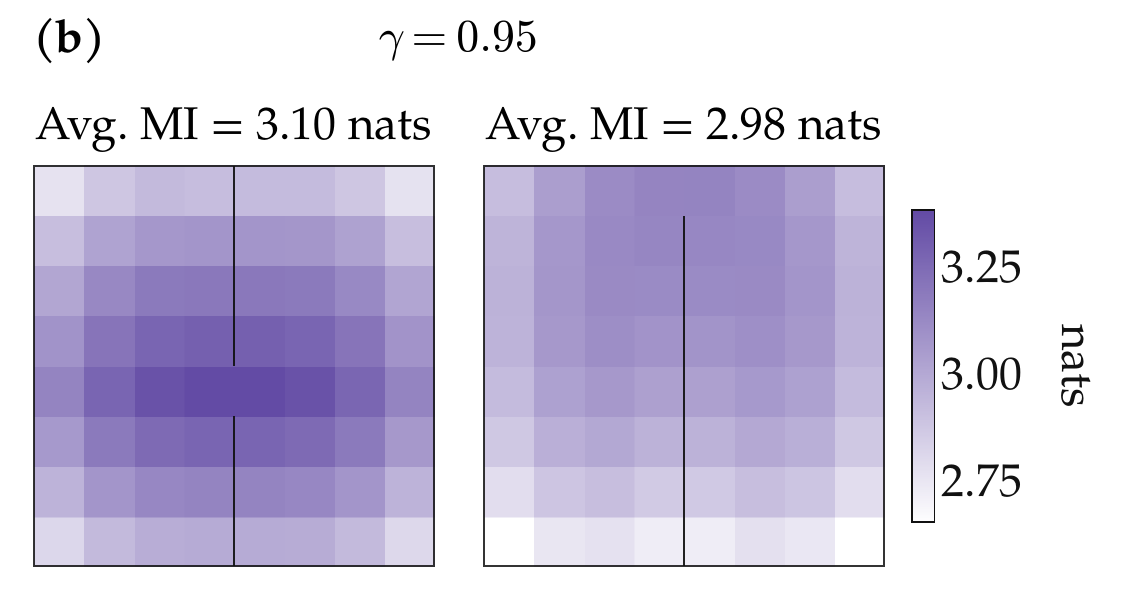}
    \end{minipage}
    \caption{\textbf{Bottleneck states are high potential empowerment states.} Consider GridWorlds with a door, or a ``bottleneck''. (a) For short horizon behaviors where $\gamma = 0.5$, the high empowerment regions are the central door (a, left) and the center of the two sides of the room (a, right). (b) Lifting $\gamma =0.95$ increases the horizon of possible behaviors, such that the empowerment tracks the location of the door. In \cref{subsec:bottlenecks} we show that empowerment measures the reduction in temporal distance to future states, which quantifies a bottleneck effect. See \cref{app:bottleneck_exps} for experimental details.}
    \label{fig:bottleneck}
\end{figure}

\subsection{Is Empowerment Optimal for Downstream Adaptation?}

\label{subsec:downstream}
\paragraph{Potential empowerment lower bounds skill adaptation under an uninformed reward prior.} While empowerment is often introduced as an appealing ``universal''~\citep{klyubin2005all} objective, is there a precise sense in which empowerment measures the ability to adapt in the future?
In \cref{app:proof_adaptation}, we formally prove that empowerment lower bounds skill
adaptation to rewards drawn from an uninformed prior.

To formalize adaptation when rewards are not available during training, we
define an adaptation objective with respect to a reward prior distribution $\cR$.
A simple reward prior is an independent zero-mean Gaussian over each state, with a per-state variance $1/b(s)$ set by a baseline distribution $b$. Here, we take the baseline distribution to be the uniform occupancy.
Prior works use similar isotropic reward distributions when proving general downstream adaptation results~\citep{ollivier2025featuresbestsuccessorfeatures}.
The reward prior treats all states the same and does not prefer particular states.

Here, we define adaptation as a \textit{skill adaptation} to a sampled reward. At test time, an agent must select a skill $z \in \cZ$ to adapt to a sampled reward function $r \sim \cR$. For example, in the case of the $5 \times 5$ grid, an agent should select the skill ``go left'' if the reward provided at test time is on the left.

In the best case, the agent selects the skill $z$ with the highest expected return; note that the agent can do no better within the space of learned skills. This type of policy selection for adaptation/transfer has precedent in prior successor feature literature \citep{barreto2017successor, barreto2018transfer}. For the following result, we consider linear adaptation objective $J$ for the skill set learned by an empowerment-maximizing agent, which is exactly the best skill's mean return averaged over the reward prior. This adaptation objective is notably different from prior results connecting \gls{mi} and adaptation in~\citet{eysenbach2021information}, which considers regularized adaptation directly in the state occupancy. Here, the adaptation is the selection of a \textit{pre-trained} skill behavior, which does not assume the ability to directly adapt state occupancy distributions:

\makerestatable
\begin{theorem}[Tabular empowerment lower bounds reward adaptation]
    \label{thm:width_linear_sharp}
    Consider the start state $s_0$ in state space $\cS$. Let the skill-marginal \gls{dsom} be $p^{\pie}(s_+\mid s_0)=\E_{p(\dmy{z}\mid s_0)}\left[p^{\pie}(s_+\mid s_0, z)\right]$ of \cref{eq:dsom_marginal}. Fix a reference distribution $b$ on $\cS$ with full support (e.g. the uniform baseline $b(s_+)=1/|\cS|$), and sample rewards from the Gaussian
    \[
        r(s_+)\ \sim\ \cN\!\left(0,\ \frac{1}{b(s_+)}\right).
    \]
    Define the linear adaptation objective
    \[
        J(s_0)
        \triangleq
        \E_r\left[\sup_{z\in\cZ}\E_{p^{\pie}(\dmy{s_+}\mid s_0, z)}[r(s_+)]\right],
    \]
    which calculates the average adapted mean reward. Then, for any skill space $\cZ$ and any skill prior $p(z\mid s_0)$,
    \[
        J(s_0)\ \ge\ \frac{I(Z;S_+\mid S_0=s_0)}{c\sqrt{2\pi}},
        \qquad
        c\triangleq\sup_{x\ge0}\frac{\ln(1+x)}{\sqrt{x}}\approx0.8 .
    \]
\end{theorem}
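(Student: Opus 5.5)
Write $p_z \triangleq p^{\pie}(\cdot\mid s_0,z)$ and $\bar p \triangleq p^{\pie}(\cdot\mid s_0)$. The plan is to trade the supremum over skills for an average against a reward-dependent reweighting of the skill prior, so that $J$ becomes a Gaussian inner product, and then to compare that inner product with $I(Z;S_+\mid S_0=s_0)$ through a $\chi^2$-type quantity. The constant $c$ will come from a pointwise comparison of $\ln(1+x)$ with $\sqrt{x}$.

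\textbf{Step 1 (sup to average).} Fix any measurable weight $w_r(z)\ge 0$ with $\E_{p(z\mid s_0)}[w_r]=1$. Then
\[
\sup_z \langle p_z, r\rangle \ \ge\ \E_{p(z\mid s_0)}\bigl[w_r(z)\langle p_z,r\rangle\bigr]
= \sum_s \bar p(s)\,r(s)\,\E\bigl[w_r(Z)\,\rho(Z,s)\bigr],
\]
where $\rho(z,s)=p_z(s)/\bar p(s)$. For the sign structure, note that the constant term vanishes because $\E_r\langle \bar p,r\rangle=0$. So we may write $\langle p_z,r\rangle=\langle p_z-\bar p,r\rangle$ inside the expectation over $r$, and therefore
\[
J \ \ge\ \E_r \sup_z \langle p_z-\bar p, r\rangle .
\]

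\textbf{Step 2 (Gaussian expectation of a sup).} Set $X_z=\langle p_z-\bar p,r\rangle$. For each fixed $z$, $X_z$ is a centered Gaussian with variance
\[
\sigma_z^2=\sum_s (p_z(s)-\bar p(s))^2/b(s).
\]
Take two i.i.d. skills $Z,Z'\sim p(z\mid s_0)$. Using $\E\sup_z X_z\ge \E\max(X_Z,X_{Z'})$ with $Z,Z'$ independent of $r$, and the Gaussian identity $\E\max(X,Y)=\tfrac{1}{\sqrt{2\pi}}\sqrt{\E(X-Y)^2}$, we get
\[
J\ge \tfrac{1}{\sqrt{2\pi}}\,\E_{Z,Z'}\|p_Z-p_{Z'}\|_{b^{-1}} .
\]
Here $\|v\|_{b^{-1}}^2=\sum_s v(s)^2/b(s)$. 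A simpler route, which I would try first because it matches the $1/\sqrt{2\pi}$ exactly, conditions on $Z$ and uses $\E_r\max(X_Z,0)\ge\ldots$. More concretely, I would use $\E\sup_z X_z \ge \E_Z\E_r[X_Z\,\mathbf 1\{\cdot\}]$. The cleanest version is $\E_r\sup_z X_z\ge \E_Z \E_r |X_Z|/2$, which follows from symmetry and pairing with $\bar p$. This gives
\[
J\ge \E_Z\,\sigma_Z\cdot\tfrac{1}{\sqrt{2\pi}} .
\]

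\textbf{Step 3 (information to $\chi^2$).} For each $z$, bound
\[
\KL(p_z\|\bar p)\le \ln\bigl(1+\chi^2(p_z\|\bar p)\bigr)\le c\sqrt{\chi^2(p_z\|\bar p)} .
\]
The first inequality is Jensen's inequality applied to $\ln$. The second is the definition of $c$ with $x=\chi^2$. Averaging over $Z$ gives
\[
I\le c\,\E_Z\sqrt{\chi^2(p_Z\|\bar p)} .
\]
It remains to relate $\chi^2(p_z\|\bar p)=\sum_s (p_z-\bar p)^2/\bar p$ to $\sigma_z^2=\sum_s(p_z-\bar p)^2/b$.

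\textbf{Main obstacle.} The mismatch between weighting by $\bar p$ and weighting by $b$ is the crux. With $b$ uniform on a large $\cS$, we have $1/b\ge 1/\bar p$ only where $\bar p\ge b$, so the comparison fails pointwise. My first attempt would be to choose the Gaussian test direction differently: instead of the raw $r$, use the Cauchy–Schwarz dual pairing. That is, lower bound $\E_r\sup_z$ via the linear functional $r\mapsto \langle p_z-\bar p, r\rangle$ and compare against the $\chi^2$ normalized by $\sqrt{\bar p/b}$. If that also fails, I would replace $\chi^2(p_z\|\bar p)$ in Step 3 by the tighter bound $\KL(p_z\|\bar p)\le \ln(1+\chi^2(p_z\|q))$, valid for any $q$ after accounting for $\KL(\bar p\|q)\ge0$. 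Indeed $I=\E_Z\KL(p_Z\|q)-\KL(\bar p\|q)$, so $I\le\E_Z\KL(p_Z\|q)$. Taking $q=b$ gives
\[
I\le \E_Z\ln\bigl(1+\chi^2(p_Z\|b)\bigr)\le c\,\E_Z\sqrt{\chi^2(p_Z\|b)} .
\]
This is matched by Step 2 applied with $X_z=\langle p_z,r\rangle$, since the variance of $X_z$ is $\sum_s p_z^2/b=1+\chi^2(p_z\|b)$. That closing match, together with making Step 2's sup-versus-$|X|$ inequality rigorous, is the step I expect to require the most care.
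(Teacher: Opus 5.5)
Your Step 2 is correct and is essentially the paper's first step. Because $\sup_z\langle p_z-\bar p,r\rangle\ge 0$, pairing with $\bar p$ gives $J\ge\sigma_z/\sqrt{2\pi}$ for \emph{every} $z$. Keep it in that per-skill form. The averaged version $J\ge\E_Z\sigma_Z/\sqrt{2\pi}$ is too weak to meet the root-mean-square quantity $(\E_Z\sigma_Z^2)^{1/2}$ that the working argument below produces.

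The genuine gap is Step 3. Writing $I=\E_Z\KL(p_Z\|\bar p)$ and applying $\KL\le\ln(1+\chi^2)\le c\sqrt{\chi^2}$ skill by skill loses an order of magnitude; this is more than a weight mismatch between $\bar p$ and $b$. Take two equiprobable skills with $p_1=\delta_{s_1}$ and $p_2=(1-\eta)\delta_{s_1}+\eta\,\delta_{s_2}$. Then $\chi^2(p_z\|\bar p)\approx\eta/2$ for both skills, so your upper bound on $I$ is of order $\sqrt{\eta}$. But $I\approx\tfrac{\eta}{2}\ln 2$ and $J=\tfrac{\eta}{\sqrt{2\pi}}\sqrt{1/b(s_1)+1/b(s_2)}$ are both of order $\eta$. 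So no inequality $\E_Z\sqrt{\chi^2(p_Z\|\bar p)}\le C\,J$ can hold for any constant $C$, even one depending on $b$.

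Your $q=b$ fallback fails for a different reason. $\E_Z\KL(p_Z\|b)=I+\KL(\bar p\|b)$ contains a term unrelated to the spread of the skills. The proposed ``match'' with $\mathrm{Var}\langle p_z,r\rangle=1+\chi^2(p_z\|b)$ would require comparing $\sup_z\langle p_z,r\rangle$ with $0$, a return no skill guarantees. Concretely, with a single skill $J=0$, while $\chi^2(p\|b)>0$ whenever $p\neq b$.

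The missing idea is to decompose the information over \emph{states} rather than skills, as the paper does: $I=\sum_{s}\bar p(s)\,\KL\bigl(p(\cdot\mid s_0,s)\,\|\,p(\cdot\mid s_0)\bigr)$, the divergence of the skill posterior from the skill prior. The likelihood ratio of posterior to prior is $p_z(s)/\bar p(s)$. So this $\chi^2$ equals $V(s)/\bar p(s)^2$ with $V(s)=\E_Z\bigl[(p_Z(s)-\bar p(s))^2\bigr]$. After applying $\KL\le c\sqrt{\chi^2}$, the marginal weight cancels exactly, giving $I\le c\sum_s\sqrt{V(s)}$. Cauchy--Schwarz with $\sum_s b(s)=1$ then gives $\sum_s\sqrt{V(s)}\le\bigl(\sum_s V(s)/b(s)\bigr)^{1/2}=(\E_Z\sigma_Z^2)^{1/2}$. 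This is exactly what the per-skill bound $J\ge\sigma_z/\sqrt{2\pi}$ delivers after squaring and averaging over $z$. In the example above it yields $\sum_s\sqrt{V(s)}=\eta$, the correct order.
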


We give the formal statement and proof in \cref{thm:width_linear_sharp} (\cref{app:proof_adaptation}), where $J(s_0)$ turns out to be the \gls{gaussian_width} of the skill offsets $\{u_z\}_{z\in\cZ}$ (\cref{eq:J_width}). Intuitively, for the simple $5 \times 5$ grid (\cref{fig:eff_emp_5x5}), the central maximal potential empowerment state is \textit{exactly} the state from which an agent can easily adapt to the left/right/upper/lower rewarding states in the grid. Thus, maximizing potential empowerment provides a new form of provable adaptation where agents navigate to central states {\it from which} they can later adapt, which existing skill-learning or effective empowerment methods do not capture on their own~\citep{gregor2017variational}.

Two didactic tabular experiments demonstrate the behavior of the bound across \gls{mdp} parameters.
The first set of \gls{mdp}s varies the degree of commitment within an \gls{mdp}, which measures how much probability mass an action places on a target state above uniform.
Commitment measures the controllability of the \gls{mdp}. As controllability increases, the bound tightens (\cref{fig:kl_width_J_quality}).
The second set of \gls{mdp}s varies the number of possible distinct skills/actions while holding commitment fixed.
As the number of possible skills grows, the bound loosens (\cref{fig:kl_width_J_vs_bound}).
The bound is therefore most informative in \gls{mdp}s where an agent exerts strong control over comparatively few distinguishable regions.
Full experimental details are in \cref{app:proof_adaptation}.

\paragraph{The First Limitation of Empowered Adaptation: Gaussian widths versus KL divergence.} While the above result demonstrates that potential empowerment lower bounds adaptation in discrete MDPs, there remain open questions: (1) is the bound tight? and, correspondingly, (2) does the bound extend to continuous state geometries? The answer to both of these questions is generally no: without additional assumptions, the geometry of \textit{reward adaptation} and that of information geometry are different.

First, we must define the geometry of reward adaptation.
\emph{\gls{gaussian_width}} measures the spread of the skills' future occupancies and governs the adaptation geometry~\citep{polyanskiy2022information}.
Because the cumulative reward is \emph{linear} in the occupancy, the width is large when different skills reach futures with clearly different values (\cref{fig:adaptation_geometry}, (b,e)).
This quantity differs from \gls{mi}, where distances are given by the Kullback--Leibler divergence.

\textbf{Our first observation is that the adaptation result is a bound, not a ranking result.}
Consider the simple example of two starting states in a discrete MDP (\cref{fig:adaptation_geometry}, (a) and (d)).
The first starting state in (a) can reach two distinct absorbing states.
The second starting state (d), on the other hand, can reach three states.
Under an isotropic reward prior, an agent optimizing adaptation should choose the second starting state (\cref{fig:adaptation_geometry}, (e)) according to the Gaussian-width geometry that characterizes reward adaptation. Meanwhile, an agent optimizing the information radius will choose the first starting state (\cref{fig:adaptation_geometry}, (c)). This discrepancy is a direct consequence of the difference between information and reward adaptation geometry. Thus, while the lower bound holds, the bound does not guarantee a ranking result with provable counterexamples.

\begin{figure}
    \centering
    \includegraphics[width=0.7\linewidth]{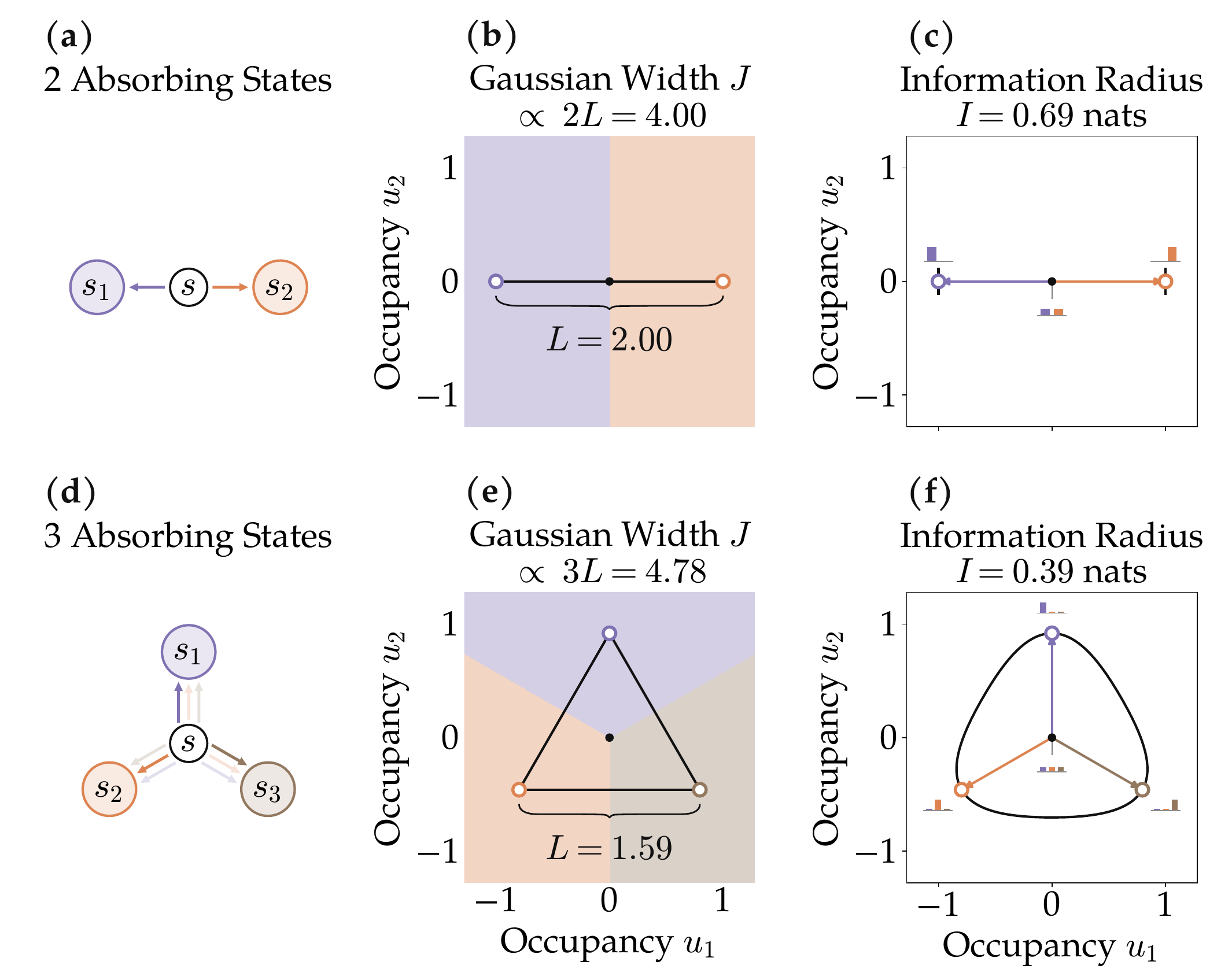}
    \caption{%
        \textbf{Discrepancy between information geometry and reward adaptation geometry in discrete settings.} (a) By going left/right from $s$, an agent can reach node \textcolor{figfivestateone}{$s_1$} or \textcolor{figfivestatetwo}{$s_2$} with high likelihood, so there are two easily separable skills from $s$.
        (d) Targeting nodes \textcolor{figfivestateone}{$s_1$}, \textcolor{figfivestatetwo}{$s_2$}, or \textcolor{figfivestatethree}{$s_3$} can make the agent inadvertently land in other nodes, so there are three somewhat separable skills from $s$.
        (b,e) The \gls{mdp}s in (a) and (d) admit polytopes of feasible state occupancies in the simplex, visualized by the segment and triangle.
        (b) Axis $u_1$ trades off occupancy of \textcolor{figfivestateone}{$s_1$} vs.
        \textcolor{figfivestatetwo}{$s_2$}.
        (e) Axis $u_1$ trades off occupying \textcolor{figfivestatetwo}{$s_2$} vs.
        \textcolor{figfivestatethree}{$s_3$}; axis $u_2$ trades off occupying \textcolor{figfivestateone}{$s_1$} vs.
        $\textcolor{figfivestatetwo}{s_2} / \textcolor{figfivestatethree}{s_3}$.
        Return is linear in the state occupancy, so we can also represent reward functions as vectors in state-occupancy space corresponding to the reward weights.
        The color indicates the best adapted skill for that reward coordinate.
        The polytope in (e) has a larger Gaussian Width than that in (b), supporting better adaptation.
        (c,f) However, increased skill separability in (a) gives larger information radius and \gls{mi}, where the black contour indicates the maximal \gls{mi}.
        Thus, even though empowerment is higher for the 2 state system (a) versus the 3 state system (d), the 2 state system has worse adapted returns.
        See \cref{app:adaptation_geometry_exps} for details.%
    }
    \label{fig:adaptation_geometry}
\end{figure}

In the continuous state limit, the difference between empowerment and adaptation geometry becomes more salient:
\begin{theorem}[Informal statement that large \gls{mi} does not necessarily imply adaptation to smooth rewards in continuous state \gls{mdp}s]
    For every skill count $K$ and $\epsilon$, there exists a continuous state MDP admitting $K$ distinct skills that induce \gls{dsom}s within an $\epsilon$-ball in arbitrary continuous state space.

    Taking $K \to \infty$, the \gls{mdp} admits a set of skills that induce unbounded empowerment while adaptation to a Lipschitz-continuous reward can be arbitrarily small.
\end{theorem}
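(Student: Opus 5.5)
The plan is an explicit construction in which the \gls{dsom}s of different skills have pairwise disjoint supports but all sit inside a tiny region of state space. Mutual information only measures whether the skills are distinguishable, and disjoint supports make them perfectly distinguishable at any spatial scale. A Lipschitz reward, by contrast, only sees spatial displacement, so it cannot tell the skills apart.

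Take $\cS=[0,1]^n$, or any continuous space containing a small Euclidean ball $B_\epsilon(x_0)$, and pick $K$ pairwise disjoint sub-balls $B_1,\dots,B_K\subset B_\epsilon(x_0)$. The MDP has a start state $s_0=x_0$ and $K$ actions. Action $k$ teleports the agent to a point drawn uniformly from $B_k$, and $B_k$ is absorbing under every action, so the agent either stays in $B_k$ or moves within it. Let skill $z\in\cZ$ be mapped, through a measurable partition of $\cZ$ into $K$ cells of equal uniform measure, to the policy that plays action $k$ at $s_0$. This relabeling is the one justified by \cref{lem:uniform_continuous}. The \gls{dsom} of skill $k$ is then
\[
p_k=(1-\gamma)\,\delta_{x_0}+\gamma\,\mu_k,
\]
where $\mu_k$ is a distribution supported on $B_k$. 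All of these \gls{dsom}s are supported in $B_\epsilon(x_0)$. If the statement is meant to say ``$\epsilon$-ball'' in Wasserstein-1 distance, every $p_k$ is within $2\epsilon$ of $\delta_{x_0}$. Rescaling $\epsilon$ handles the constant.

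Next I compute the information. The $\delta_{x_0}$ component is shared by all skills, while the $\mu_k$ have disjoint supports, so
\[
I(Z;S_+\mid S_0=x_0)=\gamma\log K .
\]
The cleanest way to see this is to condition on whether $S_+=x_0$: that event carries no information about $Z$, and given $S_+\neq x_0$ the outcome identifies $k$ exactly. To remove the factor $\gamma$, I can instead let $s_0$ itself be an arbitrarily small initial hop; the scaling in $\gamma$ does not matter for the claim anyway. By \cref{def:pot}, $\gEp(x_0)\ge \gamma\log K\to\infty$ as $K\to\infty$. Because the balls can be chosen for every $K$ inside the same $B_\epsilon(x_0)$, this gives unbounded empowerment at fixed $\epsilon$. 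If a single MDP is wanted, I would use countably many disjoint balls accumulating inside $B_\epsilon$ and countably many actions.

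For the adaptation side, let $r$ be $L$-Lipschitz and define the adaptation gain as
\[
\sup_z \E_{p_z}[r]-\E_{\bar p}[r],
\]
that is, the best skill minus the skill-marginal baseline, or equivalently minus $r(x_0)$. Every point of every support lies within $\epsilon$ of $x_0$, so $|\E_{p_z}[r]-r(x_0)|\le L\epsilon$ for all $z$, uniformly in $K$. Hence the gain is at most $2L\epsilon$ for every $K$. Sending $\epsilon\to0$ makes it arbitrarily small while the \gls{mi} stays $\gamma\log K$. This is the continuous analogue of the gap between Gaussian width and KL geometry discussed before the statement.

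The main obstacle is fixing what ``adaptation'' means so that the claim is not trivial. With a random Lipschitz reward prior one would instead bound $\E_r$ of the supremum, and I would bound it by the same $L\epsilon$ pointwise argument, which avoids any chaining estimate. The second thing to check is measurability and well-posedness of the continuous \gls{mdp} with $K\to\infty$ actions, which the teleport-and-absorb construction makes routine.
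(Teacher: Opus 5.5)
Your construction is correct and is essentially the paper's proof of \cref{prop:admits_all_k,prop:mi_lipschitz_collapse}. In both, a $(1-\gamma)$ atom at the start state is shared by all skills and each skill sends its remaining $\gamma$ mass to disjoint targets inside the $\epsilon$-ball, which gives $I(Z;S_+\mid S_0)=\gamma\log K$, and the triangle inequality caps Lipschitz return gaps at $2\gamma L\epsilon$. The differences are minor: the paper uses Dirac point targets $s_z$, with a continuum $z\mapsto\phi(z)$ for the unbounded case, where you use diffuse sub-ball targets. One aside should go: no ``initial hop'' can remove the factor $\gamma$, because the $t=0$ term of every occupancy is the same atom $(1-\gamma)\delta_{s_0}$; as you note, though, the claim does not need it.
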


We give the formal statements and proofs in \cref{prop:admits_all_k,prop:mi_lipschitz_collapse}. Our result shows that skills can have unbounded distinguishability but fail to adapt to smooth rewards, inducing identical returns across the skill set \cref{fig:mi_ball_collapse} illustrates how information geometries can lead to collapsed returns in a continuous 2D state space, where the potential empowerment grows from left to right while returns homogenize. Thus, larger potential empowerment does not imply better adaptation to smooth rewards in continuous state \gls{mdp}s. We fully discuss this collapse in \cref{app:proof_continuous_collapse}.
\begin{figure}
    \centering
    \includegraphics[width=0.7\linewidth]{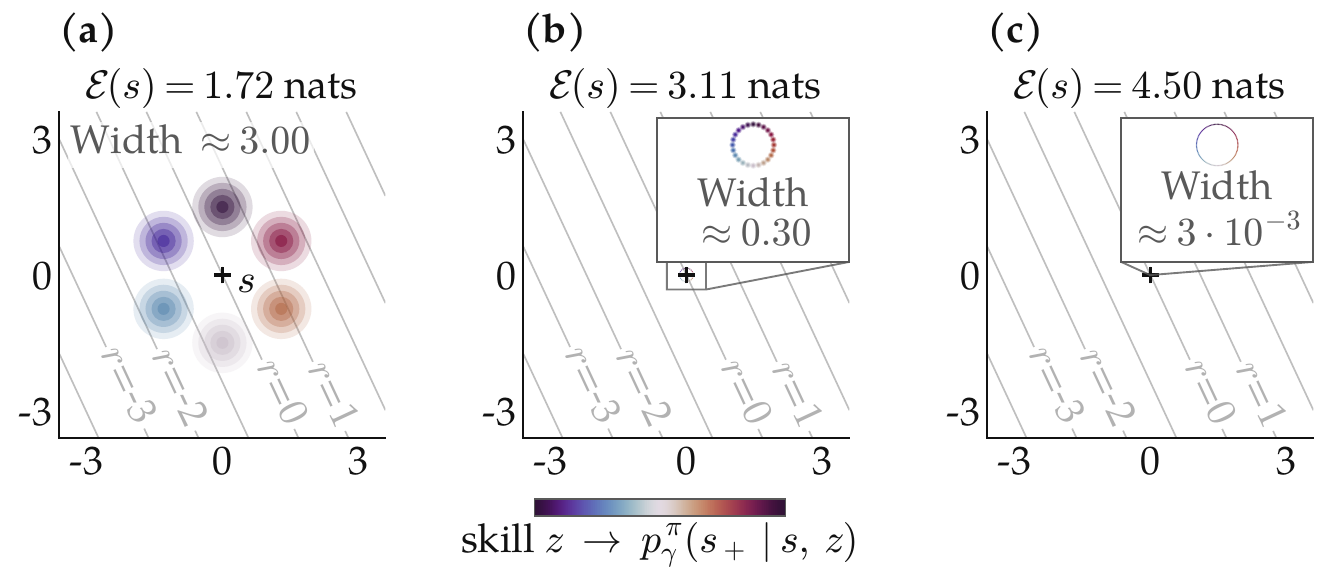}
    \caption{\textbf{The information ball can collapse in the continuous limit} (\cref{prop:mi_lipschitz_collapse})\textbf{.} Consider an arbitrary smooth reward function (level curves in gray) and some \gls{mdp} that admits Gaussian skill state occupancy measures (colored distributions). While the potential empowerment of the starting state $s \in \cS$ increases from subfigure (a) $\rightarrow$ (c), the skills have increasingly identical returns and, thus, provide progressively smaller adaptation benefit. The plotted width is the maximum distance between skill means, not to be confused with Gaussian Width. This width bounds the differences in expected return for the smooth reward and decreases from left to right. Therefore, large channel capacities do not imply adaptation for continuous \gls{mdp}s and reward functions. See \cref{prop:mi_lipschitz_collapse} for the theoretical result.}
    \label{fig:mi_ball_collapse}
\end{figure}

\paragraph{The Second Limitation of Empowered Adaptation: Controllable Degrees of Freedom versus Rewarding Degrees of Freedom.} The second limitation of empowered adaptation is the misalignment between controllable degrees of freedom and preferred degrees of freedom. While the adaptation result of \cref{thm:width_linear_sharp} requires an isotropic reward prior, naive empowerment maximization cannot generically provide adaptation guarantees for anisotropic reward priors. Without external information, empowerment can only break symmetry across states via controllability.
Then, the reward prior is free to vary in directions in the state occupancy simplex that empowerment does not naturally select:

\begin{theorem}[Informal statement that large \gls{mi} need not imply adaptation to anisotropic reward function priors in discrete settings]
    A state can have arbitrarily large \gls{mi} while adaptation with respect to an anisotropic reward prior is arbitrarily close to zero.
\end{theorem}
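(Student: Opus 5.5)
We prove this by an explicit tabular construction in which the controllable directions of the occupancy simplex are orthogonal to the directions in which the reward prior has variance.

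\textbf{Construction.} Take a start state $s_0$ and $2K$ absorbing states, grouped into $K$ pairs $\{s_k^a, s_k^b\}_{k=1}^K$. From $s_0$ the agent picks an action $k \in \{1,\dots,K\}$. The environment then moves it to $s_k^a$ or $s_k^b$ with probability $1/2$ each, independently of the agent. Skill $z_k$ commits to action $k$, so its \gls{dsom} is
\[
p^{\pie}(\cdot \mid s_0, z_k) = (1-\gamma)\,\delta_{s_0} + \tfrac{\gamma}{2}\bigl(\delta_{s_k^a} + \delta_{s_k^b}\bigr).
\]
These occupancies have pairwise disjoint supports apart from the shared mass on $s_0$. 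Under a uniform source over the $K$ skills (realizable with a uniform continuous source by \cref{lem:uniform_continuous}), the mutual information is $I(Z;S_+ \mid S_0=s_0) = \gamma \log K$. It tends to infinity as $K \to \infty$. By \cref{thm:effective_info_geom} this is also a lower bound on $\gEp(s_0)$, so the state is arbitrarily empowered.

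\textbf{Anisotropic prior.} Choose a zero-mean Gaussian reward prior whose covariance puts variance only on the within-pair differences. Concretely, set
\[
r(s_k^a) = \xi_k + \sigma \eta_k, \qquad r(s_k^b) = \xi_k - \sigma \eta_k,
\]
where the $\eta_k$ are i.i.d.\ standard normal and the $\xi_k$ are small. One option is $\xi_k \sim \cN(0, \epsilon^2)$; the degenerate choice $\xi_k = 0$ also works. Set $r(s_0) = 0$. The return of skill $k$ is then $\gamma \xi_k$, since the $\eta_k$ terms cancel exactly under the symmetric split.

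\textbf{Bounding the adaptation.} The linear adaptation objective is
\[
J(s_0) = \gamma\,\E\Bigl[\max_{k} \xi_k\Bigr] \le \gamma\,\epsilon\sqrt{2\log K}.
\]
Taking $\epsilon = 1/(K\sqrt{\log K})$, or simply $\epsilon = 0$, sends $J(s_0) \to 0$ while the mutual information grows like $\log K$. To keep the prior non-degenerate and genuinely anisotropic, the large variance $\sigma^2$ sits on directions the agent cannot control. The reward prior therefore has large total variance, but empowerment cannot exploit it. This matches the intuition that empowerment breaks symmetry only along controllable directions.

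\textbf{Remaining checks and main obstacle.} It remains to confirm two things. First, no other policy extracts more reward: any policy's occupancy is a mixture over actions $k$, so the supremum over skills already covers all policies. Second, a stronger version in which $J$ equals zero exactly, as in \cref{prop:anisotropic_collapse}, follows from the degenerate choice $\xi \equiv 0$.

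The main obstacle is deciding what the formal statement takes as the comparison class. If $J$ must be computed for the optimal potential-empowerment skills, those skills must be shown to be exactly the $K$ commit-actions. That holds because the channel capacity of this MDP is achieved by the uniform source over the $K$ actions. A stochastic mixing policy only merges pairs of occupancies, which lowers the mutual information by the data-processing inequality.
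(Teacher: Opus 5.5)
Your argument is correct. It uses the same mechanism as the paper's proof of \cref{prop:anisotropic_collapse}: place all prior variance on directions orthogonal to every skill offset $u_z$, so each return gap $\langle g,u_z\rangle$ has zero variance. The difference is the construction.

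The paper reuses the deterministic $K$-target MDP of \cref{prop:admits_all_k} and simply counts dimensions. The $K$ offsets sum to zero, so they span at most $K-1$ of the $K+1$ whitened coordinates. Any unit $v$ in the complement then gives a rank-one prior $\Sigma=T\,vv^\top$ with $J^\Sigma(s_0)=0$, and the trace budget holds automatically. This is shorter and needs no new MDP. However, in that MDP the complement is exactly spanned by the start-state coordinate and the all-targets-equal direction. The resulting prior therefore either randomizes only the start-state reward or shifts every target by a common amount, so it never distinguishes the states the skills reach.

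Your uncontrollable within-pair coin flip instead creates a $K$-dimensional uncontrollable subspace that touches every target. Your prior thus gives every reachable state large reward variance and still yields zero adaptation. That is a more convincing picture of controllable versus rewarding degrees of freedom. Your $\xi_k\sim\cN(0,\epsilon^2)$ variant also shows $J\to0$ continuously when the controllable directions receive vanishing variance, which is consistent with the $\sqrt{\lambda_{\min}(\Sigma)}$ scaling of \cref{prop:anisotropic_degraded}.

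Three small points follow.

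\begin{itemize}
\item To match the formal fixed power budget, take $\xi\equiv0$ and $\sigma^2=T(2K+1)/(2K)$ under the uniform baseline on the $2K+1$ states. The whitened covariance then has trace $T$.
\item Your ``main obstacle'' is not one. With $\xi\equiv0$, every occupancy reachable from $s_0$ is a mixture of the commit-action occupancies. Each such mixture earns return exactly $0$ under every reward in the prior's support, so $J=0$ for every skill set, capacity-achieving or not. The loosely phrased data-processing argument is therefore unnecessary.
\item The bound $\gamma\log K$ lower-bounds the potential empowerment directly by its definition as a maximum; you do not need \cref{thm:effective_info_geom} for it.
\end{itemize}
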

The formal statements and proofs are in \cref{app:anisotropic} (\cref{prop:anisotropic_degraded,prop:anisotropic_collapse}). We can reinterpret \cref{fig:adaptation_geometry} as an example where the maximizing \gls{mi} selects for skills that do not control degrees of freedom relevant to the anisotropic return: if the variation in an anisotropic reward prior concentrates on three absorbing states in Fig. 5(d), an empowerment-maximizing agent will ignore those reward-relevant states and, instead, navigate towards the state with maximal channel capacity. Therefore, the skill set is suboptimal for adaptation to the anisotropic reward prior.

\section{Discussion, Limitations, \& Future Work}
\label{sec:disc}
In this work, we analyze the geometry of empowerment. Our theoretical results show that high potential empowerment states are central states in information geometry (\cref{thm:effective_info_geom}). The empowerment objective and a multistep skill rollout variant also admit exact hitting time interpretations and capture centrality in a temporal distance geometry (\cref{lem:bottleneck,prop:rollout_temporal_distance}). Thus, policies that maximize potential empowerment naturally seek out central states within an \gls{mdp}.  Furthermore, effective empowerment policies, or skills, measure the degree of centrality (\cref{fig:eff_emp_5x5}). Finally, in the tabular setting, this notion of centrality achieved by a set of skills directly lower bounds an agent's ability to adapt to unknown rewards (sampled from an uninformed prior). Our results formally link empowerment and the ability to complete downstream tasks (\cref{thm:width_linear_sharp}).

\paragraph{Limitations \& Future Work.} That being said, while the above results provide support for empowerment as a generic objective, information geometry and return geometries are fundamentally different: reward adaptation geometry is typically that of Gaussian widths, while information geometry is that of KL-divergences. This observation has a few downstream implications. First, a skill set that obtains a larger \gls{mi} does not necessarily exhibit better adaptation; our adaptation result is a lower bound, not a ranking result (\cref{fig:adaptation_geometry}). Second, large empowerment does not imply adaptation in continuous \gls{mdp}s with Lipschitz-continuous rewards: there exist \gls{mdp}s where skill sets can obtain unbounded empowerment, while providing no adaptation benefit to a continuous reward function (\cref{prop:mi_lipschitz_collapse,fig:mi_ball_collapse}). Third, empowerment \emph{alone} does not provide theoretical guarantees on downstream adaptation to anisotropic reward priors (\cref{prop:anisotropic_degraded}) that prefer certain states over others.

Thus, while empowerment has attractive properties in addition to a straightforward information-theoretic interpretation, there exist theoretical gaps between the empowerment object and downstream task adaptation. Our work provides a partial answer to Salge's hypothesis~\citep{salge2014empowerment}:
while empowerment, a task-agnostic quantity, does provably lower bound goal-oriented behavior in tabular settings, this hypothesis is not generally true for current formulations of empowerment maximization and return maximization. Possible future directions include developing scalable formulations of continuous empowerment maximization studied in prior works~\citep{jung2011empowerment, tiomkin2017control, zhao2021efficientempowermentestimationunsupervised} and modifying recent scalable skill-learning algorithms for empowerment estimation ~\citep{levy2025representation, zheng2025misl, park2023metra}.

Until we consider exploration and adaptation methods that appropriately unify these geometries, there will remain a disconnect between such intrinsic reward approaches and the broader RL problem setting. Indeed, our analysis can generically apply to any method that combines extrinsic and information-theoretic intrinsic rewards~\citep{houthooft2017vime}. We believe that scalably addressing these limitations is key in order to make information-based intrinsic rewards, including empowerment, relevant to the broader RL community and is the natural next step for future work.

\paragraph{Acknowledgments.} We would like to thank David Abel, André Barreto, Mahsa Bastankhah, Raphaël Bauer, Max Kleiman-Weiner, Andrew Levy, Sikata Sangupta, and anonymous reviewers for their insight and helpful feedback. This material is based upon work supported by Google and the National Science Foundation Graduate Research Fellowship Program under Grant No. DGE-2444107 and Award No. 2441665. Any opinions, findings, and conclusions or recommendations expressed in this material are those of the author(s) and do not necessarily reflect the views of the National Science Foundation. We credit Font Awesome (Fonticons, Inc.) for the icons in \cref{fig:eff_emp_5x5,fig:example-center,fig:pot_emp}, adapted under \href{https://creativecommons.org/licenses/by/4.0/}{CC BY 4.0}.

\paragraph{Generative AI Disclosure Statement.} We acknowledge the use of generative AI tools for figure generation, setting up the paper website, code implementation for the tabular experiments, and minor edits to improve readability.

\appendix
\onecolumn
\printglossaries

\section{Preliminaries for Proofs}
\label{app:proofs}

We restate four standard results in forms used in our statements. We cite referenced restatements of these results, which are more general versions of the results below.

The first result gives the standard concavity and convexity properties
of mutual information
\citep[Thm.~2.7.4]{cover2006elements}
\citep[Thm.~5.3]{polyanskiy2022information}.

\begin{theorem}[Concavity and convexity of mutual information]
    \label{thm:mi_convexity}
    Let $P_X$ be a source distribution and let $P_{Y\mid X}$ be a channel
    between standard Borel spaces. For a fixed channel,
    $I(X;Y)$ is concave in $P_X$. For a fixed source, $I(X;Y)$ is convex
    in $P_{Y\mid X}$.
\end{theorem}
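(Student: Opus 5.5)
The statement is the standard concavity/convexity theorem for mutual information, so I will prove both halves directly from the KL representation
\[
I(X;Y)=\E_{P_X}\,\KL\bigl(P_{Y\mid X}\,\|\,P_Y\bigr),\qquad P_Y=\textstyle\int P_{Y\mid X}\,dP_X .
\]
The only tools needed are two properties of relative entropy on standard Borel spaces: joint convexity of $\KL$, and the golden formula. Standard Borel spaces guarantee that regular conditional distributions exist, so every quantity below is well defined; infinite values are handled by the usual conventions.

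\textbf{Concavity in the source.} Fix the channel $P_{Y\mid X}$. For any auxiliary distribution $Q_Y$, the golden formula gives
\[
I(X;Y)=\E_{P_X}\KL\bigl(P_{Y\mid X}\|Q_Y\bigr)-\KL\bigl(P_Y\|Q_Y\bigr).
\]
\begin{itemize}
\item Take a mixture $P_X=\lambda P_X^0+(1-\lambda)P_X^1$. Its output marginal is $P_Y=\lambda P_Y^0+(1-\lambda)P_Y^1$.
\item Write $I(P_X)=\E_{P_X}\KL(P_{Y\mid X}\|P_Y)$. This expectation splits linearly as $\lambda\,\E_{P_X^0}[\cdot]+(1-\lambda)\,\E_{P_X^1}[\cdot]$.
\item For the $j$-th term, apply the golden formula with the mismatched reference $Q_Y=P_Y$:
\[
\E_{P_X^j}\KL(P_{Y\mid X}\|P_Y)=I(P_X^j)+\KL(P_Y^j\|P_Y).
\]
\item Since $\KL\ge 0$, this yields $I(P_X)\ge\lambda I(P_X^0)+(1-\lambda)I(P_X^1)$.
\end{itemize}
If some $I(P_X^j)$ is infinite, the same inequality holds in the extended reals.

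\textbf{Convexity in the channel.} Fix the source $P_X$. Let $K=\lambda K_0+(1-\lambda)K_1$ be a mixture of two channels.
\begin{itemize}
\item Both the joint law $P_X\otimes K$ and the product $P_X\otimes(P_XK)$ are the corresponding $\lambda$-mixtures of $P_X\otimes K_j$ and $P_X\otimes(P_XK_j)$.
\item Mutual information is the KL divergence between these: $I=\KL\bigl(P_X\otimes K\,\|\,P_X\otimes P_XK\bigr)$.
\item Joint convexity of $\KL$ in the pair of arguments then gives $I(K)\le\lambda I(K_0)+(1-\lambda)I(K_1)$ immediately.
\end{itemize}

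\textbf{Main obstacle.} The only delicate point is measure-theoretic: justifying the golden formula and joint convexity on general standard Borel spaces, including the cases where the KL terms are infinite. I would handle this through the Donsker--Varadhan variational form. In that form, $\KL$ is a supremum of functionals that are affine in the pair of measures, and a supremum of affine functionals is jointly convex. Alternatively, I would simply cite \citep[Thm.~5.3]{polyanskiy2022information} for these facts.
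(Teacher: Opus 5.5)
Your proof is correct. The paper gives no argument of its own for this statement: it quotes it as a standard fact, citing \citet[Thm.~2.7.4]{cover2006elements} and \citet[Thm.~5.3]{polyanskiy2022information}, and your write-up is essentially the standard textbook proof. The channel half is exactly the joint-convexity-of-$\KL$ argument, and doing the source half through the golden formula (rather than the finite-alphabet decomposition $I(X;Y)=H(Y)-H(Y\mid X)$) is what makes it valid for standard Borel spaces, with infinite values handled automatically in $[0,\infty]$.

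One small slip in your last paragraph: the Donsker--Varadhan functional $\E_P f-\log\E_Q e^{f}$ is affine in $P$ but only convex in $Q$. This still gives joint convexity of $\KL$, as a supremum of jointly convex maps. If you want functionals that are genuinely affine in the pair, use $\KL(P\|Q)=\sup_f\{\E_P f-\E_Q e^{f-1}\}$.
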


In our setting, the source is the skill prior
$p(z\mid s_0)$ and the channel maps each skill $z$ to a \gls{dsom}
$p^\pi(\dmy{s_+}\mid s_0,z)$. Thus, the \gls{mi}
$I^\pi(Z;S_+\mid S_0=s_0)$ is concave in the skill prior for fixed
skill occupancies and convex in the occupancies for a fixed skill
prior.

The second result expresses the channel capacity in information
radius form as the smallest worst-case divergence from the channel outputs to a central distribution
\citep[Sec.~5.3]{polyanskiy2022information}.

\begin{theorem}[Dual form of channel capacity]
    \label{thm:capacity_information_radius}
    Let $P_{Y\mid X}$ be a channel between standard Borel spaces $\cX$ and
    $\cY$, and write $P_{Y\mid X=x}$ for the output distribution at input
    $x\in\cX$. Then
    \[
        \sup_{P_X\in\Delta(\cX)} I(X;Y)
        =
        \inf_{q(\dmy y)}
        \sup_{x\in\cX}
        \KL\bigl(P_{Y\mid X=x}(\dmy y)\,\|\,q(\dmy y)\bigr).
    \]
    If this value is finite, the infimum is attained at a unique center
    $q^\star$.
\end{theorem}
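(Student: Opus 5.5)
The plan is to prove the two inequalities separately. Uniqueness of the center then falls out of the same estimate used for the hard direction. The main tool is the ``golden formula'': for any source $P_X$ with output marginal $P_Y$ and any $q$ with $I(X;Y)<\infty$,
\[
    I(X;Y) \;=\; \E_{P_X}\bigl[\KL(P_{Y\mid X=x}\,\|\,q)\bigr] \;-\; \KL(P_Y\,\|\,q).
\]
This identity follows by adding and subtracting $\log q$ inside the definition of $I(X;Y)=\E_{P_X}\KL(P_{Y\mid X=x}\|P_Y)$. When the right-hand terms are infinite, the inequality version below still holds by a standard truncation argument.

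\textbf{Easy direction.} Since $\KL(P_Y\|q)\ge 0$, the golden formula gives
\[
    I(X;Y)\le \E_{P_X}\KL(P_{Y\mid X=x}\|q)\le \sup_{x}\KL(P_{Y\mid X=x}\|q)
\]
for every $P_X$ and every $q$. Taking $\sup_{P_X}$ and then $\inf_q$ yields $C\triangleq\sup_{P_X}I\le \inf_q\sup_x \KL$.

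\textbf{Hard direction.} I want to avoid a general minimax theorem, since the space of $q$ is not compact. Instead I would reduce to finite input alphabets. Assume $C<\infty$, since otherwise there is nothing to prove. For each finite $A\subseteq\cX$, the subchannel restricted to $A$ has a compact simplex of sources, so its capacity $C_A\le C$ is attained by some $P_A$ with output $q_A$. The finite-dimensional KKT (Kuhn--Tucker) conditions for channel capacity, which follow from concavity in \cref{thm:mi_convexity}, give $\KL(P_{Y\mid X=x}\|q_A)\le C_A$ for all $x\in A$.

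The key estimate compares two such centers. For finite $A\subseteq B$, apply the golden formula with source $P_A$ and center $q_B$:
\[
    C_A=\E_{P_A}\KL(P_{Y\mid X=x}\|q_B)-\KL(q_A\|q_B)\le C_B-\KL(q_A\|q_B).
\]
Hence $\KL(q_A\|q_B)\le C_B-C_A$. The finite subsets are directed by inclusion, and $C_A\uparrow C$ along this net because any source can be approximated by finitely supported ones, using lower semicontinuity of $I$. Pinsker's inequality then shows that $(q_A)_A$ is Cauchy in total variation, so it converges to some $q^\star$. Fix $x$. For all $A\ni x$ we have $\KL(P_{Y\mid X=x}\|q_A)\le C_A\le C$, and lower semicontinuity of $\KL$ under setwise/TV convergence gives $\KL(P_{Y\mid X=x}\|q^\star)\le C$. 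Therefore $\inf_q\sup_x\KL\le C$, the infimum is attained at $q^\star$, and equality holds.

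\textbf{Uniqueness.} Let $q$ be any center with $\sup_x \KL(P_{Y\mid X=x}\|q)=C$. Running the golden formula with source $P_A$ and center $q$ gives
\[
    C_A\le \E_{P_A}\KL(P_{Y\mid X=x}\|q)-\KL(q_A\|q)\le C-\KL(q_A\|q),
\]
so $\KL(q_A\|q)\le C-C_A\to 0$. Thus $q_A\to q$ in total variation, and since limits in total variation are unique, $q=q^\star$.

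The main obstacle I expect is the measure-theoretic bookkeeping in the general standard Borel setting. Two points need care: justifying $C_A\uparrow C$ (approximation of arbitrary sources by finitely supported ones), and using the lower semicontinuity of $\KL$ in its second argument correctly along a net rather than a sequence. If nets cause trouble, I would first extract a countable increasing sequence of finite sets $A_n$ with $C_{A_n}\to C$. The same estimate makes $(q_{A_n})_n$ Cauchy. To pass from $x\in\bigcup_n A_n$ to an arbitrary $x$, I would apply the bound to the enlarged sets $A_n\cup\{x\}$: their centers are within $\sqrt{2(C-C_{A_n})}$ of $q_{A_n}$ in total variation by Pinsker, so they converge to the same limit $q^\star$.
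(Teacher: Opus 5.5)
The paper does not prove this statement; it imports it as a standard fact from Polyanskiy--Wu (Sec.~5.3), so there is no in-paper argument to compare yours with. Your proposal is correct. It is essentially the standard Kemperman-type proof of this result:
\begin{itemize}
\item finite-dimensional subproblems whose capacity is attained;
\item the saddle-point/KKT bound $\KL(P_{Y\mid X=x}\|q_A)\le C_A$ on each subproblem;
\item the nested estimate $\KL(q_A\|q_B)\le C_B-C_A$ from the golden formula;
\item a total-variation limit via Pinsker;
\item lower semicontinuity of $\KL$ to transfer the bound to $q^\star$.
\end{itemize}
One variant builds the subproblems as convex hulls of a maximizing sequence of sources. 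That makes $I_n\to C$ automatic and extends directly to convex constraint sets of inputs. Your choice of Dirac masses on finite $A\subseteq\cX$ instead keeps everything at the level of input points, which is exactly the form of the statement. The cost is that you must relate $\sup_A C_A$ to $C$.

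That relation is the one shaky step as written. You justify $C_A\uparrow C$ by approximating sources with finitely supported ones plus lower semicontinuity of $I$. This does not work off the shelf: for a merely measurable kernel, $P_X\mapsto I(X;Y)$ is not weakly lower semicontinuous. For example, take $P_{Y\mid X=x}=\delta_{\mathbf{1}[x\in\mathbb{Q}]}$ on $[0,1]$. The sources $\tfrac12\delta_0+\tfrac12\delta_{r_n}$ with rational $r_n\to 1/\sqrt2$ all have $I=0$, yet they converge weakly to a source with $I=\log 2$.

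You do not need this step at all. Run the argument with $C'\triangleq\sup_A C_A\le C$, which is the limit of the monotone net $(C_A)$; if $C'=\infty$ then $C=\infty$ and there is nothing to prove.
\begin{itemize}
\item The Cauchy estimate only uses $C_B-C_A\le C'-C_A$.
\item The construction gives $\KL(P_{Y\mid X=x}\|q^\star)\le C'$ for every $x$.
\item Your easy direction then yields $C\le\sup_x\KL(P_{Y\mid X=x}\|q^\star)\le C'\le C$.
\end{itemize}
So $C'=C$ comes for free, and it is exactly what your uniqueness step needs.

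Your other concern, about nets, is harmless. $\KL(P\|\cdot)$ is a supremum of TV-continuous functionals (Donsker--Varadhan over bounded measurable test functions, or the finite-partition formula). It is therefore lower semicontinuous along nets, and the countable-sequence fallback is optional.
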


In our setting, the input is the skill, $X=Z$, and the output is the
future state, $Y=S_+$, so each output distribution $P_{Y\mid X=x}$ is a
skill occupancy $p^\pi(\dmy{s_+}\mid s_0,z)$. We utilize this dual form to prove
\cref{thm:effective_info_geom_general}.

The third result compares KL and $\chi^2$ divergence
\citep[Eq.~(7.31)]{polyanskiy2022information};
see also \citet{gibbs2002choosing} for the general statement.
\begin{theorem}[KL and $\chi^2$ comparison]
    \label{thm:kl_chi2}
    For any distributions $P\ll Q$,
    \[
        \KL(P\|Q)
        \le
        \log\bigl(1+\chi^2(P\|Q)\bigr).
    \]
\end{theorem}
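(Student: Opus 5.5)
The plan is a single application of Jensen's inequality to the likelihood ratio, after writing both divergences as expectations of the same quantity. Let $f = \frac{dP}{dQ}$ be the Radon--Nikodym derivative, which exists since $P\ll Q$. Then
\[
    \KL(P\|Q) = \E_P[\log f],
    \qquad
    \chi^2(P\|Q) = \E_Q[(f-1)^2] = \E_Q[f^2] - 1 .
\]
The second identity uses $\E_Q[f]=1$. The key observation is the change of measure $\E_Q[f^2] = \E_P[f]$, which gives
\[
    1+\chi^2(P\|Q) = \E_P[f].
\]

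Next I would apply Jensen's inequality to the concave function $\log$ under the measure $P$:
\[
    \KL(P\|Q) = \E_P[\log f] \le \log \E_P[f] = \log\bigl(1+\chi^2(P\|Q)\bigr).
\]
This is the claimed bound.

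The only step needing care is measure-theoretic bookkeeping, which I expect to be minor.
\begin{itemize}
    \item The set $\{f=0\}$ has $P$-measure zero, so $\log f$ is $P$-a.s.\ finite.
    \item If $\chi^2(P\|Q)=\infty$, the right-hand side is $+\infty$ and there is nothing to prove.
    \item If $\chi^2(P\|Q)<\infty$, then $f\in L^1(P)$. The positive part of $\log f$ is bounded by $f$, so $\E_P[\log f]$ is well defined in $[-\infty,\infty)$, and Jensen applies in its extended form.
\end{itemize}
Finally, $\KL(P\|Q)\ge 0$ by Gibbs' inequality, so the left-hand side is in fact finite whenever the right-hand side is.
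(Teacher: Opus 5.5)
Your proof is correct and is the standard argument: the paper does not prove this bound itself but cites \citep[Eq.~(7.31)]{polyanskiy2022information}, where it follows from exactly your Jensen step $\KL(P\|Q)=\E_P[\log f]\le\log\E_P[f]=\log\bigl(1+\chi^2(P\|Q)\bigr)$, equivalently monotonicity of R\'enyi divergence in its order. Your handling of the edge cases is sound: $P(f=0)=0$, the case $\chi^2=\infty$, and the bound $(\log f)^+\le f$ that justifies extended Jensen.
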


We apply this bound with
\[
    P=p^{\pie}(\dmy{z}\mid s_0,s_+)
    \quad\text{and}\quad
    Q=p(\dmy{z}\mid s_0).
\]
Averaging the bound over $s_+$ gives the
$\chi^2$ control of mutual information that appears in
\cref{thm:width_linear_sharp}.

The fourth result shows that a measurable transformation of noise can induce any distribution on a standard Borel space
\citep[Lem.~5]{bloemreddy2020probabilistic}
\citep[Thm.~8.17]{kallenberg2021foundations}.

\begin{theorem}[Noise outsourcing]
    \label{thm:transfer}
    Let $\mu$ be a probability distribution on a standard Borel space
    $\cZ$. Let $U$ be the uniform distribution on a
    standard Borel space $\cV$. If $\tilde Z\sim U$, then there exists a
    measurable map $T:\cV\to\cZ$ such that
    \[
        T(\tilde Z)\sim\mu.
    \]
\end{theorem}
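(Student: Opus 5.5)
The plan is to reduce everything to the unit interval with Lebesgue measure, where the map $T$ can be written down as a quantile function, and then transport back along Borel isomorphisms. Throughout I read ``uniform distribution $U$ on $\cV$'' as an atomless Borel probability measure. Examples are normalized surface measure on the hypersphere $\cZ$ or Lebesgue measure on $[0,1]$. Some such hypothesis is necessary: if $\cV$ were finite, $T(\tilde Z)$ would take only finitely many values and could not have an arbitrary law $\mu$. I would state this caveat explicitly.

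\textbf{Step 1 (reduce the target).} By the Borel isomorphism theorem (Kuratowski), $\cZ$ is Borel isomorphic to a Borel subset $B\subseteq[0,1]$. Let $\phi:\cZ\to B$ be such an isomorphism. Then $\nu=\phi_\#\mu$ is a Borel probability measure on $[0,1]$ with $\nu(B)=1$. It suffices to find a measurable $T_0:[0,1]\to[0,1]$ with $T_0(W)\sim\nu$ for $W\sim\mathrm{Unif}[0,1]$. Given $T_0$, we modify it on the $\nu$-null preimage of $[0,1]\setminus B$ to take a fixed value in $B$, and then set $T=\phi^{-1}\circ T_0$.

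\textbf{Step 2 (quantile construction).} Let $F(x)=\nu([0,x])$ and define $T_0(u)=\inf\{x\in[0,1]:F(x)\ge u\}$. The map $T_0$ is monotone, hence Borel. Right-continuity of $F$ gives the equivalence $T_0(u)\le x \iff u\le F(x)$. Therefore $\Pr(T_0(W)\le x)=F(x)$, and since distribution functions determine Borel measures on $[0,1]$, we get $T_0(W)\sim\nu$.

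\textbf{Step 3 (reduce the source).} By the measure isomorphism theorem, any atomless Borel probability space on a standard Borel space is isomorphic mod null sets to $([0,1],\mathrm{Leb})$. This yields a measurable $\psi:\cV\to[0,1]$ with $\psi_\#U=\mathrm{Leb}$. Redefining $\psi$ on a null set keeps it measurable everywhere. The final map is $T=\phi^{-1}\circ T_0\circ\psi$, and it satisfies $T(\tilde Z)\sim\mu$.

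The main obstacle is Step 3. It is the only place where a nontrivial structural theorem about the source enters, and it is where the atomless assumption is essential. For concrete cases such as the hypersphere, I would instead write $\psi$ explicitly, avoiding the general theorem. One way is to use a coordinate whose law under $U$ is continuous, composed with its own distribution function, which makes it uniform on $[0,1]$ by the probability integral transform. Steps 1 and 2 are routine.
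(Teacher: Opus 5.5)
Your proof is correct. The paper does not prove this statement itself. It only cites Kallenberg's transfer theorem and the noise-outsourcing lemma of Bloem-Reddy and Teh. For noise $\tilde Z\sim\mathrm{Unif}[0,1]$, the standard proof behind those results is exactly your Steps 1--2: Borel-embed the target into $[0,1]$ and apply the quantile function. This matches the inverse-CDF footnote in the paper's proof of the uniform-prior lemma.

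What you add is Step 3 together with the atomlessness caveat. The cited results take the noise to be $\mathrm{Unif}[0,1]$, so the paper leaves the passage to a general standard Borel $\cV$ implicit. As you note, the statement is false unless ``uniform'' is read as atomless, since a finite $\cV$ gives $T(\tilde Z)$ only finitely many values. That reading matches how the paper actually applies the result: in the skill-prior lemma, $U$ is assumed to put zero mass on every point.

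One simplification: Step 3 does not need the measure isomorphism theorem. By Kuratowski, embed $\cV$ Borel-injectively into $[0,1]$ via some $\iota$. Injectivity makes $\iota_\# U$ atomless, so its distribution function $G$ is continuous. Then $\psi=G\circ\iota$ satisfies $\psi_\# U=\mathrm{Leb}$ by the probability integral transform. This is the same trick you propose for the hypersphere, and it works for every atomless $U$ on a standard Borel space. So the step you flag as the main obstacle is as routine as Steps 1--2.
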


We utilize this statement to prove that a uniform skill prior does not lose generality in \cref{lem:uniform_continuous}.

\section{Information Geometry of Empowerment}
\label{app:proof_info_geom}

We prove a general theorem of the information radius result from \cref{subsec:geom} that applies to both continuous and discrete state space \gls{mdp}s. We include the reduction of the result to the finite-state statement found in \cref{subsec:geom}.

We start with intuition about the information channel, then explain potential simplications of this information channel. First, note that skill latents $z \in \cZ$ can only influence future state distributions via a skill-conditioned policy. The information channel $P_{S_+ \mid Z}$ is a function of the skill-conditioned policy $\pi \in \Pi_\cZ$, which sends each skill to the induced occupancy:
\begin{align*}
    \cZ \xrightarrow{\ \pi\ } \cP(s_0),
    \qquad
    z \longmapsto p^\pi(\dmy{s_+} \mid s_0, z).
\end{align*}
If this mapping is measurable, we can directly reduce the skills to their induced \gls{dsom}s. Specifically, there would exist a pushforward map from the skill random variable $Z$ to a random state occupancy $P$ drawn from a \emph{distribution over} $\cP(s_0)$ (e.g. a distribution over state occupancy distributions) which, by the data-processing inequality from the Markov chain  $Z \rightarrow P_Z \rightarrow S_+$ and the sufficiency of $P$, would imply
\begin{align*}
    I(Z;S_+ \mid S_0) = I(P_Z;S_+ \mid S_0).
\end{align*}
Note, however, that not all policies admit a measurable mapping from skill space to occupancy space. We require the additional assumption that the stochastic policies $\pi(a \mid s, z)$ are jointly measurable over the state and skill space:
\begin{assumption}[Policies measurable in skill and state space]
    \label{asm:selector}
    Let $\cS$, $\cA$, and $\cZ$ denote the state space, action space, and skill spaces respectively, which we take to be standard Borel.

    Take the skill-conditioned policy space $\Pi_\cZ$ to be the space of policies measurable in the state space $\cS$ and skill space $\cZ$. That is,
    \[
        (s,z) \mapsto \pi(\cdot \mid s,z) \in \Delta(\cA)
    \]
    is a measurable function.
\end{assumption}
Accordingly, the state occupancy set $\cP(s_0)$ is now  occupancies reachable by measurable policies.

From this assumption, there exists a measurable mapping from $\cZ \xrightarrow{\ \pi\ } \cP(s_0)$. Thus, we can effectively alias the skill latent $Z$ and the random state occupancy measure $P_Z$. For the information geometry result, this association allows direct application of \cref{thm:capacity_information_radius} to the information channel from skills to occupancies:

\begin{theorem}[General information geometry of empowerment]
    \label{thm:effective_info_geom_general}
    Fix a start state $s_0$. Then, with both sides taking values in
    $[0,\infty]$,
    \[
        \gEp(s_0)
        =
        \inf_{q(\dmy{s_+})}
        \sup_{p\in\cP(s_0)}
        \KL\bigl(p(\dmy{s_+})\|q(\dmy{s_+})\bigr).
    \]
    If the empowerment is bounded, the infimum is attainable. Let the skill-conditioned policy $\pi \in \Pi_\cZ$ and the skill source $p(z\mid s_0)$ over skills $z \in \cZ$ attain $\gEp(s_0)$. Then, the skill marginal is the unique center: \[
        q^\star=p^\pi(\dmy{s_+}\mid s_0).
    \]
    Moreover, for $p(z\mid s_0)$-almost every $z$,
    \[
        \KL\bigl(
        p^\pi(\dmy{s_+}\mid s_0,z)
        \|q^\star(\dmy{s_+})
        \bigr)
        =
        \mathbb{E}
        \gEp(s_0),
    \]
    where $p^\pi(\dmy{s_+}\mid s_0,z)$ is an extreme point of
    $\cP(s_0)$. Thus, an optimal policy places every skill occupancy at the maximal KL radius, while an optimal source induces the average occupancy $q^\star$.
\end{theorem}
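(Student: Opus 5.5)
We reduce the statement to the dual form of channel capacity (\cref{thm:capacity_information_radius}) applied to a single channel whose input alphabet is the reachable polytope $\cP(s_0)$ itself, and then transport the conclusions back to skill-conditioned policies.

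\textbf{Step 1 (reduction to occupancies).} Under \cref{asm:selector}, each $\pi\in\Pi_\cZ$ induces a measurable map $z\mapsto p^\pi(\dmy{s_+}\mid s_0,z)\in\cP(s_0)$. Since $S_+$ depends on $Z$ only through $P_Z$ and $P_Z$ is a function of $Z$, data processing in both directions gives $I^\pi(Z;S_+\mid S_0=s_0)=I(P_Z;S_+)$. Hence
\[
\gEp(s_0)\le\sup_{\mu\in\Delta(\cP(s_0))} I(P;S_+),
\]
where the channel is the identity $p\mapsto p$.

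For the reverse inequality, take any $\mu$ on $\cP(s_0)$. We need a measurable policy in $\Pi_\cZ$ and a source on $\cZ$ whose pushforward approximates $\mu$. Noise outsourcing (\cref{thm:transfer}) gives $\cZ\to\cP(s_0)$ with pushforward $\mu$. We then need a measurable selection $p\mapsto\pi_p$ of a stationary policy realizing $p$. The natural candidate is $\pi_p(a\mid s)=\rho_p(s,a)/p(s)$, where $\rho_p$ is the state-action occupancy, but it must be chosen measurably; that requires working on a measurable parametrization of the polytope. We expect this measurable-selection step to be the main technical obstacle. In the tabular case it is trivial, since $\cP(s_0)$ is a finite-dimensional polytope and we may restrict to finitely many extreme points, each realized by a deterministic policy. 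Combined with \cref{lem:uniform_continuous}, the uniform-source restriction in \cref{eq:pot_is_max_eff} loses nothing.

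\textbf{Step 2 (duality).} Apply \cref{thm:capacity_information_radius} with $\cX=\cP(s_0)$, $\cY=\cS$, and $P_{Y\mid X=p}=p$. This yields
\[
\sup_\mu I=\inf_q\sup_{p\in\cP(s_0)}\KL(p\|q)
\]
in $[0,\infty]$. When the value is finite, the same theorem supplies attainment and uniqueness of the center $q^\star$.

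\textbf{Step 3 (characterizing optima).} Let $(\pi,p(z\mid s_0))$ attain the capacity $C$ and write $\bar p=p^\pi(\dmy{s_+}\mid s_0)$ for the skill marginal. We use the golden formula:
\[
C=I=\E_z\KL(p_z\|\bar p)=\E_z\KL(p_z\|q^\star)-\KL(\bar p\|q^\star)\le C-\KL(\bar p\|q^\star).
\]
This forces $\KL(\bar p\|q^\star)=0$, so $\bar p=q^\star$. It also gives $\E_z\KL(p_z\|q^\star)=C$, while $\KL(p_z\|q^\star)\le C$ for every $z$; hence $\KL(p_z\|q^\star)=C$ for almost every $z$.

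\textbf{Step 4 (extreme points).} The map $p\mapsto\KL(p\|q^\star)$ is convex, and on this KL ball it is strictly convex along nondegenerate segments. Suppose $p_z=\tfrac12(p'+p'')$ with $p'\ne p''$ in $\cP(s_0)$. Strict convexity of $x\log(x/q)$ gives $\KL(p_z\|q^\star)<\max\{\KL(p'\|q^\star),\KL(p''\|q^\star)\}\le C$, which contradicts Step 3. Therefore $p_z$ is an extreme point of $\cP(s_0)$, which finishes the proof.
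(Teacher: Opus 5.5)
Your proposal follows the paper's proof essentially step for step. You reduce to the identity channel on $\cP(s_0)$ and invoke the information-radius form of capacity (\cref{thm:capacity_information_radius}). You then use the golden formula at $q^\star$ to force $\KL(\bar p\|q^\star)=0$ and almost-sure equality of the skill divergences, and use strict convexity of $\KL(\cdot\|q^\star)$ to rule out non-extreme occupancies. Your midpoint version is equivalent to the paper's general $\lambda$ because $\cP(s_0)$ is convex.

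The measurable-selection step you flag is one the paper simply absorbs into \cref{asm:selector} by aliasing skills with their occupancies, and it can be avoided. The direction ``empowerment $\le$ radius'' already follows from the golden formula with no selection at all. For the reverse direction, the supremum over finitely supported sources on $\cP(s_0)$ already equals the radius (this is how Kemperman's radius identity is proved), and such a source is realized by assigning one ordinary policy in $\Pi$ to each cell of a measurable partition of $\cZ$.
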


We now prove \cref{thm:effective_info_geom_general} with the
information-radius identity of \cref{thm:capacity_information_radius}.
\begin{proof}
    Fix $S_0=s_0$. Each skill $z$ induces a future state occupancy
    \[
        p^\pi(\dmy{s_+}\mid s_0,z)\in\cP(s_0),
    \]
    and the conditional distribution of $S_+$ depends on $z$ only through
    this occupancy. From the measurability assumption \cref{asm:selector}, we may alias skills that induce the same
    occupancy. Thus, jointly optimizing the
    policy and skill source is equivalent to choosing a source over
    $\cP(s_0)$ for the channel
    \[
        p\in\cP(s_0),
        \quad
        S_+\sim p.
    \]
    In other words, the source is a distribution \emph{over a set of state-occupancy distributions} that give the channel output $S_+$.

    Applying \cref{thm:capacity_information_radius} with the random
    occupancy $P_Z\in\cP(s_0)$ as the input and the future state $S_+$ as the output gives
    \[
        \gEp(s_0)
        =
        \inf_{q(\dmy{s_+})}
        \sup_{p\in\cP(s_0)}
        \KL\bigl(p(\dmy{s_+})\|q(\dmy{s_+})\bigr).
    \]
    Thus, assuming the potential empowerment is finite, a unique center $q^\star$ attains the infimum. Now, suppose that the skill policy and the prior attains the maximal capacity
    $C=\gEp(s_0)$. For any distribution $q$, we have that
    \[
        \E_{p(\dmy{z}\mid s_0)}
        \left[
            \KL\bigl(
            p^\pi(\dmy{s_+}\mid s_0,z)
            \|q(\dmy{s_+})
            \bigr)
            \right]
        =
        I^\pi(Z;S_+\mid S_0=s_0)
        +
        \KL\bigl(
        p^\pi(\dmy{s_+}\mid s_0)
        \|q(\dmy{s_+})
        \bigr)
    \]
    following the Golden Formula (\citealp[Thm.~4.1]{polyanskiy2022information}). Taking $q=q^\star$, every reachable occupancy has divergence at most
    $C$.
    Hence,
    \begin{align}
        C
         & \ge
        \E_{p(\dmy{z}\mid s_0)}
        \left[
            \KL\bigl(
            p^\pi(\dmy{s_+}\mid s_0,z)
            \|q^\star(\dmy{s_+})
            \bigr)
        \right]                     \\
         & =
        C
        +
        \KL\bigl(
        p^\pi(\dmy{s_+}\mid s_0)
        \|q^\star(\dmy{s_+})
        \bigr) \tag{Golden formula} \\
         & \ge C.
    \end{align}
    Thus, the KL between the marginal state occupancy and $q^\star$ is $0$, which implies
    \[
        q^\star=p^\pi(\dmy{s_+}\mid s_0).
    \]
    and that every individual skill divergence is
    at most $C$:
    \begin{align}
        \KL\bigl(
        p^\pi(\dmy{s_+}\mid s_0,z)
        \|q^\star(\dmy{s_+})
        \bigr)
        =
        C
        \label{eq:capacity_achieving}
    \end{align}
    for $p(z\mid s_0)$-almost every $z$. Thus, an optimal source places skill occupancy
    mass on the boundary of the optimal KL ball, with the KL measured from the central marginal distribution $q^\star$.

    Finally, suppose there exists an active skill occupancy that is not an extreme point of
    $\cP(s_0)$. Then, for distinct $p_1,p_2\in\cP(s_0)$ and some
    $\lambda\in(0,1)$,
    \[
        p^\pi(\dmy{s_+}\mid s_0,z)
        =
        \lambda p_1+(1-\lambda)p_2.
    \]
    Convexity of the KL divergence in the first argument gives
    \[
        \begin{aligned}
            \KL\bigl(
            p^\pi(\dmy{s_+}\mid s_0,z)
            \|q^\star(\dmy{s_+})
            \bigr)
             & <
            \lambda\KL\bigl(p_1(\dmy{s_+})\|q^\star(\dmy{s_+})\bigr)
            +(1-\lambda)\KL\bigl(p_2(\dmy{s_+})\|q^\star(\dmy{s_+})\bigr) \\
             & \le C,
        \end{aligned}
    \]
    which contradicts the equality in \cref{eq:capacity_achieving}. Thus,
    $p(z\mid s_0)$-almost every skill induces an extreme point of
    $\cP(s_0)$.
\end{proof}

The reduction to the finite tabular case directly follows:

\restatetheorem{thm:effective_info_geom}

\begin{proof}[Proof of \cref{thm:effective_info_geom}]
    In the finite-state setting, $\cP(s_0)$ is a polytope and
    $\gEp(s_0)\le\log|\cS|$, so the channel capacity is finite.
    \cref{thm:effective_info_geom_general} shows that the center distribution is the
    skill-marginal where active skill occupancies lie on the boundary of
    the optimal KL ball at extreme points of $\cP(s_0)$. The extreme
    points of a polytope are vertices, which proves the result.
\end{proof}

\subsection{Log reachability interpretation of potential empowerment in discrete deterministic settings}
For completeness, we include the log reachability interpretation of the action sequence potential empowerment (as originally defined in~\citep{klyubin2005empowerment}) in discrete deterministic settings. \citet{salge2014empowerment} originally present this result.

We begin by defining the action sequence potential empowerment. Let $A_0^{H-1}=(A_0,\ldots,A_{H-1})\in\cA^H$ be an open-loop action sequence over the fixed horizon $H$. Let $S_t$ denote the random variable of the state after $t$ timesteps. \citet{klyubin2005empowerment} define (potential) empowerment as the capacity of the channel from this action sequence to the resulting state:
\begin{align*}
    \gEp_H(s)=\max_{p(a_0^{H-1})} I(A_0^{H-1};S_H\mid S_0=s).
\end{align*}
This expression is equivalent to \cref{def:pot}, replacing $Z$ with the action sequence $A_0^{H-1}$ and $S_+$ with $S_H$.

\begin{lemma}[Discrete deterministic potential empowerment is log reachability, adapted from \citet{salge2014empowerment}]
    \label{lem:deterministic_reachability}
    Fix a discrete, deterministic MDP and horizon $H$.
    Because dynamics are deterministic, the future state $S_H$ is a deterministic function of $S_0$ and $A_0^{H-1}$.
    Then, the reachable set of states in $H$ timesteps takes the form
    \[
        \cR_H(s)=\{s_+\in\cS:\ \exists\, a_0^{H-1}\in\cA^H\text{ that reaches }S_H=s_+ \text{ from } S_0=s\},
    \]
    and the empowerment at $s$ is the log of the size of this set:
    \[
        \gEp_H(s)=\log |\cR_H(s)|.
    \]
\end{lemma}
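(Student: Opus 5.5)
Because the dynamics are deterministic, the output $S_H$ is a deterministic function of the input. Write $f_s:\cA^H\to\cS$ for the map sending an action sequence $a_0^{H-1}$ to the state reached from $S_0=s$. By definition, the image of $f_s$ is exactly $\cR_H(s)$. The plan is to sandwich $I(A_0^{H-1};S_H\mid S_0=s)$ between $0$ and $\log|\cR_H(s)|$, and then exhibit a source that attains the upper bound.

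For the upper bound, use the chain rule $I(A_0^{H-1};S_H\mid S_0=s)=H(S_H\mid S_0=s)-H(S_H\mid A_0^{H-1},S_0=s)$. Determinism gives $H(S_H\mid A_0^{H-1},S_0=s)=0$, so the mutual information equals the output entropy $H(S_H\mid S_0=s)$. For any source $p(a_0^{H-1})$, the random variable $S_H$ is supported inside $\cR_H(s)$. Entropy on a finite support is at most the log of its cardinality, so
\[
I(A_0^{H-1};S_H\mid S_0=s)\le\log|\cR_H(s)|.
\]
In a discrete MDP with finite $\cA$, the set $\cA^H$ is finite, hence $\cR_H(s)$ is finite. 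If $\cA$ is countably infinite, the same bound holds and is possibly $+\infty$.

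For achievability, for each reachable state $s_+\in\cR_H(s)$ select one preimage $a(s_+)\in f_s^{-1}(s_+)$; this is possible by the definition of $\cR_H(s)$. Define the source that places mass $1/|\cR_H(s)|$ on each selected sequence $a(s_+)$. The maps $s_+\mapsto a(s_+)$ and $f_s$ restricted to the selected sequences are mutually inverse bijections, so $S_H$ is uniform on $\cR_H(s)$. Its entropy is therefore $\log|\cR_H(s)|$, which attains the upper bound. Taking the maximum over sources yields $\gEp_H(s)=\log|\cR_H(s)|$. Equivalently, one could invoke \cref{thm:capacity_information_radius}: the channel outputs are point masses $\delta_{s_+}$ for $s_+\in\cR_H(s)$, and the KL 1-center of these point masses is the uniform distribution on $\cR_H(s)$, with radius $\log|\cR_H(s)|$.

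There is no real obstacle here. The only point requiring care is the finiteness and discreteness of $\cA^H$, which guarantees that the maximum is attained and that $\cR_H(s)$ is finite. In the countably infinite case I would state the result as a supremum, approximated by uniform sources on finite subsets of $\cR_H(s)$.
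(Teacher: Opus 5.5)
Your proposal is correct and follows the same argument as the paper. Both use determinism to make the mutual information equal the output entropy, bound that entropy by $\log|\cR_H(s)|$ via the support, and attain the bound with a uniform source on one preimage per reachable state. Your dual-form cross-check and your remark about the supremum when $\cA$ is countably infinite are valid refinements the paper leaves implicit, since it writes $\max$ and tacitly assumes finiteness.
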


\begin{proof}
    Because $S_H$ is a deterministic function of $A_0^{H-1}$ given $S_0=s$, the conditional entropy $H(S_H\mid A_0^{H-1},S_0=s)$ is $0$. Then,
    \[
        I(A_0^{H-1};S_H\mid S_0=s)=H(S_H\mid S_0=s).
    \]
    Because the support of $S_H$ is in the reachable set $\cR_H(s)$, the entropy is at most $\log|\cR_H(s)|$. By construction, we can choose one action sequence for each reachable $s_+\in\cR_H(s)$ and place uniform mass $p(a_0^{H-1})$ over the chosen action sequences. Thus, the random variable $S_H$ is uniform on $\cR_H(s)$ with entropy $\log|\cR_H(s)|$.
\end{proof}

\section{Empowerment and Bottleneck States}
\label{app:bottleneck}

We state and prove the hitting-time and temporal-distance statements from \cref{subsec:bottlenecks}.
\subsection{Bottleneck states are high-empowerment states}

We first define the repeated rollout hitting time from the
\gls{dsom}. A single rollout from $s_0$ samples a skill
$z\sim p(\dmy z\mid s_0)$ and a geometric termination time
$\tau\sim\mathrm{Geom}(1-\gamma)$. The agent
executes skill $z$ until $\tau$, where the rollout terminates at
$S_+\triangleq S_\tau$.

Writing $S_+^{(k)}$ for the
termination state of the $k$th rollout from $s_0$, the
repeated-rollout hitting time measures the first rollout count that hits $s_+$:
\[
    H^\pi(s_0\to s_+)
    \triangleq
    \inf
    \left\{
        k\ge 1:
        S_+^{(k)}=s_+
    \right\},
\]
where $H^\pi(s_0\to s_+\mid z)$ is the corresponding count when
every rollout only uses skill $z$. Then, potential empowerment is exactly the largest expected reduction in log
hitting time after committing to a skill:
\lemBottleneck*

\begin{proof}
    The repeated-rollout hitting times are geometric random variables
    with success probabilities $p^\pi(s_+\mid s_0)$ and
    $p^\pi(s_+\mid s_0,z)$, respectively. Hence,
    \[
        \E[
            H^\pi(s_0\to s_+)
        ]
        =
        \frac{1}{p^\pi(s_+\mid s_0)},
        \quad
        \E[
            H^\pi(s_0\to s_+\mid z)
        ]
        =
        \frac{1}{p^\pi(s_+\mid s_0,z)}.
    \]
    The \gls{mi} is the expected log ratio of the skill-conditioned to
    the marginal \gls{dsom}, so
    \begin{align}
        I^\pi(Z;S_+\mid S_0=s_0)
        &=
        \E_{p^\pi(z,s_+\mid s_0)}
        \log
        \frac{
            p^\pi(s_+\mid s_0,z)
        }{
            p^\pi(s_+\mid s_0)
        }
        \notag\\
        &=
        \E_{p^\pi(z,s_+\mid s_0)}
        \left[
            \log
            \E[
                H^\pi(s_0\to s_+)
            ]
            -
            \log
            \E[
                H^\pi(s_0\to s_+\mid z)
            ]
        \right].
        \label{eq:mi_hitting_gap}
    \end{align}
    This identity holds for any policy. Maximizing both sides over
    effective empowerment policies $\pie$ gives the result.
\end{proof}

\subsection{Temporal distances over contiguous skill rollouts}
\label{app:rollout_temporal}

The above result considers the empowerment objective used throughout the rest of the
paper, which measures the controllability of a single skill on the termination state/discounted future observation.
We now define a closely related objective over \emph{contiguous} rollouts where the temporal distance interpretation from \citet{myers2025learning}
is exact.

Rather selecting a single skill and rolling out until termination, the agent can execute contiguous skill rollouts in a single trajectory. An agent can begin by selecting a skill $z$. At some geometric termination time $\tau \sim \text{Geom}(1-\gamma)$, the agent resamples a random skill and continues forth from the terminated state. This process repeats forever. Intuitively, this rollout scheme is a generalization of standard \gls{mdp} dynamics, where skill $Z$ now takes the role of the action $A$, and the $\beta$ discount factor over units of rollouts controls the fundamental timescale, in addition to the standard $\gamma$.

We can mathematically write out the state occupancy measure of such a process. Let the random variable $X_k$ be the state occupied at the $k$-th geometric termination time. Let the trajectory begin at state $X_0 = s_0$. At the $k$-th termination, the agent re-draws a skill $Z_k\sim p(\dmy z\mid X_k)$ and continues the trajectory. Then, the state occupancy of the process takes the form
\[
    M_\beta^{\pi}(g\mid s)
    \triangleq
    (1-\beta)
    \sum_{k=0}^{\infty}
    \beta^k
    p^{\pi}(X_k=g\mid X_0=s).
\]
We can correspondingly define the state and skill-conditioned \gls{dsom} of the contiguous process as
\[
    M_\beta^{\pi}(g\mid s, z)
    \triangleq
    (1-\beta)
    \sum_{k=0}^{\infty}
    \beta^k
    p^{\pi}(X_k=g\mid X_0=s, Z_0 = z),
\]
where the agent rolls out skill $z$ until the first geometric time, then resamples skills from the prior $p(z \mid s_k)$ thereafter. Note that averaging the skill-conditioned \gls{dsom} gives the state-conditioned \gls{dsom}:
\begin{equation}
    \E_{z\sim p(\dmy z\mid s)}
    \left[
        M_\beta^{\pi}(g\mid s,z)
        \right]
    =
    M_\beta^{\pi}(g\mid s).
    \label{eq:rollout_successor_consistency}
\end{equation}
The \textbf{effective contiguous empowerment} measures the degree to which the first skill controls the occupancy of the entire process. Let $Z\sim p(\dmy{z}\mid s_0)$ be the first skill and let $G\sim M_\beta^{\pi}(\dmy{g}\mid s_0,Z)$ be the state the process occupies at a geometrically sampled termination time. Taking the supremum over the policy and source gives the channel capacity form, or the \textbf{potential contiguous empowerment}:
\begin{align*}
    \gE^{\mathrm{contig}}_{\mathrm{eff}}(\pi,s_0)
     & \triangleq
    I^{\pi}(Z;G\mid X_0=s_0),
    \\
    \gE^{\mathrm{contig}}_{\mathrm{pot}}(s_0)
     & \triangleq
    \sup_{\pi \in \Pi_\cZ,\, p(z\mid s)}
    \gE^{\mathrm{contig}}_{\mathrm{eff}}(\pi,s_0).
\end{align*}
An agent is effectively contiguously empowered if their intial skill/generalized action exerts control on their future state for the contiguous process. An agent is potentially contiguously empowered when occupying a state from which immediate skills provide long-term control over the future. We highlight that this modification of empowerment is conceptually similar to the definitions of empowerment considered in the rest of the paper, despite different mathematical forms.

The potential contiguous empowerment admits a temporal distance interpretation, following \citet{myers2025learning}. The temporal distance captures a notion of the time to traverse between states that generalizes to continuous state settings. Let $\beta$ be the discount factor. Let $M^\pi_\beta (g \mid s)$ denote the \gls{dsom} of state $g$ for a process that begins at $s$. Then, the temporal distance from a state $s$ to a future state $g$ is defined as
\[
    d^\pi_\beta (s,g) \triangleq -\log \frac{M_\beta^{\pi}(g\mid s)}{M_\beta^{\pi}(g\mid g)}
\]
where the denominator measures the self-reachability probability for a process that begins at $g$. Conditioning on the initial skill gives the skill-conditioned temporal distance
\[
    d^\pi_\beta (s,g \mid z) \triangleq -\log \frac{M_\beta^{\pi}(g\mid s, z)}{M_\beta^{\pi}(g\mid g)},
\]
again normalized by the self-reaching probability. Let
$M_\beta^{\pi}(\dmy{z},\dmy{g}\mid s_0)= p(\dmy{z}\mid s_0)\,M_\beta^{\pi}(\dmy{g}\mid s_0,z)$
be the joint law of the initial skill $Z$ and future state
$G$. Adding and subtracting the self-reaching term unifies the contiguous empowerment and the temporal distance:

\begin{proposition}[Contiguous empowerment measures temporal distance reduction]
    \label{prop:rollout_temporal_distance}
    For any skill policy $\pi \in \Pi_\cZ$ and skill source $p(z\mid s)$,
    \[
        \gE^{\mathrm{contig}}_{\mathrm{eff}}(\pi,s_0)
        =
        \E_{M_\beta^{\pi}(\dmy{z},\dmy{g}\mid s_0)}
        \left[
            d_\beta^{\pi}(s_0,g)
            -
            d_\beta^{\pi}(s_0,g\mid z)
            \right].
    \]
    Consequently,
    \[
        \gE^{\mathrm{contig}}_{\mathrm{pot}}(s_0)
        =
        \sup_{\pi \in \Pi_\cZ,\, p(z\mid s)}
        \E_{M_\beta^{\pi}(\dmy{z},\dmy{g}\mid s_0)}
        \left[
            d_\beta^{\pi}(s_0,g)
            -
            d_\beta^{\pi}(s_0,g\mid z)
            \right].
    \]
    Thus, the potential contiguous empowerment is the maximum expected reduction in temporal distance to future states when committing to an initial skill $Z$.
\end{proposition}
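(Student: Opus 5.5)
The argument mirrors the proof of \cref{lem:bottleneck}: rewrite the mutual information as an expected log-likelihood ratio, then insert and remove the self-reaching normalizer $M_\beta^{\pi}(g\mid g)$.

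\textbf{Step 1: the marginal.} We first identify the marginal law of $G$ under the joint $M_\beta^{\pi}(z,g\mid s_0)=p(z\mid s_0)\,M_\beta^{\pi}(g\mid s_0,z)$. By the consistency identity \cref{eq:rollout_successor_consistency}, this marginal is $M_\beta^{\pi}(g\mid s_0)$.

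\textbf{Step 2: density-ratio form.} The definition of mutual information then gives
\[
    \gE^{\mathrm{contig}}_{\mathrm{eff}}(\pi,s_0)
    =
    \E_{M_\beta^{\pi}(\dmy{z},\dmy{g}\mid s_0)}
    \left[
        \log\frac{M_\beta^{\pi}(g\mid s_0,z)}{M_\beta^{\pi}(g\mid s_0)}
    \right].
\]
In the discrete case these are probability mass functions. In a continuous state space, I would read $M_\beta^{\pi}(\cdot\mid s_0,z)$ and $M_\beta^{\pi}(\cdot\mid s_0)$ as densities with respect to a common reference measure, or simply work with the Radon--Nikodym derivative. The ratio does not depend on which reference measure is chosen.

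\textbf{Step 3: the normalizer.} We multiply numerator and denominator by $1/M_\beta^{\pi}(g\mid g)$. This quantity depends only on $g$, not on $z$. Substituting the definitions of $d_\beta^{\pi}(s_0,g)$ and $d_\beta^{\pi}(s_0,g\mid z)$, the integrand becomes exactly
\[
    d_\beta^{\pi}(s_0,g)-d_\beta^{\pi}(s_0,g\mid z).
\]
This proves the first identity, since it holds for any $\pi$ and any source. Taking the supremum of both sides over $\pi\in\Pi_\cZ$ and $p(z\mid s)$ then gives the expression for $\gE^{\mathrm{contig}}_{\mathrm{pot}}(s_0)$ directly from its definition.

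\textbf{Main obstacle: well-definedness.} The delicate point is the splitting in Step 3. It requires $M_\beta^{\pi}(g\mid g)>0$ and finite for $M$-almost every $g$, so that the two temporal distances are not both infinite.

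I would argue that $M_\beta^{\pi}(g\mid g)\ge 1-\beta>0$ in the discrete case, because of the $k=0$ term of the sum. In the continuous case, the self-density needs to be interpreted carefully. I would state it as an assumption that the relevant densities exist and are positive and finite, following \citet{myers2025learning}.

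I would also note the support condition. On the $M$-null set where $M_\beta^{\pi}(g\mid s_0)=0$, the numerator $M_\beta^{\pi}(g\mid s_0,z)$ also vanishes $p(z\mid s_0)$-almost surely, again by Step 1. So that set contributes nothing to the expectation.

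Finally, for the rearrangement of the expectation into a difference of two expectations to be valid, integrability is needed. If the mutual information is infinite, I would interpret the identity in $[0,\infty]$, with the expectation of the difference taken as written rather than split.
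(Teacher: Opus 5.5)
Your proposal is correct and follows the paper's proof: you identify the marginal of $G$ via \cref{eq:rollout_successor_consistency}, write the mutual information as the expected log ratio $\log\bigl(M_\beta^{\pi}(g\mid s_0,z)/M_\beta^{\pi}(g\mid s_0)\bigr)$, note that the self-reaching normalizer $M_\beta^{\pi}(g\mid g)$ cancels in $d_\beta^{\pi}(s_0,g)-d_\beta^{\pi}(s_0,g\mid z)$, and take the supremum of both sides. Your well-definedness remarks go beyond the paper, which leaves them implicit, and they are sound: the bound $M_\beta^{\pi}(g\mid g)\ge 1-\beta$ in the discrete case, the null-set argument, and reading the identity in $[0,\infty]$.
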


\begin{proof}
    The result follows directly from definitions. The self-reaching term cancels:
    \[
        d_\beta^{\pi}(s_0,g)
        -
        d_\beta^{\pi}(s_0,g\mid z)
        =
        \log
        \frac{
            M_\beta^{\pi}(g\mid s_0,z)
        }{
            M_\beta^{\pi}(g\mid s_0)
        }.
    \]
    Taking the expectation over $M_\beta^{\pi}(\dmy{z},\dmy{g}\mid s_0)$ gives the equality between the temporal distance reduction and effective contiguous empowerment. Taking the supremum of both sides over the skill-conditioned policy and source gives the potential contiguous empowerment expression.
\end{proof}

Unlike the hitting time expression, this form of the contiguous potential empowerment holds in continuous settings.
In discrete spaces, the temporal distance reduces to a typical hitting time
interpretation \citep{myers2025learning}. Let
\[
    H_{\mathrm{contig}}^{\pi}(s\to g)
    \triangleq
    \inf\{k\ge 0:X_k=g\}
\]
be the time to hit $g$ from $s$, counted in units of contiguous skill rollouts. Let $H_{\mathrm{contig}}^{\pi}(s\to g\mid z)$ be the corresponding skill-conditioned hitting time. Following \citet{myers2025learning}, we can rewrite the temporal distance in terms of these hitting times:
\[
    d_\beta^{\pi}(s,g)
    =
    -\log
    \E\left[
        \beta^{
            H_{\mathrm{contig}}^{\pi}(s\to g)
        }
        \right],
    \quad
    d_\beta^{\pi}(s,g\mid z)
    =
    -\log
    \E\left[
        \beta^{
            H_{\mathrm{contig}}^{\pi}(s\to g\mid z)
        }
        \right]
    \quad
    \text{for } g\neq s.
\]
Thus, there is a clear hitting time interpretation of the temporal distance object in the discrete limit.

\paragraph{Note on different empowerment expressions.} The contiguous empowerment object differs from the empowerment objective used elsewhere in
the paper. All other statements in the paper
characterize the original
potential empowerment; this result is an extension that provides a neat connection to temporal distance as defined in \citep{myers2025learning}, which (unlike hitting time) remains well-defined in continuous settings.Estimating the contiguous empowerment in tabular settings requires a methodology different from estimating empowerment considered elsewhere in the paper, as the channel mapping from skills to outcomes is also a function of the source $p(z \mid s)$. In addition to doing the randomized vertex search, we estimate contiguous empowerment by doing gradient descent on a parameterized source. Further implementation details are in the codebase. \Cref{fig:bottleneck_temporal} shows the estimated \gls{mi} of the contiguous process from each start state for the \cref{fig:bottleneck} \gls{mdp}s. The resulting potential contiguous empowerment field closely matches the potential empowerment field considered elsewhere in the paper in \cref{fig:bottleneck}. In both experiments, the bottleneck states have high empowerment.
\begin{figure}[t]
    \centering
    \includegraphics[width=0.6\linewidth]{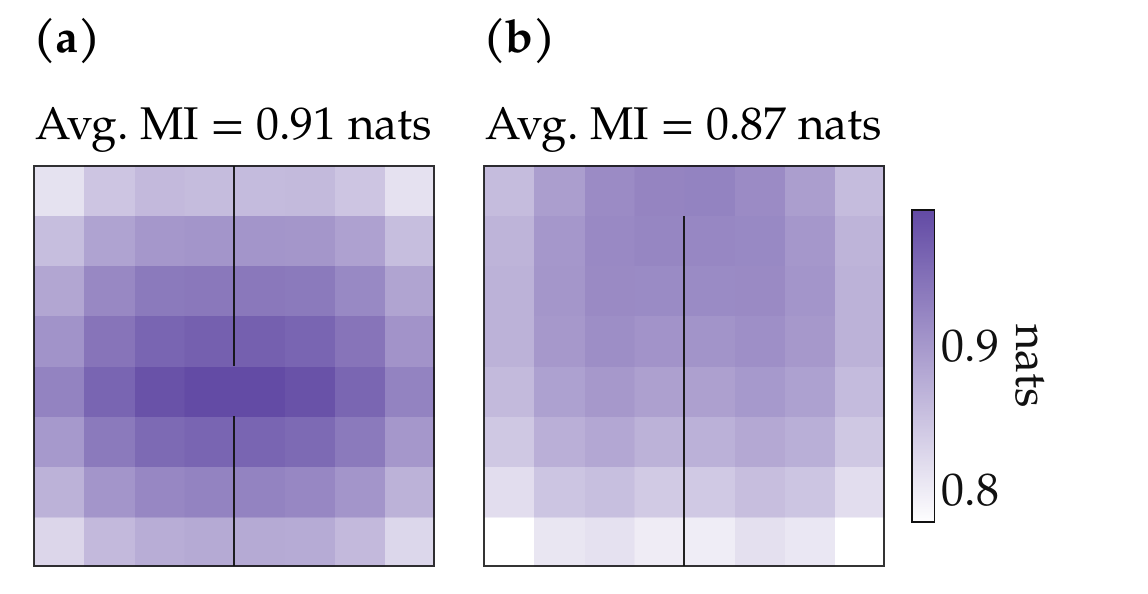}
    \caption{\textbf{Empowerment over contiguous rollouts concentrates at bottlenecks} (\cref{prop:rollout_temporal_distance})\textbf{.} The figures show the \gls{mi} of the contiguous process at each start state on the $8\times8$ grids from \cref{fig:bottleneck}, with (a) a center opening and (b) a corner opening. The bottleneck openings have high potential contiguous empowerment.}
    \label{fig:bottleneck_temporal}
\end{figure}

\section{Skill Selection: Fixed Continuous Uniform Priors are General}
\label{app:proof_skill_selection}

\begin{lemma}[A continuous uniform latent can represent any source distribution, discrete or continuous]
    \label{lem:uniform_continuous}
    Fix a start state $s_0$ and let the skill alphabet $\cZ$ be standard Borel. Consider the mutual information $I^\pi(Z;S_+\mid S_0=s_0)$ under a skill-conditioned policy $\pi \in \Pi_\cZ$, where $Z\sim p(z\mid s_0)$ is sampled once at $s_0$ and held fixed for the rollout. Let the reference source $U$ be a continuous distribution on $\cZ$, placing zero mass on every single point, e.g.\ the uniform distribution on $\cZ$. Then
    \[
        \sup_{\pi \in \Pi_\cZ,\,p(z\mid s_0)\in\Delta(\cZ)} I^\pi(Z;S_+\mid S_0=s_0)
        =
        \sup_{\tilde\pi \in \Pi_\cZ,\,\tilde Z\sim U} I^{\tilde\pi}(\tilde Z;S_+\mid S_0=s_0),
    \]
    where the maximization on the right ranges over all skill-conditioned policies.
\end{lemma}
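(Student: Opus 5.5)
We prove the two inequalities separately. The direction ``$\ge$'' is immediate: the uniform source $U$ is one admissible element of $\Delta(\cZ)$. Hence every pair $(\tilde\pi,U)$ on the right-hand side is feasible on the left, and the left supremum dominates the right.

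For ``$\le$'' we use the noise-outsourcing result (\cref{thm:transfer}) to simulate an arbitrary source with the reference source. Fix any $\pi\in\Pi_\cZ$ and any source $\mu=p(z\mid s_0)\in\Delta(\cZ)$. Since $\cZ$ is standard Borel and $U$ is continuous, we take $\cV=\cZ$ with law $U$. Strictly, \cref{thm:transfer} is stated for a uniform law; for a general atomless $U$ on a standard Borel space, we first apply a Borel isomorphism carrying $U$ to the uniform law on $[0,1]$, which exists by the measure isomorphism theorem. This yields a measurable map $T:\cZ\to\cZ$ with $T(\tilde Z)\sim\mu$ whenever $\tilde Z\sim U$. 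We then define the relabeled policy
\[
    \tilde\pi(a\mid s,\tilde z; s_0)\triangleq \pi(a\mid s, T(\tilde z); s_0).
\]
This policy is jointly measurable in $(s,\tilde z)$, since it composes the measurable map $(s,\tilde z)\mapsto(s,T(\tilde z))$ with the measurable map $\pi$ (\cref{asm:selector}). Therefore $\tilde\pi\in\Pi_\cZ$.

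Next we compare the two channels. Under $\tilde\pi$, the skill-conditioned \gls{dsom} is $p^{\tilde\pi}(\dmy{s_+}\mid s_0,\tilde z)=p^{\pi}(\dmy{s_+}\mid s_0,T(\tilde z))$, because the rollout dynamics depend on the skill only through the action law. Consequently the joint law of $(T(\tilde Z),S_+)$ under $\tilde\pi$ equals the joint law of $(Z,S_+)$ under $(\pi,\mu)$. Moreover, $S_+$ depends on $\tilde Z$ only through $T(\tilde Z)$, giving the Markov chain $\tilde Z\to T(\tilde Z)\to S_+$. The data-processing inequality in one direction, and the fact that $T(\tilde Z)$ is a function of $\tilde Z$ in the other, then give
\[
    I^{\tilde\pi}(\tilde Z;S_+\mid S_0=s_0)=I^{\tilde\pi}(T(\tilde Z);S_+\mid S_0=s_0)=I^{\pi}(Z;S_+\mid S_0=s_0).
\]
Taking the supremum over $(\pi,\mu)$ proves ``$\le$''. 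The identity holds in $[0,\infty]$, so no finiteness assumption is needed.

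The main obstacle is the measure-theoretic bookkeeping rather than the information-theoretic identity. Three points need checking. First, the construction of $T$ must be valid for a general atomless reference $U$; this is handled by the Borel isomorphism step above. Second, $\tilde\pi$ must lie in the measurable class $\Pi_\cZ$. Third, equality of the conditional \gls{dsom}s must lift to equality of the joint laws. For the third point we would argue directly from \cref{eq:dsom}: each finite-time marginal $p^{\tilde\pi}(s_t\mid s_0,\tilde z)$ is obtained by integrating the same kernels with $z$ replaced by $T(\tilde z)$, and the geometric mixture preserves this equality.
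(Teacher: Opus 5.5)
Your proof is correct and follows the same route as the paper. One direction is the trivial inclusion of $U$ in $\Delta(\cZ)$. The other uses a noise-outsourcing map $T$ with $T(\tilde Z)\sim p(z\mid s_0)$, the relabeled policy $\tilde\pi(a\mid s,\tilde z;s_0)=\pi(a\mid s,T(\tilde z);s_0)$, equality of the joint laws, and the conditional-independence argument giving $I(\tilde Z;S_+)=I(T(\tilde Z);S_+)$. Your extra bookkeeping fills in details the paper leaves implicit: the Borel isomorphism reducing a general atomless $U$ to the uniform law on $[0,1]$, and the check that $\tilde\pi$ is jointly measurable.
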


\begin{proof}
    Fix any pair $(\pi,p)$ with $p\in\Delta(\cZ)$. By the transfer theorem (\cref{thm:transfer}; \citealp{kallenberg2021foundations}), there exists a measurable map $T_{s_0}:\cZ\to\cZ$ such that $T_{s_0}(\tilde Z)\sim p(\dmy{z}\mid s_0)$ for $\tilde Z\sim U$.\footnote{For $U=\mathrm{Unif}[0,1]$ this can be the inverse-CDF construction.} Set $Z=T_{s_0}(\tilde Z)$. Define
    \[
        \tilde\pi(a\mid s,\tilde z;s_0)=\pi(a\mid s,T_{s_0}(\tilde z);s_0).
    \]
    Conditioned on $Z=T_{s_0}(\tilde Z)$, the action kernels of $\pi$ and $\tilde\pi$ agree at every timestep and the environment dynamics are identical. Therefore, the joint law of $(Z,S_+)$ under $(\pi,p)$ is the same as the joint law of $(T_{s_0}(\tilde Z),S_+)$ under $(\tilde\pi,U)$. Moreover, $\tilde\pi$ only receives information from $\tilde z$ through $T_{s_0}(\tilde z)$, so $S_+$ is conditionally independent of $\tilde Z$ given $T_{s_0}(\tilde Z)$, and
    \[
        I^{\tilde\pi}(\tilde Z;S_+\mid S_0=s_0)
        =I^{\tilde\pi}\bigl(T_{s_0}(\tilde Z);S_+\mid S_0=s_0\bigr)
        =I^{\pi}(Z;S_+\mid S_0=s_0).
    \]
    Taking the supremum over $(\pi,p)$ shows that the RHS upper bounds the LHS. Conversely, any policy $\tilde\pi$ with source $U\in\Delta(\cZ)$ is within the domain of maximization on the LHS, so we have an equality and are done.
\end{proof}

\section{Continuous versus Discrete Skill Spaces}
\label{app:proof_discrete}

\begin{lemma}[Uniform skill priors are not generally optimal with finitely many skills]
    \label{lem:discrete_counterexample}
    With a finite skill alphabet, fixing the skill prior to be uniform can strictly reduce channel capacity.
\end{lemma}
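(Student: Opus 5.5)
The plan is to exhibit an explicit counterexample: a tiny \gls{mdp} and finite skill alphabet in which the capacity-achieving source is non-uniform, and no policy can compensate. The contrast with \cref{lem:uniform_continuous} is the point. There, a continuous uniform latent can be pushed forward by a measurable map $T_{s_0}$ to any source. With finitely many skills, the only relabelings available are maps $T:\cZ\to\cZ$, so the uniform prior can only be pushed to sources whose masses are multiples of $1/|\cZ|$. The counterexample therefore has to produce an optimal source whose weights cannot be matched that way.

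The construction I would use has $|\cZ|=2$. Take $\cS=\{s_0,a,b,c\}$ with $\gamma\to 0$, or more cleanly $\gamma$ small, and let the discounted occupancy be dominated by the first transition; alternatively, take the horizon $H=1$ analogue. From $s_0$ there are two actions. Action $1$ moves deterministically to the absorbing state $a$. Action $2$ moves to $a$ or $b$ with probability $1/2$ each, and both are absorbing. The reachable polytope $\cP(s_0)$ is then (up to the common $(1-\gamma)\delta_{s_0}$ component) the segment between $p_1=\delta_a$ and $p_2=\tfrac12\delta_a+\tfrac12\delta_b$. This is the classical Z-channel.

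The computation proceeds in four steps.
\begin{enumerate}
\item By \cref{thm:effective_info_geom}, or directly from concavity (\cref{thm:mi_convexity}), the two-skill capacity is attained by placing the two skills at the vertices $p_1,p_2$. Any skill occupancy that is interior to the segment is a mixture of the vertices, and replacing it by a vertex cannot reduce the information.
\item For the vertex channel, write $I(w)=h(w/2)-w$ in bits, where $w$ is the mass on skill $2$. This is maximized at $w^\star=2/5$, giving $I=\log_2 5-2\approx 0.32$.
\item Under the uniform prior on two skills, the admissible policies are: both skills at the same point, giving zero information; skills at distinct points of the segment; or a swap of these, which changes nothing. By convexity in the channel (\cref{thm:mi_convexity}) with the source fixed, $I$ is maximized at vertex assignments, and the only nontrivial one gives $I(1/2)=h(1/4)-1/2\approx 0.311$. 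Stochastic per-skill policies correspond exactly to interior points, so this case analysis covers them.
\item Since $0.311<0.322$, the uniform prior strictly reduces capacity.
\end{enumerate}

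The main obstacle is making step 3 airtight, that is, showing that no choice of two policies under the uniform prior reaches capacity. I would handle it through the convexity argument: with the source fixed, $I$ is a convex function of the pair $(p^\pi(\cdot\mid z_1),p^\pi(\cdot\mid z_2))\in\cP(s_0)^2$. A convex function on a product of segments attains its maximum at a vertex of the product, of which there are only four. Finally, to handle the $\gamma>0$ discounted setting honestly, I would note that every occupancy shares the mass $(1-\gamma)$ on $s_0$. That mass contributes nothing to the information, so the calculation scales by $\gamma$ and the strict gap persists.
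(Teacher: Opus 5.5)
Your proposal is correct and matches the paper's proof: the same $\beta=1/2$ Z-channel (the paper uses a continuous action $a\in[0,1]$ to parameterize the segment, where you mix two discrete actions), the same reduction to vertex assignments via convexity of the mutual information in the channel, and the same comparison of $h(1/5)-\tfrac25$ against $h(1/4)-\tfrac12$. Your handling of the shared $(1-\gamma)\delta_{s_0}$ mass, which multiplies the information by $\gamma$, is a detail the paper skips. Two small slips to fix: step 1 needs convexity in the channel, not concavity, and a small $\gamma$ makes $s_0$ (not the first transition) dominate the occupancy, though your final scaling argument holds for every $\gamma\in(0,1)$.
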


\begin{proof}
    Consider an MDP consisting of a start state $s_0$ and two terminal states $\{s_1,s_2\}$. Let the skill set be the discrete skill set $\cZ=\{0,1\}$. Let the action space be a continuous action $a\in[0,1]$. The transition is
    \[
        p(s_1\mid s_0,a)=\beta a,
        \qquad
        p(s_2\mid s_0,a)=1-\beta a,
        \qquad 0<\beta<1.
    \]
    We can fully represent any deterministic, skill-conditioned policy by assigning each skill $z$ to an action $a_z\in[0,1]$ that completely determines the transition probability $p(s_1\mid s_0,z)=\beta a_z$.

    First optimize the channel for a fixed source. By convexity of MI in the channel (\cref{thm:mi_convexity}; \citealp{cover2006elements,polyanskiy2022information}), the maximum over $(a_0,a_1)\in[0,1]^2$ occurs at a vertex. The vertices $(0,0)$ and $(1,1)$ transmit no information. The informative vertices are $(0,1)$ and $(1,0)$, which are equivalent up to relabeling. Without loss of generality, set $(a_0,a_1)=(0,1)$, so that $p(s_1\mid s_0,z=0)=0$ and $p(s_1\mid s_0,z=1)=\beta$.

    Let $h(x)=-x\log x-(1-x)\log(1-x)$ denote the binary entropy. Under this channel, the output marginal is $p(s_1\mid s_0)=p(Z=1\mid s_0)\,\beta$, and the conditional output entropies are $h(0)=0$ and $h(\beta)$. The \gls{mi} is therefore
    \[
        I(Z;S_+\mid S_0=s_0)=h\bigl(p(Z=1\mid s_0)\,\beta\bigr)-p(Z=1\mid s_0)\,h(\beta).
    \]
    Differentiating in the source probability $p(Z=1\mid s_0)$ gives
    \[
        \frac{\partial\,I(Z;S_+\mid S_0=s_0)}{\partial\,p(Z=1\mid s_0)}=\beta\log\frac{1-p(Z=1\mid s_0)\,\beta}{p(Z=1\mid s_0)\,\beta}-h(\beta),
    \]
    so the capacity-achieving source is
    \[
        p^\star(Z=1\mid s_0)=\frac{1}{\beta(1+\exp(h(\beta)/\beta))}.
    \]
    For $\beta=1/2$, this gives $p^\star(Z=1\mid s_0)=2/5$, not $1/2$.

    To see that the uniform prior is strictly worse, maximize over channels with $p(Z=1\mid s_0)=1/2$. The same convexity argument places the optimum at vertex $(a_0,a_1)=(0,1)$ or $(1,0)$. For $\beta=1/2$, the best uniform-prior value is
    \[
        h(1/4)-\frac12 h(1/2),
    \]
    whereas the best arbitrary source value is
    \[
        h(1/5)-\frac25 h(1/2).
    \]
    The latter is strictly larger. Thus a uniform prior over two discrete skills cannot generally achieve the channel capacity.
\end{proof}

\paragraph{Comment on VIC~\citep{gregor2017variational} and DIAYN~\citep{eysenbach2018diversity}.}
A fixed continuous uniform prior is fully general for discrete and continuous \gls{mdp}s, while a discrete prior is not. This result has implications for skill learning algorithms. VIC optimizes both a source distribution and channel~\citep{gregor2017variational}, with options (termed here as ``skills'') that may be finite or continuous. However, the authors note that the joint optimization is difficult and practically implement a uniform prior over a discrete skill set. DIAYN similarly fixes the source to a uniform categorical over a discrete skill set~\citep{eysenbach2018diversity}. While neither work claims that fixing the prior preserves the objective, our result \cref{lem:discrete_counterexample} is a concrete counterexample that shows uniform discrete priors lose generality. Finally, \cref{lem:uniform_continuous} shows that utilizing a \emph{continuous} uniform skill prior recovers full generality \emph{without the need for source learning}. Combined, our results theoretically suggest that skill learning methods should utilize continuous uniform sources; note that SOTA skill learning methods already utilize such priors~\citep{zheng2025misl,park2023metra}.

\section{Empowerment Lower Bounds Tabular Adaptation}
\label{app:proof_adaptation}
We now prove that empowerment lower bounds zero-shot adaptation in the tabular setting under an uninformed reward prior.

\restatetheorem{thm:width_linear_sharp}
Define the skill offset
$u_z \triangleq \bigl(p^{\pie}(\dmy{s_+}\mid s_0,z)-p^{\pie}(\dmy{s_+}\mid s_0)\bigr)/\sqrt{b}$.
The whitened reward $g\triangleq\sqrt{b}\,r$ is standard normal under the prior, and the marginal's return $\langle g,\ p^{\pie}(\dmy{s_+}\mid s_0)/\sqrt{b}\rangle$ is mean-zero and $z$-independent, so
\begin{equation}
    J(s_0)=\E_{g}\sup_{z\in\cZ}\,\bigl\langle g,\ u_z\bigr\rangle,
    \label{eq:J_width}
\end{equation}
is the Gaussian width of $\{u_z\}_{z\in\cZ}$.

Before presenting the proof, we discuss the chosen reward prior and motivate the variance scaling. Under the $1/b$ scaling, splitting a state into substates leaves the distribution of the returns unchanged.

\begin{example}[Splitting a two-state MDP into a four-state MDP \cref{fig:state_refinement}]
    \label{ex:state_refinement}
    Let $\cS=\{1,2\}$ with uniform reference $b(1)=b(2)=\tfrac12$. The prior samples $r(1),r(2)\sim\cN(0,2)$ independently (\cref{fig:state_refinement}). For a skill $z$ with \gls{dsom} $p^{\pie}(\dmy{s_+}\mid s_0,z)=(p_1,p_2)$, the mean reward is a centered Gaussian:
    \[
        \E_{p^{\pie}(\dmy{s_+}\mid s_0,z)}[r(s_+)]
        = p_1\,r(1)+p_2\,r(2)
        \ \sim\ \cN\!\left(0,\ 2p_1^2+2p_2^2\right).
    \]
    Splitting each state into two substates gives $\cS'=\{1a,1b,2a,2b\}$ where the refined prior samples each coordinate reward $r'\sim\cN(0,4)$. Furthermore, split every skill's occupancy in the same proportions, so the occupancy of $z$ becomes $\bigl(\tfrac{p_1}{2},\tfrac{p_1}{2},\tfrac{p_2}{2},\tfrac{p_2}{2}\bigr)$. The mean reward on the fine-grained space is
    \[
        \tfrac{p_1}{2}\bigl(r'(1a)+r'(1b)\bigr)+\tfrac{p_2}{2}\bigl(r'(2a)+r'(2b)\bigr)
        \ \sim\ \cN\!\left(0,\ \tfrac{p_1^2}{4}\cdot 8+\tfrac{p_2^2}{4}\cdot 8\right)
        =\cN\!\left(0,\ 2p_1^2+2p_2^2\right),
    \]
    since each sum $r'(\cdot a)+r'(\cdot b)$ has variance $8$. Thus, the distribution of the mean reward is the same before and after splitting the state.

    In this example, the $1/b$ variance scaling is key. Under a unit-variance prior $r(s_+)\sim\cN(0,1)$, the same state splitting shrinks the return variance from $p_1^2+p_2^2$ to $\tfrac12\bigl(p_1^2+p_2^2\bigr)$. Thus, the constraint that the return distribution is scale invariant implies the above scaling of the reward prior variance.
\end{example}

\begin{figure}[t]
    \centering
    \begin{tikzpicture}[x=1cm,y=1cm]
        \node[font=\bfseries, anchor=west] at (-0.35,2.05) {(a)};
        \node at (1.15,2.05) {$\cS=\{1,2\}$};
        \draw[->] (-0.1,0) -- (2.4,0);
        \fill[black!25] (0.25,0) rectangle (0.95,1.4);
        \draw (0.25,0) rectangle (0.95,1.4);
        \fill[black!25] (1.35,0) rectangle (2.05,0.6);
        \draw (1.35,0) rectangle (2.05,0.6);
        \node at (0.6,1.65) {$p_1$};
        \node at (1.7,0.85) {$p_2$};
        \node at (0.6,-0.3) {$1$};
        \node at (1.7,-0.3) {$2$};
        \node at (1.15,-0.85) {$b\equiv\tfrac12,\ \ r(s_+)\sim\cN(0,2)$};
        \node at (1.15,-1.5) {$\E_{p}[r]\ \sim\ \cN\!\bigl(0,\,2p_1^2+2p_2^2\bigr)$};
        \draw[->,thick] (2.85,0.9) -- (4.95,0.9);
        \node[align=center] at (3.9,1.4) {refine: $b'=\tfrac{b}{2}$};
        \node[align=center] at (3.9,0.35) {occupancy splits\\[-2pt] in proportion};
        \node at (3.9,-1.5) {$\overset{d}{=}$};
        \node[font=\bfseries, anchor=west] at (4.85,2.05) {(b)};
        \node at (7.1,2.05) {$\cS'=\{1a,1b,2a,2b\}$};
        \draw[->] (5.35,0) -- (8.9,0);
        \fill[black!25] (5.5,0) rectangle (6.1,0.7);
        \draw (5.5,0) rectangle (6.1,0.7);
        \fill[black!25] (6.3,0) rectangle (6.9,0.7);
        \draw (6.3,0) rectangle (6.9,0.7);
        \fill[black!25] (7.3,0) rectangle (7.9,0.3);
        \draw (7.3,0) rectangle (7.9,0.3);
        \fill[black!25] (8.1,0) rectangle (8.7,0.3);
        \draw (8.1,0) rectangle (8.7,0.3);
        \node at (5.8,0.98) {$\tfrac{p_1}{2}$};
        \node at (6.6,0.98) {$\tfrac{p_1}{2}$};
        \node at (7.6,0.58) {$\tfrac{p_2}{2}$};
        \node at (8.4,0.58) {$\tfrac{p_2}{2}$};
        \node at (5.8,-0.3) {$1a$};
        \node at (6.6,-0.3) {$1b$};
        \node at (7.6,-0.3) {$2a$};
        \node at (8.4,-0.3) {$2b$};
        \node at (7.1,-0.85) {$b'\equiv\tfrac14,\ \ r'(s_+)\sim\cN(0,4)$};
        \node at (7.1,-1.5) {$\E_{p'}[r']\ \sim\ \cN\!\bigl(0,\,2p_1^2+2p_2^2\bigr)$};
    \end{tikzpicture}
    \caption{\textbf{State refinement leaves return distribution unchanged (\cref{ex:state_refinement}).} Bars display the occupancy $p^{\pie}(\dmy{s_+}\mid s_0,z)=(p_1,p_2)$ of a skill over the coarse states (a) and the occupancy over the split states (b). Each panel lists the reference mass and per-state reward prior $r(s_+)\sim\cN(0,1/b(s_+))$. Splitting states halves the reference mass and doubles the reward variance. Thus, the return obeys the same distribution $\cN(0,\,2p_1^2+2p_2^2)$ in both \gls{mdp}s.}
    \label{fig:state_refinement}
\end{figure}

Now that we have built intuition for the choice of reward prior, we present the proof:
\begin{proof}
    By \cref{eq:J_width}, it suffices to lower bound the Gaussian width. The proof consists of two inequalities:
    \begin{equation}
        J(s_0)
        \ \overset{\text{step 1}}{\ge}\
        \frac{1}{\sqrt{2\pi}}
        \left(
        \E_{p(\dmy{z}\mid s_0)}
        \left\|u_z\right\|^2
        \right)^{1/2}
        \ \overset{\text{step 2}}{\ge}\
        \frac{I(Z;S_+\mid S_0=s_0)}{c\sqrt{2\pi}}.
        \label{eq:linear_rate_pivot}
    \end{equation}

    Fix the reward $r$ and a skill $z\in\cZ$.
    For the first inequality (step 1), we observe that the best average return obtainable by a skill dominates the marginal skill return:
    \[
        \sup_{z'\in\cZ}\E_{p^{\pie}(\dmy{s_+}\mid s_0, z')}[r(s_+)]
        \ \ge\
        \max\left(
        \E_{p^{\pie}(\dmy{s_+}\mid s_0, z)}[r(s_+)],\
        \E_{p^{\pie}(\dmy{s_+}\mid s_0)}[r(s_+)]
        \right).
    \]
    Take the expectation over rewards on both sides using the identity $\max(x,y)=\tfrac12(x+y)+\tfrac12|x-y|$. Both returns are linear in $r$, so the $\tfrac12(x+y)$ term vanishes and
    \[
        J(s_0)
        \ \ge\
        \tfrac12\,
        \E_r\left|
        \E_{p^{\pie}(\dmy{s_+}\mid s_0, z)}[r(s_+)]-\E_{p^{\pie}(\dmy{s_+}\mid s_0)}[r(s_+)]
        \right|.
    \]
    The gap in returns is an inner product of the whitened quantities:
    \[
        \E_{p^{\pie}(\dmy{s_+}\mid s_0, z)}[r(s_+)]-\E_{p^{\pie}(\dmy{s_+}\mid s_0)}[r(s_+)]
        =\left\langle g,\ u_z\right\rangle.
    \]
    Because the coordinates of $g$ are i.i.d.\ standard Gaussians, we know that the gap in returns is a Gaussian with variance $\|u_z\|^2$.
    Hence, for every $z\in\cZ$,
    \begin{align*}
        J(s_0)
        \  & \ge\
        \tfrac12\,
        \E_r\left|\left\langle g,\ u_z\right\rangle\right| \\
        \  & =\
        \frac{1}{\sqrt{2\pi}}
        \left\|u_z\right\|.
    \end{align*}
    Averaging the square of this bound over $z\sim p(\dmy{z}\mid s_0)$ (LHS is independent of the skill $z$) gives step 1 of \cref{eq:linear_rate_pivot}. For step 2, express the \gls{mi} as the expected KL divergence between the skill prior and the skill posterior after observing $s_+$:
    \[
        I(Z;S_+\mid S_0=s_0)
        =\sum_{s_+}p^{\pie}(s_+\mid s_0)\,
        \KL\bigl(p^{\pie}(\dmy{z}\mid s_0,s_+)\,\big\|\,p(\dmy{z}\mid s_0)\bigr).
    \]
    Applying Bayes' rule, the $\chi^2$ divergence takes the form of a mean squared deviation of the skill \gls{dsom}s from the marginal:
    \begin{align*}
        \chi^2\bigl(p^{\pie}(\dmy{z}\mid s_0,s_+)\,\big\|\,p(\dmy{z}\mid s_0)\bigr)
         & =\E_{p(\dmy{z}\mid s_0)}\left[\left(\frac{p^{\pie}(s_+\mid s_0, z)}{p^{\pie}(s_+\mid s_0)}-1\right)^{2}\right]
        \\
         & =\frac{\E_{p(\dmy{z}\mid s_0)}\bigl[\bigl(p^{\pie}(s_+\mid s_0, z)-p^{\pie}(s_+\mid s_0)\bigr)^{2}\bigr]}{p^{\pie}(s_+\mid s_0)^{2}} .
    \end{align*}
    Then, through $\chi^2$, we can upper bound the \gls{mi} with the norm of the occupancy gap vector:
    \begin{align*}
        I(Z;S_+\mid S_0=s_0)
         & \le
        c\sum_{s_+}p^{\pie}(s_+\mid s_0)
        \sqrt{\chi^2\bigl(p^{\pie}(\dmy{z}\mid s_0,s_+)\,\big\|\,p(\dmy{z}\mid s_0)\bigr)} \tag{\cref{thm:kl_chi2}}
        \\
         & =
        c\sum_{s_+}\sqrt{\E_{p(\dmy{z}\mid s_0)}\bigl[\bigl(p^{\pie}(s_+\mid s_0, z)-p^{\pie}(s_+\mid s_0)\bigr)^{2}\bigr]}
        \\
         & =
        c\sum_{s_+}\sqrt{b(s_+)}\,
        \sqrt{\frac{\E_{p(\dmy{z}\mid s_0)}\bigl[\bigl(p^{\pie}(s_+\mid s_0, z)-p^{\pie}(s_+\mid s_0)\bigr)^{2}\bigr]}{b(s_+)}}
        \\
         & \le
        c\left(\sum_{s_+}\frac{\E_{p(\dmy{z}\mid s_0)}\bigl[\bigl(p^{\pie}(s_+\mid s_0, z)-p^{\pie}(s_+\mid s_0)\bigr)^{2}\bigr]}{b(s_+)}\right)^{1/2} \tag{Cauchy-Schwarz}
        \\
         & =
        c\left(\E_{p(\dmy{z}\mid s_0)}\left\|u_z\right\|^{2}\right)^{1/2}.
    \end{align*}
    The first line applies \cref{thm:kl_chi2} with the bound $\ln(1+x)\le c\sqrt{x}$ to each divergence. The second line expands $\chi^2$ and cancels the marginal weight. The third and fourth lines apply Cauchy--Schwarz. The last equality is the definition of $u_z$, proving step 2 of \cref{eq:linear_rate_pivot} which finishes the proof.
\end{proof}

We provide some didactic tabular experiments that demonstrate the behavior and scaling of the bound over different \gls{mdp}s.
In \cref{fig:kl_width_J_quality}, we demonstrate the relationship between the degree of commitment within an \gls{mdp} -- the controllability of the \gls{mdp} -- and the bound for a fixed set of states.
As controllability of the \gls{mdp} increases, the bound tightens.
In \cref{fig:kl_width_J_vs_bound}, we demonstrate the relationship between the number of absorbing target states in an \gls{mdp} and the bound under fixed commitment.
As the number of absorbing target states grows, the bound loosens.

\begin{figure}[t]
    \centering
    \includegraphics[width=0.6\linewidth]{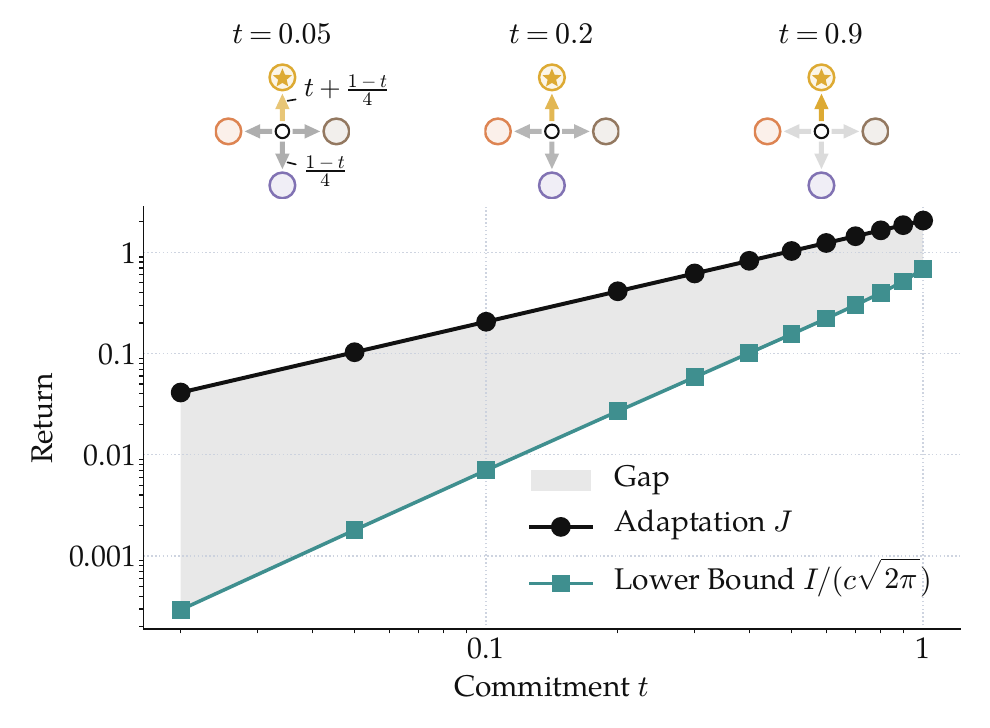}
    \caption{\textbf{\gls{mi} bound as a function of skill commitment} (\cref{thm:width_linear_sharp})\textbf{.} Consider the \gls{mdp} consisting of 5 states with a central starting state, where any left/right/up/down action sends probability mass to absorbing left/right/up/down states. Actions left/right/up/down have a level of commitment $t$, which measures the additional mass above uniform assigned to the corresponding state. When commitment is small, probability is almost spread equally over all four states regardless of action so there is little control. In this regime, the adaptation $J$ dominates the lower bound. As the commitment increases, actions/skills commit to states and the bound tightens.}
    \label{fig:kl_width_J_quality}
\end{figure}

\begin{figure}[t]
    \centering
    \includegraphics[width=0.6\linewidth]{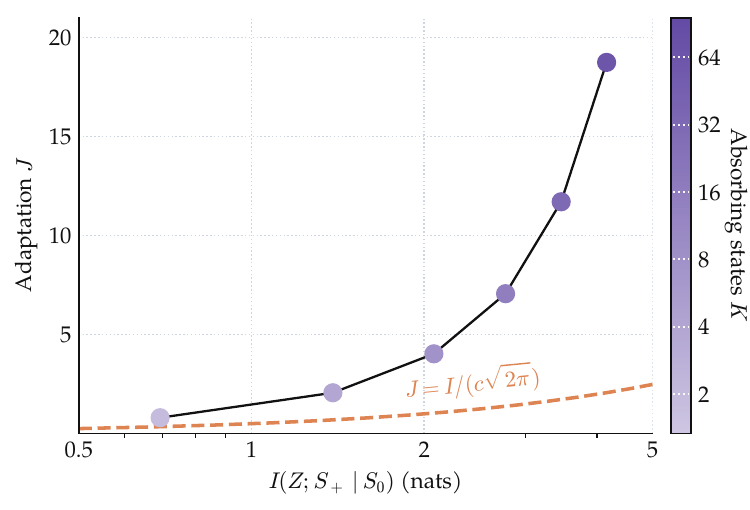}
    \caption{\textbf{\gls{mi} bound as a function of absorbing states} (\cref{thm:width_linear_sharp})\textbf{.} Consider an \gls{mdp} with one starting state and $K$ absorbing target states. Each of the $K$ actions transitions deterministically to a distinct target. \textbf{We normalize occupancies over the targets (neglecting the occupancy of the starting state).} Thus, $K$ counts the maximum possible number of total skills. The scaled \gls{mi} (orange) correctly lower bounds the adaptation $J$ (purple). The bound weakens as the number of absorbing target states in the \gls{mdp} grows.}
    \label{fig:kl_width_J_vs_bound}
\end{figure}

\section{When Information Geometry is \emph{Insufficient} for Adaptation}
\label{app:proof_continuous_collapse}

The adaptation result above is in the tabular setting.
For the following results, we also consider \gls{mdp}s whose state space $\cS$ is a metric space with arbitrary metric $d$. Importantly, these state spaces may be continuous.

\begin{proposition}[\gls{mdp}s can admit arbitrarily many distinct skills]
    \label{prop:admits_all_k}
    Fix a discount $\gamma\in(0,1)$ and a skill space $\cZ=[K]$ with the uniform prior $Z\sim\mathrm{Unif}([K])$. There exists an \gls{mdp} with a start state $s\in\cS$ and $K$ skills that attains the potential empowerment
    \[
        \gEp(s)=\max_{\pie} I^{\pie}(Z;S_+\mid S_0=s)=\gamma\log K .
    \]
    If $\cS$ is continuous, there exists an \gls{mdp} where a continuum of these skills gives unbounded potential empowerment $\gEp(s)=+\infty$.
\end{proposition}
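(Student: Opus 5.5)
I will prove this by explicit construction: a star-shaped \gls{mdp} whose potential empowerment is computed exactly. Take a start state $s$ and $K$ absorbing states $s_1,\dots,s_K$. There are $K$ actions, and action $k$ at $s$ moves deterministically to $s_k$. Under the \gls{dsom} of \cref{eq:dsom}, any policy places mass $1-\gamma$ on $S_+=s$ (the event $\tau=0$). The remaining mass $\gamma$ is spread over the absorbing states according to the first-step action distribution. The reachable polytope $\cP(s)$ is therefore the set of vectors $(1-\gamma)\delta_s+\gamma\mu$ with $\mu\in\Delta(\{s_1,\dots,s_K\})$. Its vertices are $v_k=(1-\gamma)\delta_s+\gamma\delta_{s_k}$.

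For the lower bound, let skill $k$ always take action $k$, with uniform prior on $[K]$. Then $S_+=s$ carries no information about $Z$, while $S_+=s_k$ reveals $Z=k$ exactly. This gives
\[
I(Z;S_+\mid S_0=s)=H(Z)-H(Z\mid S_+)=\log K-(1-\gamma)\log K=\gamma\log K .
\]
For the upper bound, I will use the dual form in \cref{thm:effective_info_geom}. Take the center $q=(1-\gamma)\delta_s+\tfrac{\gamma}{K}\sum_k\delta_{s_k}$. Every vertex satisfies $\KL(v_k\|q)=\gamma\log K$. By convexity of KL in its first argument, the maximum over $\cP(s)$ is attained at a vertex, so
\[
\gEp(s)\le\max_{p\in\cP(s)}\KL(p\|q)=\gamma\log K .
\]
A direct alternative also works: $I(Z;S_+)\le I(Z;S_+\mid E)+I(Z;E)$, where $E=\mathbf 1[\tau\ge1]$ is independent of $Z$. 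This is bounded by $\gamma H(\text{first absorbing state})\le\gamma\log K$. Either route gives equality, and by \cref{lem:uniform_continuous} the uniform skill prior loses nothing here.

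For the continuous statement, I replace the $K$ absorbing states with a continuum. Take a set $\cS$ containing the start state $s$ and an uncountable Borel set $B$ of absorbing states (e.g.\ an interval), and let the action space be $\cA=B$, with action $a$ moving deterministically to $a$. Let skills $z\in\cZ$ be uniform and choose a measurable bijection $z\mapsto a(z)\in B$. On the event $\tau\ge1$, $S_+$ then determines $Z$. Hence $I(Z;S_+)\ge\gamma\, I(Z;S_+\mid \tau\ge1)=\gamma\, I(Z;Z)=\infty$, since $Z$ is continuous.

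To be safe, the finite-$K$ sub-argument also works as a fallback. Restricting to $K$ distinct targets in $B$ gives $\gEp(s)\ge\gamma\log K$ for every $K$, so $\gEp(s)=+\infty$.

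The main obstacle is the continuous case, specifically making the measurability requirements of \cref{asm:selector} compatible with infinite mutual information. The finite-$K$ lower bound sidesteps this: it needs only finitely many measurable skills, and letting $K\to\infty$ gives unboundedness. I would present that as the actual proof and mention the direct continuum identity as a remark.
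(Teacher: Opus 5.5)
Your proof is correct and uses the same star-shaped construction as the paper: a start state $s$, $K$ absorbing targets, and skill $z$ jumping to $s_z$, giving occupancies $(1-\gamma)\delta_s+\gamma\delta_{s_z}$ and $I=\gamma\log K$. The difference is that you also prove a converse, which the paper does not. The paper only exhibits the achieving skills (its proof opens by saying it constructs MDPs with $\gEp(s)\ge\gamma\log K$) and then asserts that this value ``obtains the potential empowerment.'' Your argument identifies $\cP(s)=\{(1-\gamma)\delta_s+\gamma\mu\}$ and takes the center $q=(1-\gamma)\delta_s+\tfrac{\gamma}{K}\sum_k\delta_{s_k}$, for which $\KL(p\|q)=\gamma(\log K-H(\mu))\le\gamma\log K$ on all of $\cP(s)$. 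Your chain-rule bound through $E=\mathbf{1}[\tau\ge 1]$ gives the same conclusion. This is what makes the stated equality $\gEp(s)=\gamma\log K$ rigorous, and it holds for every skill source, not only the uniform one.

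In the continuous case, your fallback (finitely many distinct targets for each $K$, then $K\to\infty$) is essentially the paper's argument. The paper instead quantizes a single continuum of skills into $n$ segments and applies data processing. Your direct identity $I(Z;S_+)=\gamma\, I(Z;a(Z))=\infty$ is a shorter alternative. It additionally needs $Z$ to be non-atomic and $a(\cdot)$ to have a Borel inverse, which is automatic for Borel bijections between standard Borel spaces.

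One small correction: \cref{lem:uniform_continuous} concerns non-atomic reference sources and does not apply to $\mathrm{Unif}([K])$. In fact, the paper's \cref{lem:discrete_counterexample} shows that uniform discrete priors can lose capacity in general. The optimality of the uniform prior in this example should therefore be credited to your matching upper and lower bounds, not to that lemma.
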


\begin{proof}

    \emph{Discrete case.}
    We construct \gls{mdp}s that admit $\gEp(s) \geq \gamma \log K$ for all $K$.

    Consider an \gls{mdp} that admits a large number of distinguishable occupancies. Then, we can assign each skill a distinct occupancy at some target future state. In this limit, the skill becomes readily identifiable from the future state.

    For finite $K$, consider a discretized state space $\cS$ with states $s,s_1,\ldots,s_K \in \cS$ where all states $s_1, \ldots, s_K$ are absorbing. Let skill $z$ immediately transition from start state $s$ to another target state. Then, the induced \gls{dsom} is
    \[
        p^{\pie}(\dmy{s_+}\mid s,z)=(1-\gamma)\,\delta_{s}+\gamma\,\delta_{s_z},
    \]
    where $\delta$ is the Kronecker delta. Here, the skills share the start state $s$ with probability mass $1-\gamma$ and are otherwise disjoint over $\cS\setminus\{s\}$. Observing $S_+ \neq s$ exactly determines the skill while observing $S_+ = s$ gives no information. Thus, the \gls{mi} is
    \[
        I^{\pie}(Z;S_+\mid S_0=s)=\gamma\,H(Z)=\gamma\log K,
    \]
    which obtains the potential empowerment.

    \emph{Continuous case.} We similarly construct  \gls{mdp}s that admit unbounded $\gEp(s)$. When $\cS$ is continuous, we can index skills by a continuous parameter rather than a finite set and map $z \in [0, 1]$ to $S \setminus \{s\}$. The induced \gls{dsom} is
    \[
        p(\dmy{s_+}\mid s,z)=(1-\gamma)\,\delta_s+\gamma\,\delta_{\phi(z)}
    \]
    where $\delta$ is now the Dirac delta. As before, any final state that is not the start state identifies the skill. If we cut $[0,1]$ into $n$ equal segments, the future state $S_+$ reveals at least $\gamma\log n$ bits about the skill. Since this construction holds for all $n$, we have that
    \[
        I(Z;S_+\mid S_0=s)\ge\sup_n \gamma\log n=+\infty .
    \]
    Thus, we have constructed an \gls{mdp} where the \gls{mi} is unbounded for every $\gamma\in(0,1)$ for continuous state spaces.
\end{proof}

We can utilize this proposition to prove that potentially infinite bits can fit within a tiny metric ball:
\begin{proposition}[Many bits can fit in a tiny metric ball]
    \label{prop:mi_lipschitz_collapse}
    Fix a start state $s$ and a ball $B(\bar s,\epsilon)$ of radius $\epsilon$ with infinitely many states, centered at an arbitrary state $\bar s\in\cS$. For every $K$, there exists an \gls{mdp} with $K$ skills as in \cref{prop:admits_all_k} whose target states $s_z$ all lie inside the ball, so
    \[
        I(Z;S_+\mid S_0=s)=\gamma\log K,
    \]
    yet every bounded 1-Lipschitz reward $r$ gives all skills nearly the same return:
    \[
        \sup_{z,z'}\left|
        \E_{p(\dmy{s_+}\mid s,z)}[r(S_+)]-
        \E_{p(\dmy{s_+}\mid s,z')}[r(S_+)]
        \right|
        \le 2\gamma\epsilon
        \le 2\epsilon.
    \]
    With a continuum of skills inside the ball, the \gls{mi} can be unbounded while the upper bound on the return difference between skills is unchanged.
\end{proposition}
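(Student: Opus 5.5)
We reuse the construction of \cref{prop:admits_all_k} and only change where the targets live. Since the ball $B(\bar s,\epsilon)$ contains infinitely many states, for every $K$ we can pick $K$ distinct points $s_1,\ldots,s_K\in B(\bar s,\epsilon)$, all different from the start state $s$. We build the \gls{mdp} in which each $s_z$ is absorbing and skill $z$ moves from $s$ to $s_z$ in one step. Each skill then has \gls{dsom}
\[
    p(\dmy{s_+}\mid s,z)=(1-\gamma)\,\delta_s+\gamma\,\delta_{s_z}.
\]
The information computation from \cref{prop:admits_all_k} applies unchanged, since it only uses that the targets are distinct and differ from $s$. Observing $S_+=s$ reveals nothing about the skill, and observing any $S_+\neq s$ identifies it exactly. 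This gives $I(Z;S_+\mid S_0=s)=\gamma H(Z)=\gamma\log K$ under the uniform prior on $[K]$.

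For the return bound, we fix a bounded $1$-Lipschitz reward $r$ and any two skills $z,z'$. Returns are linear in the occupancy, and the shared mass $(1-\gamma)$ on $s$ contributes the same term $(1-\gamma)r(s)$ to both, so it cancels:
\[
    \E_{p(\dmy{s_+}\mid s,z)}[r(S_+)]-\E_{p(\dmy{s_+}\mid s,z')}[r(S_+)]=\gamma\bigl(r(s_z)-r(s_{z'})\bigr).
\]
By the Lipschitz property and the triangle inequality through the center $\bar s$,
\[
    |r(s_z)-r(s_{z'})|\le d(s_z,s_{z'})\le d(s_z,\bar s)+d(\bar s,s_{z'})\le 2\epsilon.
\]
Hence the return gap is at most $2\gamma\epsilon\le 2\epsilon$, and this holds uniformly over all pairs of skills. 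Boundedness of $r$ is needed only so that the expectations are finite.

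For the continuum statement, we take an injective measurable map $\phi:[0,1]\to B(\bar s,\epsilon)\setminus\{s\}$. This is possible because a ball with infinitely many points in a standard Borel space should contain a Borel set of cardinality continuum; if only countably many points are available, we instead use countably many skills with a prior of infinite entropy. We then reuse the partition argument from \cref{prop:admits_all_k}: cutting $[0,1]$ into $n$ segments gives $I\ge\gamma\log n$ for every $n$, so $I=+\infty$. The return-difference bound depends only on the targets lying in the ball, so it stays $2\gamma\epsilon$.

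The main obstacle is this last measure-theoretic point: we need enough distinct targets inside the ball, and a measurable injective embedding of the skill space into it. The finite-$K$ claims are routine once the targets are placed.
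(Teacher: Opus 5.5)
Your proposal is correct and follows essentially the paper's own proof. You reuse the one-step construction of \cref{prop:admits_all_k} with the targets placed inside the ball, cancel the shared $(1-\gamma)r(s)$ term so the return gap is $\gamma\,|r(s_z)-r(s_{z'})|$, and bound that gap by $2\gamma\epsilon$ via the triangle inequality through $\bar s$. Your extra care (choosing the targets distinct from $s$, and the countable-ball fallback with an infinite-entropy prior, which is legitimate by \cref{lem:uniform_continuous}) fills in details that the paper's short proof leaves implicit.
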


\begin{proof}
    We can apply \cref{prop:admits_all_k} to the ball $B(\bar s, \epsilon)$ and construct valid \gls{mdp}s with the above \gls{mi}s. For any 1-Lipschitz reward, the skill returns of $z \neq z'$ in these \gls{mdp}s differ by
    \begin{align*}
        \left|
        \E_{p(\dmy{s_+}\mid s,z)}[r(S_+)]-
        \E_{p(\dmy{s_+}\mid s,z')}[r(S_+)]
        \right|
         & =\gamma\,|r(s_z)-r(s_{z'})|                            \\
         & \le \gamma\,\bigl(d(s_z,\bar s)+d(\bar s,s_{z'})\bigr) \\
         & \le 2\gamma\epsilon
    \end{align*}
    which proves the result.
\end{proof}

For any 1-Lipschitz reward, it is always possible for a set of skills to have high \gls{mi} (potentially unbounded) while providing no adaptation advantage for continuous rewards beyond a single policy. Thus,
we should \emph{not} extrapolate the tabular adaptation bound (\cref{thm:width_linear_sharp}) to continuous spaces with smooth rewards.

We show an example of this collapse in \cref{fig:mi_ball_collapse}. The reachable future state occupancies pack closer together in continuous space from left to right. In the limit, the difference in values between skills shrinks to zero while the \gls{mi} diverges, so the discrete adaptation bound becomes trivial. We note that prior works explicitly point out and propose solutions to the problem of diverging channel capacities in continuous spaces~\citep{tiomkin2017control,salge2013continuous}, though the works do not formally connect empowerment and adaptation.

\section{Anisotropic Reward Priors}
\label{app:anisotropic}

This section relaxes the reward isotropy assumption in the adaptation result \cref{thm:width_linear_sharp}. Unfortunately, using an anisotropic reward prior with covariance $\Sigma$ scales the lower bound by $\sqrt{\lambda_{\min}(\Sigma)}$ (\cref{prop:anisotropic_degraded}). In the limit, skills with arbitrarily large mutual information can exhibit no adaptation even with fixed reward power constraints (e.g. fixed $\Tr{\Sigma}$; see \cref{prop:anisotropic_collapse}).

We keep the discrete setting of \cref{thm:width_linear_sharp} and only change the reward prior. We sample rewards with $\sqrt{b}\,r\sim\cN(0,\Sigma)$, so the law of the uniform baseline-whitened reward $g$ in \cref{eq:J_width} becomes $\cN(0,\Sigma)$. An arbitrary anisotropic prior may concentrate variance on directions that the empowerment-maximizing skills choose to ignore. The marginal's return remains mean zero for any $\Sigma$, so the adaptation objective takes the form
\[
    J^\Sigma(s_0)
    \triangleq
    \E_{g\sim\cN(0,\Sigma)}
    \left[\sup_{z\in\cZ}\bigl\langle g,\ u_z\bigr\rangle\right].
\]

\makerestatable
\begin{proposition}[Anisotropic reward prior scales bound]
    \label{prop:anisotropic_degraded}
    In the setting of \cref{thm:width_linear_sharp}, sample rewards with $\sqrt{b}\,r\sim\cN(0,\Sigma)$ instead. Then, the \gls{mi} bound rescales by $\sqrt{\lambda_{\min}(\Sigma)}$:
    \[
        J^\Sigma(s_0)
        \;\ge\;
        \sqrt{\lambda_{\min}(\Sigma)}\;
        \frac{I(Z;S_+\mid S_0=s_0)}{c\sqrt{2\pi}}.
    \]
    The isotropic prior $\Sigma=\mathrm{Id}$ recovers \cref{thm:width_linear_sharp} exactly.
\end{proposition}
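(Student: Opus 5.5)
The plan is to reuse the two-step chain of \cref{eq:linear_rate_pivot}, changing only step 1 to account for the covariance $\Sigma$. Step 2, the bound $I(Z;S_+\mid S_0=s_0)\le c\,(\E_{p(z\mid s_0)}\|u_z\|^2)^{1/2}$, involves only the skill occupancies and the baseline $b$, not the reward prior. It therefore carries over verbatim from the proof of \cref{thm:width_linear_sharp}. All that is needed is
\[
    J^\Sigma(s_0)\ \ge\ \frac{\sqrt{\lambda_{\min}(\Sigma)}}{\sqrt{2\pi}}\left(\E_{p(z\mid s_0)}\|u_z\|^2\right)^{1/2}.
\]

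For step 1, fix a skill $z$. As before, the best skill's return dominates both the return of skill $z$ and the marginal's return. The marginal's whitened return is $\langle g, p^{\pie}(\cdot\mid s_0)/\sqrt b\rangle$, which is linear in $g$ and mean zero under $\cN(0,\Sigma)$. So the maximum $\max(\langle g,u_z\rangle,0)$ still lower bounds $\sup_{z'}\langle g,u_{z'}\rangle$ after subtracting the marginal's return. Using $\max(x,0)=\tfrac12(x+|x|)$ and linearity, we get $J^\Sigma(s_0)\ge \tfrac12\E|\langle g,u_z\rangle|$.

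Now $\langle g,u_z\rangle\sim\cN(0,u_z^\top\Sigma u_z)$, so $\tfrac12\E|\langle g,u_z\rangle| = (u_z^\top\Sigma u_z)^{1/2}/\sqrt{2\pi}$. The only new ingredient is the Rayleigh quotient bound $u_z^\top\Sigma u_z\ge\lambda_{\min}(\Sigma)\|u_z\|^2$, valid since $\Sigma$ is symmetric positive semidefinite. This gives $J^\Sigma(s_0)\ge\sqrt{\lambda_{\min}(\Sigma)}\,\|u_z\|/\sqrt{2\pi}$ for every $z$. Because the left side does not depend on $z$, squaring and averaging over $p(z\mid s_0)$ yields the displayed step-1 bound. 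Chaining it with step 2 proves the proposition, and setting $\Sigma=\mathrm{Id}$ gives $\lambda_{\min}=1$, which recovers \cref{thm:width_linear_sharp}.

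The main point needing care is that the reduction of $J^\Sigma$ to the $u_z$ form, i.e.\ dropping the marginal's return, still holds under anisotropy. It does, since that term is $z$-independent and has mean zero for any centered Gaussian. Also, if $\lambda_{\min}(\Sigma)=0$ the bound is trivially true. I expect no real obstacle beyond these checks.
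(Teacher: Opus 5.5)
Your proposal is correct and follows the paper's proof essentially step for step: the same reduction to $\tfrac12\E|\langle g,u_z\rangle|$, the Gaussian variance $u_z^\top\Sigma u_z$, the eigenvalue bound $\langle u_z,\Sigma u_z\rangle\ge\lambda_{\min}(\Sigma)\|u_z\|^2$, and the reward-prior-independent $\chi^2$ step 2. The only cosmetic difference is that you apply the eigenvalue bound per skill before averaging, whereas the paper first averages $\|\Sigma^{1/2}u_z\|^2$ over $z$; this changes nothing.
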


\begin{proof}
    The proof of \cref{thm:width_linear_sharp} uses the reward prior to express a lower bound on $J (s_0)$ in terms of the norm of the occupancy gap vector. Under the $\cN(0,\Sigma)$ prior, the occupancy gap vector variance scales by the anisotropic covariance. Then, the result from \cref{thm:width_linear_sharp} suggests the following modified bound:
    \begin{equation}
        J^\Sigma(s_0)
        \ \overset{\text{step 1}}{\ge}\
        \frac{1}{\sqrt{2\pi}}
        \left(
        \E_{p(\dmy{z}\mid s_0)}
        \left\|\Sigma^{1/2}u_z\right\|^2
        \right)^{1/2}
        \ \overset{\text{step 2}}{\ge}\
        \sqrt{\lambda_{\min}(\Sigma)}\;
        \frac{I(Z;S_+\mid S_0=s_0)}{c\sqrt{2\pi}}.
        \label{eq:linear_rate_pivot_anisotropic}
    \end{equation}
    Now, we carefully derive this bound. The reduction of $J$ to the gap in returns in \cref{thm:width_linear_sharp} only depends on a centered reward. Thus, as before, we have that
    \[
        J^\Sigma(s_0)
        \ \ge\
        \tfrac12\,
        \E_{g}\left|\left\langle g,\ u_z\right\rangle\right|.
    \]
    For $g\sim\cN(0,\Sigma)$, the return gap $\langle g,\ u_z\rangle$ is Gaussian with variance $\bigl\|\Sigma^{1/2}u_z\bigr\|^2$.
    Hence, for every $z\in\cZ$,
    \[
        J^\Sigma(s_0)
        \ \ge\
        \frac{1}{\sqrt{2\pi}}
        \left\|\Sigma^{1/2}u_z\right\|.
    \]
    Averaging the square of this bound over $z\sim p(\dmy{z}\mid s_0)$ (the left side does not depend on $z$) gives step 1. We can lower bound the norm:
    \[
        \|\Sigma^{1/2}u_z\|^2
        =\langle u_z,\Sigma u_z\rangle
        \;\ge\;
        \lambda_{\min}(\Sigma)\,\|u_z\|^2 .
    \]
    Combining with the step 2 bound (does not change with the reward prior) gives the desired result.
\end{proof}

Indeed, we can construct an example where a state and set of policies with high potential empowerment exhibit zero adapation to anisotropic rewards:
\begin{proposition}[Large \gls{mi} with zero anisotropic adaptation]
    \label{prop:anisotropic_collapse}
    Fix a power budget $T>0$. For every $K$, there exists a discrete \gls{mdp} with skills achieving $I(Z;S_+\mid S_0=s_0)=\gamma\log K$ and a reward prior $\sqrt{b}\,r\sim\cN(0,\Sigma)$ with $\Tr{\Sigma}=T$ such that $J^\Sigma(s_0)=0$.
\end{proposition}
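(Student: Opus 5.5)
Reuse the $K$-skill construction of \cref{prop:admits_all_k} and pick $\Sigma$ to put all reward variance on a direction the skill offsets never move. Take states $s_0,s_1,\ldots,s_K$ with $s_1,\ldots,s_K$ absorbing, and let skill $z$ jump to $s_z$. Then
\[
p^{\pie}(\dmy{s_+}\mid s_0,z)=(1-\gamma)\delta_{s_0}+\gamma\delta_{s_z},
\qquad
I(Z;S_+\mid S_0=s_0)=\gamma\log K
\]
under the uniform prior, as computed there. Now add one extra state $s^\dagger$ that no skill ever visits. For instance, make it absorbing and reachable only from itself, so every skill puts zero occupancy on it. The baseline $b$ stays full support on the enlarged space.

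The skill offsets are
\[
u_z=\bigl(p^{\pie}(\dmy{s_+}\mid s_0,z)-p^{\pie}(\dmy{s_+}\mid s_0)\bigr)/\sqrt b,
\]
which equals $\gamma(\delta_{s_z}-\tfrac1K\sum_{z'}\delta_{s_{z'}})/\sqrt b$. These vectors vanish at the coordinates $s_0$ and $s^\dagger$. Set $\Sigma=T\,e_{s^\dagger}e_{s^\dagger}^\top$, which is PSD with $\Tr\Sigma=T$. Then $g\sim\cN(0,\Sigma)$ is supported on the span of $e_{s^\dagger}$, so $\langle g,u_z\rangle=0$ almost surely for every $z$. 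The supremum in $J^\Sigma$ is therefore $0$, and $J^\Sigma(s_0)=0$.

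To avoid the spectator state, one could instead put $\Sigma=T\,e_{s_0}e_{s_0}^\top$ on the start-state coordinate, since every $u_z$ vanishes there too. This works because all skills share occupancy $1-\gamma$ at $s_0$, and it needs no extra state. I would present this version: it is simpler, and it matches the narrative that empowerment ignores reward-relevant directions.

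The main point to check is that $J^\Sigma$ really reduces to the width formula when $\Sigma$ is singular. The reward is then a degenerate Gaussian, and we need the marginal's return to stay mean zero. It does: $r$ is zero-mean in every coordinate, so linearity gives the mean-zero return, and degeneracy does not affect the sup computation. Note also that this is consistent with \cref{prop:anisotropic_degraded}, because $\lambda_{\min}(\Sigma)=0$ makes that lower bound vacuous.

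If a full-rank $\Sigma$ were required, I would instead take $\Sigma=(T-(n-1)\eta)e_{s_0}e_{s_0}^\top+\eta(\mathrm{Id}-e_{s_0}e_{s_0}^\top)$, where $n$ is the number of states. This gives $J^\Sigma=O(\sqrt\eta)\to0$, a limiting statement rather than an exact zero.
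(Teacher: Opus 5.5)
Your proposal is correct and follows the paper's proof: reuse the $K$-skill construction of \cref{prop:admits_all_k} and set $\Sigma=T\,vv^\top$ for a unit vector $v$ orthogonal to every offset $u_z$, so each return gap has zero variance and $J^\Sigma(s_0)=0$. The only difference is how $v$ is found: the paper uses a dimension count ($K$ offsets span at most $K-1$ dimensions of the $(K+1)$-dimensional whitened space), while you exhibit $v=e_{s_0}$ explicitly, which is valid because every $u_z$ vanishes at the shared start-state coordinate.
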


\begin{proof}
    The discrete construction in \cref{prop:admits_all_k} attains $I(Z;S_+\mid S_0=s_0)=\gamma\log K$ with $K$ occupancy vector offsets $u_z$ that span a subspace of dimension at most $K-1$ of the $(K{+}1)$-dimensional whitened reward space.

    Choose a unit vector $v$ orthogonal to every $u_z$ and set the reward variance $\Sigma=T\,vv^\top$. This choice places all variance of the reward prior in the space orthogonal to the span of the occupancy gap vectors. By the proof of \cref{prop:anisotropic_degraded}, each return gap $\langle g,u_z\rangle$ is Gaussian with variance $\|\Sigma^{1/2}u_z\|^2=0$, so $J^\Sigma(s_0)=0$ while the \gls{mi} is unchanged.
\end{proof}

Thus, while the \gls{mi} alone can provide nontrivial lower bounds on adaptation in tabular settings for an isotropic reward prior, the same no longer holds true for anisotropic reward priors. An empowerment-maximizing agent could learn skills with high control over observations and ignore other learnable skills that control the rewarding degrees of freedom.

\section{Tabular Experiment Details}
\label{app:tab_exps}

\subsection{Tabular $5\times5$ grid experiments}

\label{app:5x5}
\Cref{tab:5x5_hyperparams} details the $5\times 5$ grid \gls{mdp} parameters used for \cref{fig:eff_emp_5x5}.

\paragraph{Skill-learning via convex optimization.} In the state centrality experiments, we used a convex solver to enumerate vertices of the reachable discounted state occupancy measure (\gls{dsom}) polytope for all possible starting states. An \gls{mdp} with $|\mathcal{S}|$ states and $|\mathcal{A}|$ actions can have up to $|\mathcal{A}|^{|\mathcal{S}|}$ vertices. Thus, enumerating all vertices is challenging~\citep{eysenbach2021information}.

However, we can use the fact that reward-maximizing \gls{dsom} always sit at the vertices of the convex polytope to conduct vertex search. Specifically, we (1) sample a random reward vector $\theta \in \Delta(\mathcal{S})$ in the state simplex and (2) use a convex solver to maximize \gls{dsom} alignment with the reward vector subject to probability flow constraints~\citep{puterman2014markov}. Since a linear objective over the reachable state polytope is maximized at an extreme point, and a generic random objective exposes a vertex almost surely, this procedure retrieves valid vertices of the polytope conditioned on specific starting states. This procedure generally does not retrieve the complete set of candidate vertices over which to specify skills, but it provides a good approximation.

Policies that optimize
\[\max_{\pi(a \mid s, z; s_0),\, p(z \mid s_0)} I(Z;S_+ \mid S_0 = s_0),\]
seek out the vertices of the polytope (see \cref{thm:effective_info_geom}). Thus, we fix the approximate feasible set of optimal policies to the set of found vertices. This procedure constructs a one-to-one mapping between skills and vertex \gls{dsom}s. Then, getting the empowerment-maximizing policy with known dynamics is simple: it is well known that for a fixed channel encoder $p^\pi(s_+ \mid s_0, z)$, the \gls{mi} is concave in the source distribution $p(z \mid s_0)$ and convex in the channel (\cref{thm:mi_convexity}; \citealp{cover2006elements}). Thus, finding the optimal source distribution $p(z \mid s_0)$ is reduced to a convex optimization problem solvable with standard convex optimization packages (see the Blahut--Arimoto algorithm; \citealp{arimoto1972algorithm, blahut1972computation}).

The results in \cref{fig:eff_emp_5x5} show that the latent $z$-conditioned policies that start at $s_0 = (2,2)$ cover all states within the \gls{mdp}.

\paragraph{Learning empowerment-maximizing policies.} The above procedure optimizes the channel capacity over the sampled vertex set, yielding a lower bound on
\[\gEp(s_0) = \max_{\pi(a \mid s, z; s_0),\, p(z \mid s_0)}I(Z;S_+ \mid S_0 = s_0)\]
that tightens as the candidate vertex set grows. We use standard Q-learning methods to obtain the greedy policy that maximizes
\[
    Q^{\pip}(s,a) = \mathbb{E}\left[\sum_{t=0}^{\infty} \gamma^t \gEp (s_t) \middle| s_0 = s, a_0 = a \right],
\]
which we refer to as the potential empowerment $Q$-value. We anneal the epsilon greedy action rate from $1.0 \rightarrow 0.01$ from the start to the end of Q-learning. The optimal policy shown in \cref{fig:eff_emp_5x5}(c) moves to the center of the room, maximizing reachability to all other states in the \gls{mdp}.

\begin{table}[htb]
    \centering
    \caption{Hyperparameters for the $5\times5$ gridworld experiments (\cref{fig:eff_emp_5x5})}
    \begin{tabular}{@{}p{0.42\linewidth}p{0.50\linewidth}@{}}
        \toprule
        \textbf{Parameters}                              & \textbf{Value}                                \\
        \midrule
        Environment size (n $\times$ m)                  & 5 $\times$ 5                                  \\
        Environment actions                              & \{Left, Right, Up, Down, Stay\}               \\
        Periodic boundary conditions                     & No                                            \\
        $P(\text{random action})$                        & 0.0                                           \\
        Gamma for \gls{dsom}                             & 0.95                                          \\
        \midrule
        \# uniformly sampled vertex directions per $s_0$ & 500                                           \\
        Convex solver                                    & Clarabel~\citep{goulart2024clarabel}          \\
        Blahut--Arimoto max iterations                                & \texttt{50\_000}                              \\
        Blahut--Arimoto tolerance (capacity gap)                                  & $1$e$-6$                                      \\
        Vertex seed                                      & 0                                             \\
        \midrule
        Epsilon schedule                                 & Linear interpolation from $1\rightarrow 0.01$ \\
        Gamma for potential policy                       & 0.95                                          \\
        \# Q-learning iterations                         & $2\times 10^5$                                \\
        Q-learning step size $\alpha$ & 0.1 \\
        \midrule
    \end{tabular}
    \label{tab:5x5_hyperparams}
\end{table}

\subsection{Bottleneck grid experiments}
\label{app:bottleneck_exps}

\Cref{fig:bottleneck} uses the same vertex search and source-distribution optimization as \cref{app:5x5}. The only difference is that in this GridWorld, the map is now $8 \times 8$ and has impenetrable walls within the state space. We consider $\gamma\in\{0.5,0.95\}$.

This \gls{mdp} demonstrates the bottleneck effect discussed in \cref{subsec:bottlenecks}. In the open grid, the center state has maximal potential empowerment. With the added walls, the bottleneck openings at the gaps in the walls have high empowerment. At these states, the agent's choice in skill substantially changes its temporal distance to future states.

\subsection{Key-finding experiments}

\label{app:key}

\Cref{tab:key_hyperparams} gives the key-finding experiment hyperparameters.

We use a numerical experimental setup identical to that of the $5\times 5$ room with walls. In this \gls{mdp}, the agent can pick up and drop off a key at the square $(0,2)$. Picking up this key allows access to the blue-shaded regions of the \gls{mdp}; without the key, the agent cannot enter these spaces in the grid.

\cref{fig:pot_emp} shows the resulting empowerment-maximizing policy that ascends the empowerment landscape. Starting at state $s_0 = (4,2)$ without a key, the agent learns to navigate to the key, pick it up, and navigate to the center square $(2,2)$. This experiment shows that states with maximal empowerment capture an information-geometric notion of centrality---rather than simply navigating to and staying at the {\it physically} center state, as in the $5 \times 5$ grid, the agent recognizes that the most central state is the one where it has obtained the key.

\begin{table}[htb]
    \centering
    \caption{Hyperparameters for the $5\times5$ gridworld with walls and a key}
    \begin{tabular}{@{}p{0.42\linewidth}p{0.50\linewidth}@{}}
        \toprule
        \textbf{Parameters}                              & \textbf{Value}                                                               \\
        \midrule
        Environment size (n $\times$ m)                  & 5 $\times$ 5                                                                 \\
        Environment actions                              & \{Left, Right, Up, Down, Stay, Pick up key at $(0,2)$, Drop key at $(0,2)$\} \\
        Periodic boundary conditions                     & No                                                                           \\
        $P(\text{random action})$                        & 0.0                                                                          \\
        Gamma for skill-learning                         & 0.99                                                                         \\
        \midrule
        \# uniformly sampled vertex directions per $s_0$ & 1000                                                                         \\
        Convex solver                                    & Clarabel~\citep{goulart2024clarabel}                                         \\
        Blahut--Arimoto max iterations                                & \texttt{50\_000}                                                             \\
        Blahut--Arimoto tolerance (capacity gap)                                  & $10^{-6}$                                                                     \\
        Vertex seed                                      & 0                                                                            \\
        \midrule
        Epsilon schedule                                 & Linear interpolation from $1\rightarrow 0.01$                                \\
        Gamma for potential policy                       & 0.99                                                                         \\
        \# Q-learning iterations                         & $2\times 10^5$                                                               \\
        Q-learning step size $\alpha$ & 0.1 \\
        \midrule
    \end{tabular}
    \label{tab:key_hyperparams}
\end{table}

\begin{figure}
    \centering
    \includegraphics[width=0.75\linewidth]{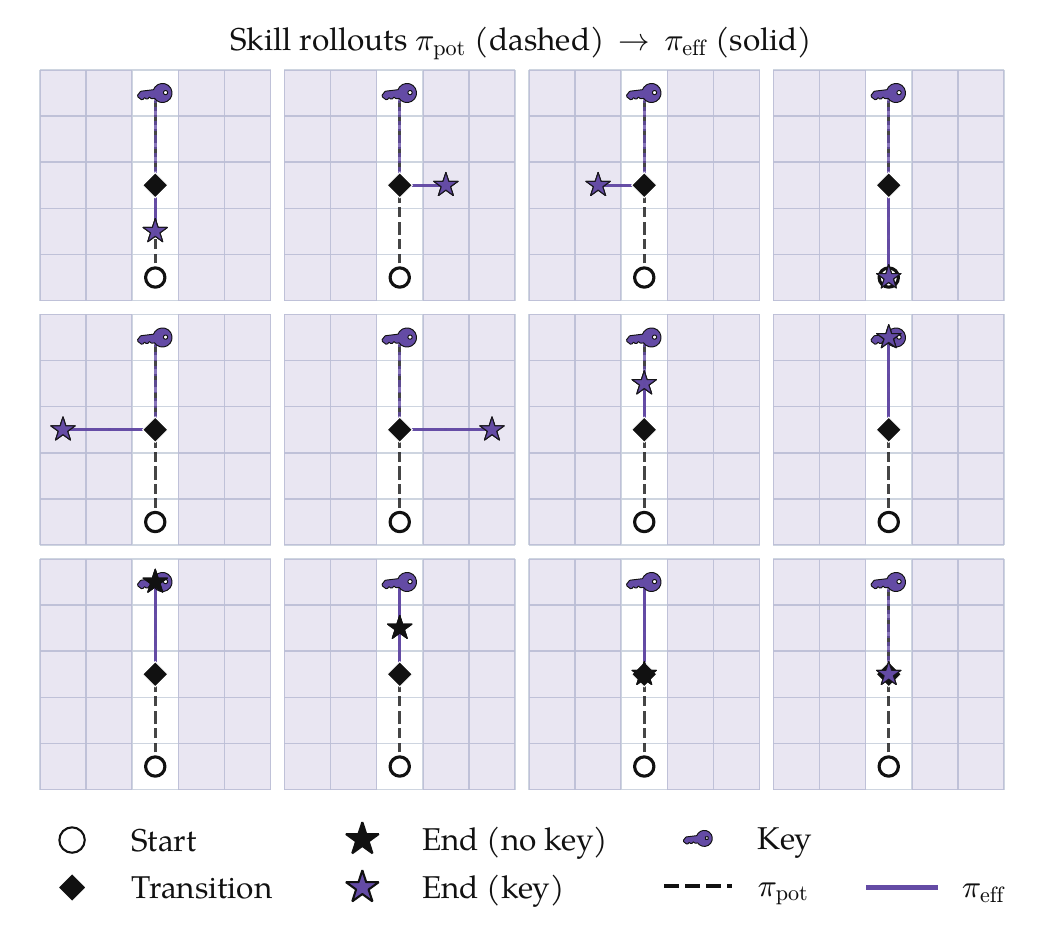}
    \caption{\textbf{High empowerment finds central states, where skills are diverse from the central states} (\cref{thm:effective_info_geom})\textbf{.} Potential policy navigates to central states. Learned skills spread out from the central state and measure the degree of centrality of the state (in information geometry). The dotted line denotes the potential empowerment policy's trajectory for $\frac{1}{1-\gamma}$ steps. Diamond denotes the handoff state $s_0$, where the agent randomly samples $z \sim p(z\mid s_0)$ and rolls out skill policy $\pi(a \mid s, z; s_0)$.}
    \label{fig:pot_emp}
\end{figure}
\FloatBarrier

\subsection{Adaptation geometry examples}
\label{app:adaptation_geometry_exps}

In the example \gls{mdp} of \cref{fig:adaptation_geometry}, every skill transitions in one step from the starting state to an absorbing target state. Thus, a skill's \gls{dsom}, normalized over the targets, is the transition distribution. \Cref{tab:adaptation_geometry_mdps} lists the resulting occupancy distributions.

In Example~X (\cref{fig:adaptation_geometry}(a)), there are $N_T=2$ targets, and each of two skills transitions to its target with probability one. In Example~Y (\cref{fig:adaptation_geometry}(d)), all three skills reach $N_T=3$ targets and no others. Each skill transitions to its target with probability $\tfrac{23}{30}$ and to each of the other two targets with probability $\tfrac{7}{60}$.

\begin{table}
    \caption{Skill occupancy distributions of the example \gls{mdp}s. (a) Example~X with mass on the two reachable targets. (b) Example~Y with mass on the three reachable targets.}
    \label{tab:adaptation_geometry_mdps}
    \centering
    \footnotesize
    \begin{subtable}[t]{0.4\linewidth}
        \centering
        \subcaption{\Cref{fig:adaptation_geometry}(a) \gls{mdp}}
        \begin{tabular}{@{}lcc@{}}
            \toprule
                  & $s_1$ & $s_2$ \\
                  \midrule
            $z=1$ & $1$ & $0$ \\
            $z=2$ & $0$ & $1$ \\
            \bottomrule
        \end{tabular}
    \end{subtable}
    \hfill
    \begin{subtable}[t]{0.4\linewidth}
        \centering
        \subcaption{\Cref{fig:adaptation_geometry}(d) \gls{mdp}}
        \begin{tabular}{@{}lccc@{}}
            \toprule
                  & $s_1$ & $s_2$ & $s_3$ \\
            \midrule
            $z=1$ & $23/30$ & $7/60$ & $7/60$ \\
            $z=2$ & $7/60$ & $23/30$ & $7/60$ \\
            $z=3$ & $7/60$ & $7/60$ & $23/30$ \\
            \bottomrule
        \end{tabular}
    \end{subtable}
    \hfill

\end{table}

\Cref{fig:adaptation_geometry}(b,~e) plot the polytope of the whitened state occupancy offsets $u_z$ in \cref{eq:J_width} with the uniform baseline $b=1/N_T$ over the targets.

The overlap between Example~Y's skills leads to low channel capacity.
However, the spread over the target states gives a larger Gaussian width.
The orderings of the adaptation and the obtained \gls{mi} therefore flip: X attains the higher mutual information and Y the larger width.

In the Gaussian width panels (\cref{fig:adaptation_geometry}(b,~e)), the offsets in \cref{eq:J_width} span at most two dimensions, and the axes $(u_1, u_2)$ give the coordinates of $u$ in an orthonormal basis of this span.

At most two offsets are linearly independent in each example in \cref{fig:adaptation_geometry}. The occupancy offsets in the top row are negatives of each other under the uniform skill prior and the bottom row's three offsets satisfy $u_1+u_2+u_3=0$. Thus, we can draw a 2D diagram of these geometries (\cref{fig:adaptation_geometry}(b,~c,~e,~f)); the insets in \cref{fig:adaptation_geometry}(f) visualize the occupancy distributions. Example~X's antipodal pair spans only one degree of freedom, so \cref{fig:adaptation_geometry}(b) shows a segment. In addition to the possible occupancies, the coordinates in this space can also denote reward vectors, and the background color indicates the skill that adapts to that reward. In the information panels, each colored endpoint is a possible skill's \gls{dsom}, at the vertices of a reachable polytope in \cref{fig:adaptation_geometry}. The skill \gls{dsom}s and their mixture are shown explicitly in the inset bar charts. Arrows point from the capacity-achieving mixture $q^\star=b$ (black dot) to every possible skill's \gls{dsom}, with equal positive source weights. The solid black contour in \cref{fig:adaptation_geometry}(f) traces the \gls{kl} level set at the capacity. In \cref{fig:adaptation_geometry}(c), this level set is just two endpoints.

\subsection{Worked example of centrality in information geometry}
\begin{wrapfigure}{R}{0.5\linewidth}
    \centering
    \includegraphics[width=\linewidth]{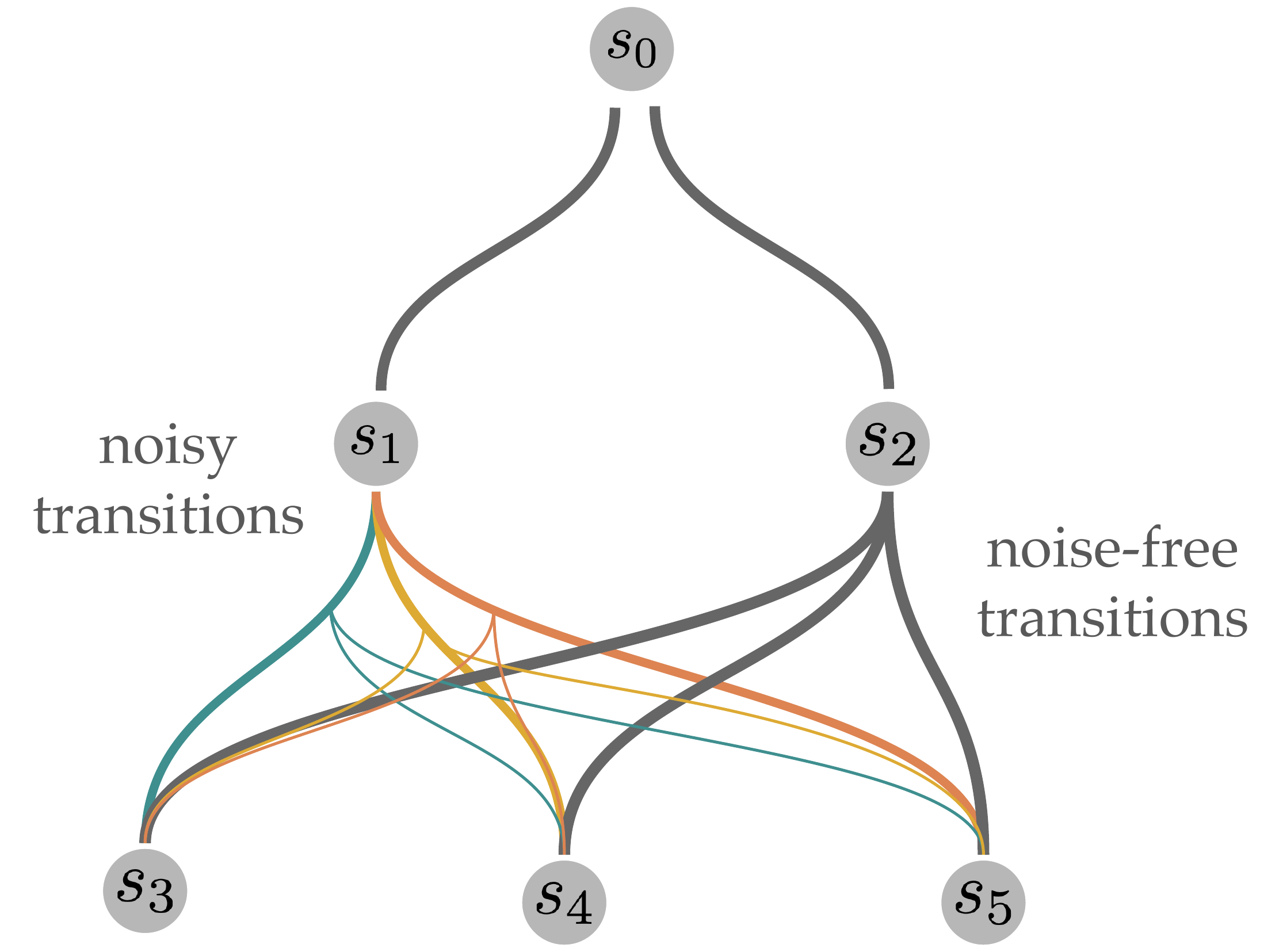}
    \caption{\textbf{Worked Example of Empowerment.}}
\end{wrapfigure}

We illustrate the geometry of empowerment in \cref{fig:example-left}. To start, consider learning skills that start at state $s_2$. From this state, there are three actions which deterministically transition to $s_3$, $s_4$, or $s_5$. We consider learning three skills, so $z \in \{1, 2, 3\}$. The problem of optimizing mutual information at state $s_2$ corresponds to learning the policy $\pi(a \mid s = s_2, z)$ so that the skill index $z$ conveys as many bits as possible about the next state. A poor choice for this policy would be the random policy, wherein each skill selects randomly among the three actions. Each of these three skills induces the same distribution over future states $s_3, s_4, s_5$, conveying zero bits of information. We visualize the induced state occupancies in \cref{fig:example-left}. Conversely, if we assign each skill a unique action (e.g., $z = 1$ always takes the leftmost action), then the state distributions the skills induce become markedly different. As visualized in \cref{fig:example-left}, this choice of skills covers a much larger region.

In state $s_1$, the transitions are stochastic. If we similarly learned a set of skills to maximize mutual information, they would arrive at occupancy measures denoted by the orange dots in \cref{fig:example-left}. We can now interpret this example through the lens of empowerment. Is the empowerment $\gEp(s)$ higher in state $s_1$ or $s_2$? Looking at \cref{fig:example-left}, we see that the skills at state $s_2$ are more ``spread out''; formally, they convey more bits to the future. Thus, an empowerment-maximizing policy would prefer to navigate towards and stay at state $s_2$.

\end{document}